\newcommand{\yingyu}[1]{{#1}}
\newcommand{\mycomment}[1]{}
\newcommand{\defeq}{\ensuremath{:=}}
\newcommand{\mycup}{\cup}
\newcommand{\mycap}{\cap}
\newcommand{\dcup}{\cup} 
\newcommand{\data}{S}
\newcommand{\OPT}{\mathcal{OPT}}
\newcommand{\cost}{\mathrm{cost}}
\newcommand{\ds}{\yingyu{d_\mathrm{s}}}
\newcommand{\dps}{d'_\mathrm{s}}
\newcommand{\da}{d_\mathrm{a}}
\newcommand{\drs}{d_\mathrm{rs}}
\newcommand{\dra}{d_\mathrm{ra}}
\newcommand{\NP}{NP}
\newcommand{\RP}{RP}
\newtheorem{claim}{Claim}[section]
\newtheorem{fact}{Fact}[section]
\newenvironment{proofof}[1]{\par {\it Proof of {#1}.}}{\endproof}
\title{Clustering under Perturbation Resilience\thanks{Part of the results in this article appeared under the title \emph{Clustering under Perturbation Resilience} in the
Proceedings of the Thirty-Ninth International Colloquium on Automata, Languages and Programming, 2012.} 
}
\author{
Maria Florina Balcan\thanks{Carnegie Mellon University, Pittsburgh, PA  15213 (ninamf@cs.cmu.edu).}
\and
Yingyu Liang\thanks{Princeton University, Princeton, NJ 08540 (yingyul@cs.princeton.edu).}
}
\begin{document}
\maketitle
\slugger{sicomp}{xxxx}{xx}{x}{x--x}

\begin{abstract}
Motivated by the fact that distances between data points in many real-world
clustering instances are often based on  heuristic measures, Bilu and
Linial~\cite{BL} proposed analyzing objective based clustering problems
under the assumption  that the optimum clustering to the objective is
preserved under small multiplicative perturbations to distances between
points. The hope is that by exploiting the structure in such instances,
one can overcome worst case hardness results.

In this paper, we provide several results within this framework.
For center-based objectives, we present an algorithm that can optimally
cluster instances resilient to perturbations of factor $(1 + \sqrt{2})$,
solving an open problem of Awasthi et al.~\cite{ABS10}. For $k$-median, a
center-based objective of special interest, we additionally give algorithms
for a more relaxed assumption in which we allow the optimal solution to
change in a small $\epsilon$ fraction of the points after perturbation. We
give the first bounds known for $k$-median under this more realistic and
more general assumption. We also provide positive results for min-sum
clustering which is typically a harder objective than center-based objectives from approximability standpoint.  Our algorithms are based on new linkage criteria that may be
of independent interest.

Additionally, we  give sublinear-time algorithms, showing algorithms that
can return an implicit clustering from only access  to a small random
sample.
\end{abstract}

\begin{keywords}
clustering, perturbation resilience, $k$-median clustering, min-sum clustering
\end{keywords}

\begin{AMS}
	68Q25, 
	68Q32, 
	68T05, 	
	68W25, 
	68W40  
\end{AMS}

\pagestyle{myheadings}
\thispagestyle{plain}
\markboth{M. F. BALCAN, AND Y. LIANG}{CLUSTERING UNDER PERTURBATION RESILIENCE}


\section{Introduction}

Problems of clustering data from pairwise distance
information are ubiquitous in science. A common approach
for solving such problems is to view the data points
as nodes in a weighted graph (with the weights based on the
given pairwise information), and then to design algorithms
to optimize various objective functions such as $k$-median or
min-sum. For example, in the $k$-median clustering problem the goal is to partition the data into
$k$ clusters $C_i$, giving each a center $c_i$, in order to minimize the sum of the distances of all data points to the centers of their cluster.
In the min-sum clustering approach
the goal is to find $k$ clusters $C_i$ that minimize the sum of all intra-cluster pairwise distances.
Yet unfortunately, for most natural clustering objectives, finding the optimal solution to the
objective function is NP-hard. As a consequence, there has been substantial work on approximation algorithms~\cite{jain_new_2002,CharikarGTS02,BartalCR01,VegaKKR03,AryaGKMMP04}  with both upper and lower bounds on the approximability of these objective functions on worst case instances.

Recently, Bilu and Linial~\cite{BL} suggested an exciting, alternative  approach aimed at understanding the complexity of clustering instances which arise in practice.
Motivated by the fact that distances between data points in clustering instances are often based on a heuristic measure, they
   argue that interesting instances should be resilient to small
   perturbations in these distances.  In particular, if small
   perturbations can cause the optimum clustering for a given
   objective to change drastically, then that probably is not a
   meaningful objective to be optimizing.  Bilu and Linial~\cite{BL} specifically define an instance to be
   $\alpha$-perturbation resilient\footnote{Bilu and Linial~\cite{BL} refer to such instances as perturbation stable instances.} for an objective $\Phi$ if perturbing pairwise
   distances by multiplicative factors in the range $[1,\alpha]$ does
   not change the optimum clustering under $\Phi$. 
	They consider in detail the case of Max-Cut clustering and give an efficient algorithm to recover the optimum when the instance is  resilient to perturbations on the order of $\alpha>  \min\{n/2, \sqrt{n\Delta}\}$ where $\Delta$ is the maximal degree of the graph. They also give an efficient algorithm for unweighted Max-Cut instances that are resilient to perturbations on the order of $\alpha \geq 4n/\delta$ where $\delta$ is the minimal degree of the graph.

   Two important questions raised by the work of Bilu and Linial~\cite{BL} are:
   (1) the degree of resilience needed for their algorithm to succeed
   is quite high: can one develop algorithms for important clustering
   objectives that require much less resilience? (2) the resilience
   definition requires the optimum solution to remain {\em exactly}
   the same after perturbation: can one succeed under weaker conditions?  In the context
   of {\em center-based} clustering objectives such as $k$-median and
   $k$-center, \yingyu{Awasthi et al.~\cite{ABS10}} partially address the first of these
   questions and show that an algorithm based on the single-linkage
   heuristic can be used find the optimal clustering for
   $\alpha$-perturbation-resilient instances for $\alpha=3$.  They
   also conjecture it to be NP-hard to beat $3$ and prove beating $3$ is
   NP-hard for a related but weaker notion (see the $\alpha$-center proximity property in Definition~\ref{def:centerstability}).

     In this work, we address both questions raised by~\cite{BL} and additionally
   improve over~\cite{ABS10}. First, for the center-based objectives we design a polynomial time algorithm for finding
   the optimum solution for instances resilient to perturbations of value
   $\alpha = 1 + \sqrt{2}$, thus beating the previously best known factor of $3$ of Awasthi et al~\cite{ABS10}.
  Second, for $k$-median (which is a specific center-based objective), we consider a weaker, relaxed, and more realistic
   notion of perturbation-resilience where we allow the optimal
   clustering of the perturbed instance to differ from the optimal of
   the original in a small $\epsilon$ fraction of the points.
   Compared to the original perturbation resilience assumption, this is
   arguably a more natural though also more difficult condition to
   deal with.  We give positive results for this case as well, showing
   for somewhat larger values of $\alpha$ that we can still achieve a
   near-optimal clustering on the given instance (see Section 1.1
   below for precise results).  We additionally give positive results
   for min-sum clustering which is typically a harder objective than center-based objectives from approximability standpoint.
	For example, the best known
 guarantee for min-sum clustering on worst-case
   instances is an $O(\upsilon^{-1} \log^{1+\upsilon} n)$-approximation algorithm that runs in time
    $n^{O(1/\upsilon)}$ for any $\upsilon>0$ due to Bartal et al.~\cite{BartalCR01}; by contrast, the best guarantee known for $k$-median is
    factor $1+\sqrt{3}+\epsilon$~\cite{shi2013app} for any $\epsilon > 0$.

   Our results are achieved by carefully deriving structural properties of perturbation resilience. At a high level, all the algorithms we introduce work by first
   running appropriate linkage procedures to produce a hierarchical clustering, and then running dynamic programming to retrieve the  best $k$-clustering present in the tree. To ensure that (under perturbation resilient instances) the hierarchy output in the first step has a pruning of low cost, we derive new linkage procedures (closure linkage and robust average linkage) which are of independent interest.
   While the overall analysis is quite involved, the clustering algorithms we devise are simple and robust.
   This simplicity and robustness allow us to show how our algorithms can be made
   sublinear-time by returning an implicit clustering from only a
   small random sample of the input.

From a learning theory perspective, the resilience parameter, $\alpha$, can also be seen as an analog to a margin for clustering.
In supervised learning, the margin of a data point is the distance, after scaling, between the data point and the
decision boundary of its classifier, and many algorithms have stronger guarantees
when the smallest margin over the entire data set is sufficiently large~\cite{SchSmo02,Vapnik:book98}.
The $\alpha$ parameter, similarly controls the magnitude of the perturbation the data can withstand
before being clustered differently, which is, in essence, the data's distance to the decision boundary
for the given clustering objective.
Hence, perturbation resilience is also a natural and interesting assumption to study from a learning theory perspective.

\paragraph{Our Results}
In this paper,
we advance the line of work of~\cite{BL} by solving several
important problems of clustering perturbation-resilient instances
under metric center-based and min-sum objectives.

 In Section~\ref{section-kmedian} we improve on the bounds of~\cite{ABS10}
for $\alpha$-perturbation resilient instances for center-based objectives,
giving an algorithm that efficiently\footnote{For clarity, in this paper efficient means polynomial in both $n$
(the number of points) and $k$ (the number of clusters).}
finds the optimum clustering for $\alpha = 1+\sqrt{2}$. Most of the frequently used center-based objectives, such as $k$-median, are NP-hard to even
approximate, yet we can recover the exact solution for perturbation resilient instances.
Our algorithm is based on a new linkage procedure using a new notion of distance (closure distance) between sets that may be of independent interest.

 In Section~\ref{alphaEpsilonForMedian} we consider the more challenging and more general notion
of $(\alpha,\epsilon)$-perturbation resilience for $k$-median,
where we allow the optimal solution after perturbation to be
$\epsilon$-close to the original. We provide an efficient algorithm which for $\alpha >  2 + \sqrt{3}$ produces $(1+O(\epsilon / \rho))$-approximation to the optimum, where $\rho$ is the fraction of the points in the smallest cluster.
The key structural property we derive and exploit is that, except for $\epsilon n$ bad points, most points are
$\alpha$ times closer to their own center than to any other center.
To eliminate the noise introduced by the bad points,
we carefully partition the points into a list of sufficiently large blobs, each of which contains only good points from one optimal cluster.
This then allows us to construct a tree on the blobs with a low-cost pruning that is a good approximation to the optimum.

In Section~\ref{section-minsum} we provide the first efficient algorithm for optimally clustering
$\alpha$-perturbation resilient min-sum instances. \yingyu{We show that when $\alpha$ in the order of the ratio between the sizes of the largest and smallest clusters,  there exists an algorithm that can output the optimal clustering in  polynomial time.} Our algorithm is based on an appropriate modification of average linkage that exploits the structure of min-sum perturbation resilient instances.

In Section~\ref{sec:alpha_espilon_minsum}, we show that for $(\alpha,\epsilon)$-perturbation resilient min-sum instances with $\alpha$ in the order of the ratio between the sizes of the largest and smallest clusters and $\epsilon = \tilde{O}(\rho)$, there exists a polynomial time algorithm that outputs a clustering that is both a $(1+\tilde{O}(\epsilon/\rho))$-approximation and $\tilde{O}(\epsilon)$-close to the optimal clustering. The key structural property is that except for $\tilde{O}(\epsilon n)$ bad points, most points are $O(\alpha)$ times closer to their own optimal cluster than to any other optimal cluster.  Similar to the case of $k$-median, we can partition the points into a list of sufficiently large blobs, each of which contains only good points from one optimal cluster. However, the properties of the good points are significantly weaker than those in the $k$-median case, and thus the linkage there does not guarantee a tree with a low-cost pruning.  
To utilize these properties, we introduce the notion of potentially good points which can act as a proxy of the actual good points. We then design a robust average linkage algorithm based on the cost computed only on the potentially good points, which constructs a tree with a pruning that assigns all good points correctly. The pruning can be found out efficiently,  and after some processing it leads to a clustering that is both a good approximation and close to the optimal clustering.

We also provide sublinear-time algorithms both for the $k$-median and min-sum objectives (Sections~\ref{sec:inductive} and~\ref{sec:msinductive}), showing algorithms that can return an implicit clustering from only access to a small random sample.

\paragraph{Related Work}
A subsequent work~\cite{bilu_et_al:LIPIcs:2013} of~\cite{BL} by Bilu, Daniely, Linial and Saks studied the Max-Cut problem under perturbation resilience,
and showed how to solve in polynomial time $(1+\epsilon)$-perturbation resilient instances of metric and dense Max-Cut, and $\Omega(\sqrt{n})$-perturbation resilient instances of general Max-Cut. The later bound is further improved by Makarychev, Makarychev and Vijayaraghavan~\cite{Makarychev2013}.
They proposed a polynomial time exact algorithm for $\Omega(\sqrt{\log n}\log \log n)$-perturbation resilient Max-Cut instances based on semidefinite programming.
They also proved that for Max $k$-Cut with $k \geq 3$, there is no polynomial-time algorithm that solves $\infty$-perturbation resilient instances of Max $k$-Cut unless \NP = \RP. Here an instance is $\infty$-perturbation resilient if it is $\alpha$-perturbation resilient for every $\alpha$.
Finally, they also studied a notion called $(\gamma, N)$-weakly stability for Max-Cut, which means that after perturbing the weights by a factor of at most $\gamma$, the optimal solution must be from the set $N$. When $N$ is the set of solutions that differ from the optimal solution on at most $\delta$ fraction of nodes, the notion is the same as the $(\gamma, \delta)$-perturbation resilience studied in our work.  They showed that when $\gamma = \Omega(\sqrt{\log n}\log \log n)$, there exists an efficient algorithm that can find a cut from $N$.
In a recent work~\cite{mak2014constant}, the same authors further proposed a beyond worst-case analysis model for Balanced-Cut, which is a planted model with random edges from permutation-invariant distributions. They achieved a constant factor approximation with respect to the cost of the planted cut when the number of random edges is $\Omega(n \mathrm{polylog}(n))$.


Several recent papers have showed how to exploit the structure of perturbation resilient instances in order to obtain better approximation guarantees (than those possible on worst case instances) for other difficult optimization problems. These include the game theoretic problem of finding Nash equilibria~\cite{BalcanB10,LiptonMM06} and the classic traveling salesman problem~\cite{MihalakSSW11}.

In the context of objective based clustering, several recent papers have showed how to exploit other notions of stability for overcoming the existing hardness results on worst case instances.
The ORSS stability notion of Ostrovsky, Rabani, Schulman and Swamy~\cite{Lloyd06,ABS10} assumes that
the cost of the optimal $k$-means solution is small compared to the cost of the optimal
$(k-1)$-means solution.
The BBG $(c,\epsilon)$-approximation stability condition of Balcan, Blum and Gupta~\cite{BBG} assumes that every $c$-approximation solution is close to the target clustering.  We note that when the target clustering is the optimal clustering for the clustering objective, $(c,\epsilon)$-approximation stability implies $(c,\epsilon)$-perturbation resilience. 

Awasthi, Sheffet and Blum~\cite{Awasthi:2010:SYP} proposed a stability condition called weak-deletion stability,
and showed that it is implied by both the ORSS stability and the BBG stability.
Kumar and Kannan~\cite{Kumar:2010:CSN} proposed a proximity condition which assumes that in the target clustering, most data points satisfy that they are closer to their center than to any other center by an additive factor in the order of the maximal standard variance of their clusters in any direction.
Their results are improved by Awasthi and Sheffet~\cite{awasthi2012improved}, which proposed a weaker version of the proximity condition called center separation, and designed algorithms achieving stronger guarantees under this weaker condition. These notions are not directly comparable to the perturbation resilience property.

\section{Notation and Preliminaries}

In a clustering instance, we are given a set $S$ of $n$ points in a finite metric space, and we denote
$d: S \times S \rightarrow \mathbb{R}_{\geq 0}$ as the distance function. $\Phi$ denotes the objective
function over a partition of $S$ into $k < n$ clusters which we want to optimize over the metric, that is, $\Phi$ assigns
a score to every clustering.
The optimal clustering with respect to $\Phi$ is denoted as $\mathcal{C} = \{C_1,C_2,\dots, C_k\}$,
and its cost is denoted as $\mathcal{OPT}$.
The core concept we study in this paper is the perturbation resilience notion introduced by~\cite{BL}. Formally:

\begin{definition} \label{def:alphaPR}
A clustering instance $(S,d)$ is {$\alpha$-perturbation resilient} to a given objective
$\Phi$ if for any function $d': S \times S \rightarrow \mathbb{R}_{\geq 0}$
such that $\forall p,q \in S, d(p,q) \leq d'(p,q) \leq \alpha d(p,q)$, there is a unique optimal clustering
$\mathcal{C'}$ for $\Phi$ under $d'$ and this clustering is equal to the optimal clustering
$\mathcal{C}$ for $\Phi$ under $d$.
\end{definition}

Note that in the definition, $d'$ need not be a metric. Also note that the definition depends on the objective.
In this paper, we focus on the center-based and min-sum objectives.
For the \emph{center-based objectives}, 
we consider separable center-based objectives defined by~\cite{ABS10}.
\begin{definition} A clustering objective is center-based if
the optimal solution can be defined by $k$ points $c_
1 , \cdots , c_k$ in
the metric space called centers such that every data point is
assigned to its nearest center.
Such a clustering objective
is separable if it furthermore satisfies the following two
conditions:
\begin{itemize}
\item[(1)] The objective function value of a given clustering is
either a (weighted) sum or the maximum of the individual
cluster scores.
\item[(2)] Given a proposed single cluster, its score can be computed
in polynomial time.
\end{itemize}
\end{definition}
One particular center-based objective is the $k$-median objective.
We partition $S$ into $k$ disjoint subsets $\mathcal{P} = \{P_1,P_2,\dots, P_k\}$ and
assign a set of centers $\mathbf{p} = \{p_1,p_2,\dots, p_k\} \subseteq S$ for the subsets.
The objective is  $\Phi(\mathcal{P}, \mathbf{p}) = \sum_{i=1}^k\sum_{p\in P_i}d(p,p_i)$. The centers in the optimal clustering are denoted as $\mathbf{c} = \{c_1,\dots,c_k\}$.
Clearly, in an optimal solution, each point is assigned to its nearest center.
In such cases, the objective is denoted as $\Phi(\mathbf{c})$.

For the \emph{min-sum objective}, we partition $S$ into $k$ disjoint subsets denoted as $\mathcal{P} = \{P_1,P_2, \dots, P_k\}$, and the goal is to minimize $\Phi(\mathcal{P}) = \sum_{i=1}^k\sum_{p\in P_i} \sum_{q \in P_i} d(p,q)$.
Note that we sometimes denote $\Phi$ as $\Phi_S$ in the case where the distinction is necessary, such as in Section~\ref{sec:inductive}.

In Section~\ref{alphaEpsilonForMedian}  we consider a generalization of perturbation resilience where
we allow
a small difference between the original optimum and the new optimum
after perturbation. Formally:

\begin{definition}
Let $\mathcal{C}$ be the optimal $k$-clustering and $\mathcal{C'}$ be another $k$-clustering
of a set of $n$ points. We say $\mathcal{C'}$ is $\epsilon$-{close} to $\mathcal{C}$ if
$\min_{\sigma \in \mathcal{S}_k} \sum_{i=1}^k |C_i \setminus C'_{\sigma(i)}| \le \epsilon n$, where
$\sigma$ is a matching between indices of clusters of
$\mathcal{C'}$ and those of $\mathcal{C}$.
\end{definition}
\begin{definition}
\label{def:alphaPRr}
 A clustering instance $(S,d)$ is
{$(\alpha, \epsilon)$-perturbation resilient} to a given objective
$\Phi$ if for any function $d': S \times S \rightarrow \mathbb{R}_{\geq 0}$
s.t.\ $\forall p,q \in S, d(p,q) \leq d'(p,q) \leq \alpha d(p,q)$, the optimal clustering
$\mathcal{C'}$ for $\Phi$ under $d'$ is $\epsilon$-close to the optimal clustering
$\mathcal{C}$ for $\Phi$ under $d$.
\end{definition}

For simplicity, we assume $\epsilon n$ is an integer and assume that $\min_i |C_i|$ is known (otherwise, we can simply search over the $n$ possible different values). 

For $A, B \subseteq S$ and a distance function $d$, we define $\ds(A,B) \defeq \sum_{p\in A, q\in B}d(p,q)$, $\ds(p, B) \defeq \ds(\{p\}, B)$, and $\ds(p, q) \defeq \ds(\{p\}, \{q\})$. Also, we define $\da(A,B) \defeq \ds(A, B) / (|A||B|)$ and $\da(p, B) \defeq \da(\{p\}, B)$ for nonempty $A$ and $B$.

\section{$\alpha$-Perturbation Resilience for Center-based Objectives}\label{section-kmedian}

In this section we show that,  for $\alpha \geq 1+\sqrt{2}$, if the clustering instance is $\alpha$-perturbation resilient for center-based objectives, then we can in polynomial time find the optimal clustering.
This improves on the $\alpha \ge 3$ bound of~\cite{ABS10} and
stands in sharp contrast to the NP-Hardness
results on worst-case instances.  Our algorithm succeeds for an even weaker property, the $\alpha$-center proximity, introduced
in~\cite{ABS10}.

\begin{definition} \label{def:centerstability}
A clustering instance $(S,d)$ satisfies the {$\alpha$-center proximity} property
if for any optimal cluster $C_i\in\mathcal{C}$ with center $c_i$, $C_j\in \mathcal{C}(j\neq i)$
with center $c_j$,  any point $p\in C_i$ satisfies $\alpha d(p,c_i) < d(p,c_j)$.
\end{definition}

\begin{lemma}\label{kmedian-lemma-basic}
Any clustering instance that is $\alpha$-perturbation resilient to center-based objectives also
satisfies the $\alpha$-center proximity.
\end{lemma}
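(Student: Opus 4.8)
The plan is to argue by contradiction. Suppose $(S,d)$ is $\alpha$-perturbation resilient for the $k$-median objective but is \emph{not} $\alpha$-center stable: then there are optimal clusters $C_i,C_j$ with $i\neq j$, centers $c_i,c_j$, and a point $p\in C_i$ with $\alpha\, d(p,c_i)\ge d(p,c_j)$. Since $C_i\cap C_j=\emptyset$ we get $p\neq c_j$, hence $d(p,c_j)>0$, hence $d(p,c_i)>0$, $p\neq c_i$, and $|C_i|\ge 2$. The goal is to build a function $d'$ with $d\le d'\le\alpha d$ for which $\mathcal{C}$ is not the unique optimum, contradicting Definition~\ref{def:alphaPR}.

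For the perturbation I would stretch every distance by the full factor $\alpha$, leaving unchanged only (i) all distances internal to $C_j$ and (ii) the single distance $d(p,c_j)$; i.e.\ $d'(x,y)=d(x,y)$ when $\{x,y\}\subseteq C_j$ or $\{x,y\}=\{p,c_j\}$, and $d'(x,y)=\alpha\, d(x,y)$ otherwise. This clearly obeys the perturbation bound. The reason for this particular choice is that it makes $\Phi'(\mathcal{C})$ exactly computable: a short check of which clusters can ``see'' an unscaled edge (using that each $c_l$ is the $1$-median of $C_l$ under $d$) shows $\mathrm{cost}_{d'}(C_l)=\alpha\,\mathrm{cost}_d(C_l)$ for $l\neq i,j$; $\mathrm{cost}_{d'}(C_j)=\mathrm{cost}_d(C_j)$ (the only alternative is to use $p$ as $C_j$'s center, which the $1$-median property of $c_j$ rules out as an improvement); and, writing $f(c):=\sum_{q\in C_i\setminus\{p\}}d(q,c)$ so that $\mathrm{cost}_d(C_i)=d(p,c_i)+f(c_i)$, $\mathrm{cost}_{d'}(C_i)=\min\{\alpha\,\mathrm{cost}_d(C_i),\ d(p,c_j)+\alpha f(c_j)\}$. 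Thus $\Phi'(\mathcal{C})=B+\min\{\alpha\,\mathrm{cost}_d(C_i),\ d(p,c_j)+\alpha f(c_j)\}$, where $B:=\sum_{l\neq i,j}\alpha\,\mathrm{cost}_d(C_l)+\mathrm{cost}_d(C_j)$.

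Then I would exhibit two alternative clusterings, each $\neq\mathcal{C}$ and each a legal $k$-clustering because $|C_i|\ge 2$: $\mathcal{C}_1$, obtained from $\mathcal{C}$ by moving $p$ into $C_j$; and $\mathcal{C}_2$, obtained by replacing $C_i$ with the singleton $\{p\}$ and $C_j$ with $C_j\cup(C_i\setminus\{p\})$. Upper-bounding their $d'$-cost with natural centers --- keep $c_i$ for $C_i\setminus\{p\}$ and $c_j$ for $C_j\cup\{p\}$ in $\mathcal{C}_1$; take $p$ and $c_j$ for the two affected parts of $\mathcal{C}_2$ --- yields $\Phi'(\mathcal{C}_1)\le B+\alpha f(c_i)+d(p,c_j)$ and $\Phi'(\mathcal{C}_2)\le B+\alpha f(c_j)$. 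Finally, $\alpha\, d(p,c_i)\ge d(p,c_j)$ gives $\alpha f(c_i)+d(p,c_j)\le\alpha f(c_i)+\alpha d(p,c_i)=\alpha\,\mathrm{cost}_d(C_i)$, while trivially $\alpha f(c_j)\le d(p,c_j)+\alpha f(c_j)$; therefore $\min\{\Phi'(\mathcal{C}_1),\Phi'(\mathcal{C}_2)\}\le\Phi'(\mathcal{C})$. So one of $\mathcal{C}_1,\mathcal{C}_2$ is a clustering distinct from $\mathcal{C}$ with $d'$-cost no larger than $\mathcal{C}$'s, contradicting that $\mathcal{C}$ is the unique optimum under $d'$.

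The main obstacle I anticipate is exactly the computation of $\Phi'(\mathcal{C})$: because the perturbation distorts distances, a cluster can in general absorb the damage by re-choosing its center, so the naive perturbation that blows up only the edge $(p,c_i)$ fails (cluster $C_i$ simply re-homes its center for almost no cost). Freezing $C_j$'s internal distances together with the single edge $(p,c_j)$ is what simultaneously fixes $\mathrm{cost}_{d'}(C_j)$ and forces $\mathrm{cost}_{d'}(C_i)$ into the two-term minimum above; and it is precisely the two regimes of that minimum --- whether $C_i$ prefers center $c_i$ or center $c_j$ under $d'$ --- that make it necessary to carry \emph{both} witnesses $\mathcal{C}_1$ (good when $C_i$ still prefers $c_i$) and $\mathcal{C}_2$ (good when $C_i$ prefers $c_j$), since either one alone can fail to beat $\mathcal{C}$.
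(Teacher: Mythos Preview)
Your argument is correct. You build a perturbation that scales everything by $\alpha$ except the distances internal to $C_j$ and the single edge $\{p,c_j\}$, compute $\Phi'(\mathcal{C})$ exactly as $B+\min\{\alpha\,\mathrm{cost}_d(C_i),\ d(p,c_j)+\alpha f(c_j)\}$, and then exhibit two competing partitions $\mathcal{C}_1,\mathcal{C}_2$, one of which is guaranteed to match (or beat) $\Phi'(\mathcal{C})$ depending on which branch of that minimum is active. The case analysis for $\mathrm{cost}_{d'}(C_l)$, $\mathrm{cost}_{d'}(C_j)$, and $\mathrm{cost}_{d'}(C_i)$ checks out, as does the use of $|C_i|\ge 2$ to ensure both witnesses are genuine $k$-clusterings distinct from $\mathcal{C}$.

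The paper, however, takes a different and much shorter route (deferring to~\cite{ABS10}): it blows up only the pairwise distances \emph{within} $C_i$ by the factor~$\alpha$, leaving all other distances untouched, and then uses perturbation resilience directly to argue that $p$ must still belong to $C_i$ in the optimum under $d'$, forcing $\alpha d(p,c_i)<d(p,c_j)$. Your construction is more intricate to set up but has the virtue of being fully self-contained: by freezing $C_j$'s internal distances together with the single edge $(p,c_j)$, you make every relevant cost explicit and never have to reason about whether or how optimal centers shift under the perturbation. The paper's perturbation is simpler to state, but the verification it outsources to~\cite{ABS10} still has to handle potential center movement; your two-witness device sidesteps that issue entirely, at the price of a longer write-up.
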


The proof follows easily by constructing a specific perturbation that blows up all the pairwise distances within cluster $C_i$ by a factor of $\alpha$.
By $\alpha$-perturbation resilience, the optimal clustering remains the same after this perturbation.
This then implies the desired result. The full proof appears in~\cite{ABS10}.
In the remainder of this section, we prove our results for $\alpha$-center proximity, but because
it is a weaker condition, our upper bounds also hold for $\alpha$-perturbation resilience.

We begin with some key properties of $\alpha$-center proximity instances.

\begin{lemma}\label{kmedian-lemma-between}\label{kmedian-lemma-inout}\label{kmedian-lemma-inout2}
For any points $p \in C_i$ and $q \in C_j (j \neq i)$ in the optimal clustering of an $\alpha$-center proximity instance,
we have
\begin{itemize}
\item[(1)] $d(c_i, q) > \frac{\alpha(\alpha-1)}{\alpha+1}d(c_i,p)$,
\item[(2)] $d(p,q) > (\alpha -1) \max\{d(p,c_i), d(q,c_j)\}$.
\end{itemize}
Consequently, when $\alpha \geq 1 + \sqrt{2}$, we have
\begin{itemize}
\item[(1)] $d(c_i, q) > d(c_i,p)$,
\item[(2)] $d(p,q) > d(p,c_i)$.
\end{itemize}
\end{lemma}

\begin{proof}
(1) Lemma~\ref{kmedian-lemma-basic} gives us that $d(q, c_i) > \alpha d(q, c_j)$.
By the triangle inequality, we have
$d(c_i, c_j) \leq d(q, c_j) + d(q, c_i) < (1 + 1/\alpha) d(q, c_i)$.
On the other hand, $d(p,c_j) > \alpha d(p,c_i)$ and therefore
$d(c_i, c_j) \geq d(p, c_j) - d(p, c_i) > (\alpha - 1) d(p, c_i)$.
Combining these inequalities, we get (1).

(2) The proof first appears in~\cite{ABS10}, and we include it for completeness.
Without loss of generality, we can assume that $d(p, c_i) \geq d(q, c_j)$. By the triangle inequality we have $d(p, q) \geq
d(p, c_j) - d(q, c_j)$. From Lemma~\ref{kmedian-lemma-basic}  we have $d(p, c_j) > \alpha d(p, c_i)$.
Hence $d(p, q) > \alpha d(p, c_i) - d(q, c_j) \geq (\alpha - 1)d(p, c_i) \geq (\alpha - 1)d(q, c_j)$.
\end{proof}

Lemma~\ref{kmedian-lemma-inout2} implies for any optimal cluster $C_i$,
 the ball of radius $\max_{p \in C_i}d(c_i,p)$ around the center $c_i$ contains {only} points from $C_i$, and moreover, points inside the ball are each closer to the center than to any point outside the ball.
Inspired by this structural property, we define
the notion of closure distance between two sets as
the radius of the minimum ball that covers the sets and
has some margin from points outside the ball.
We show that any (strict) subset of an optimal cluster
has smaller closure distance to another subset in the same cluster
than to any subset of other clusters or to unions of other clusters.
Using this, we will be able to define an appropriate linkage
procedure that, when applied to the data, produces a tree on subsets
that will all be laminar with respect to the clusters in the optimal
solution.  This will then allow us to extract the optimal solution using
dynamic programming applied to the tree.

We now define the notion of closure distance and then present our algorithm for $\alpha$-center proximity instances (Algorithm~\ref{ClosureLinkage}). Let $\mathbb{B}(p, r) \defeq \{q: d(q,p)\leq r\}$ denote the ball around $p$ with radius $r$.

\begin{definition}
The {closure distance} $d_S (A, A')$ between two disjoint nonempty
subsets $A$ and $A'$
of point set $S$ is the minimum $d\geq 0$ such that there is a point $c\in A\cup A'$ satisfying
the following requirements:
\begin{itemize} 
\item[(1)] coverage: the ball $\mathbb{B}(c,d)$ covers $A$ and $A'$, that is, $A\cup A'\subseteq \mathbb{B}(c,d)$;
\item[(2)] margin: points inside $\mathbb{B}(c,d)$ are closer to the center $c$ than to points outside, that is, $\forall p\in \mathbb{B}(c,d), q\not\in \mathbb{B}(c,d)$, we have $d(c,p) < d(p,q)$.
\end{itemize}
\end{definition}

\begin{figure}[!tbhp]
\centering

\begin{tikzpicture}

\def\maxx{5}
\def\maxy{2}
\def\bbox{(-\maxx,-\maxy) rectangle (\maxx,\maxy)}
\clip \bbox;

\draw[blue]  (0,0) ellipse (1.5); \node [left] {$c$};
\draw[fill=blue!25]  (-0.4,0) ellipse (0.7 and 0.6);
\draw[fill=blue!25]  (1,0) ellipse (0.4 and 0.7);
\draw[fill=violet!25]  (-3.6,0) ellipse (0.5 and 1);
\draw[fill=green!25]  (3.8,0) ellipse (1 and 0.5);
\node at (-0.4,0.9) {$A$};
\node at (0.8,0.9) {$A'$};

\coordinate (c) at (0,0);
\coordinate (p) at (1,-0.2);
\coordinate (q) at (3.2,0);
\coordinate (d) at (0.6,-1.38);

\draw[blue,thick,->] (c) -- (d) node[midway,left] {$d$};
\draw[blue,thick] (c) -- (p);
\draw[red,thick] (q) -- (p);
\fill[blue] (0,0) circle (0.1) node[above] {$c$};
\fill[blue] (p) circle (0.1) node[below] {$p$};
\fill[red] (q) circle (0.1) node[below] {$q$};

\end{tikzpicture}
\caption{Illustration for the closure distance.}\label{fig:closureDistance}
\end{figure}
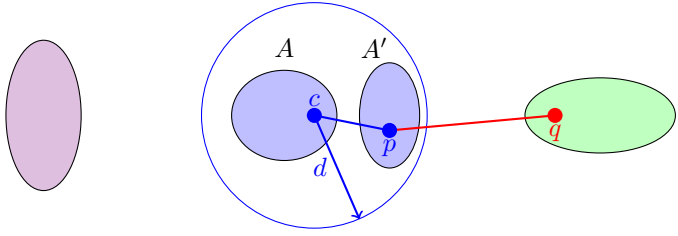

Note that $d_\data (A, A')=d_S (A', A)  \leq \max_{p,q\in \data}d(p,q)$ for any  $A$ and $A'$. Furthermore, it can be computed in polynomial time.

\begin{algorithm}[tbhp]
\caption{Center-based objectives, $\alpha$-perturbation resilience}
\label{ClosureLinkage}
\begin{algorithmic}[1]
\REQUIRE{Data set $\data$, distance function $d(\cdot, \cdot)$ on $\data$.}
\STATE{Begin with $n$ singleton clusters.}
\STATE{Repeat till only one cluster remains:\\ merge clusters $C,C'$ which minimize
$d_\data(C,C')$. }
\STATE{Let $\mathcal{T}$ be the tree with single points as leaves and internal nodes corresponding to the merges performed.}
\STATE{Run dynamic programming on $\mathcal{T}$ to get the minimum cost pruning ${\cal \tilde{C}}$.}
\ENSURE{Clustering ${\cal \tilde{C}}$.}
\end{algorithmic}
\end{algorithm}

\begin{theorem}\label{thm:alphamain}
For $(1+\sqrt{2})$-center proximity instances, Algorithm~\ref{ClosureLinkage} outputs the optimal clustering in polynomial time.
\end{theorem}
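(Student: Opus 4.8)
The plan is to show two things: (i) the tree $T$ built in Phase~1 is laminar with respect to the optimal clustering $\mathcal{C}$ and in fact contains every optimal cluster $C_i$ as one of its nodes; and (ii) given (i), the dynamic program of Phase~2 necessarily outputs a clustering of cost $\mathcal{OPT}$, which must be $\mathcal{C}$.

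For (i) I would argue by induction on the merge steps of Phase~1, maintaining the invariant that the current clustering is laminar w.r.t.\ $\mathcal{C}$, i.e.\ every current cluster $C$ satisfies, for each $i$, one of $C\subseteq C_i$, $C_i\subseteq C$, $C\cap C_i=\emptyset$. The base case (singletons) is trivial. For the inductive step, note that merging clusters $C,C'$ can destroy laminarity only if (w.l.o.g.) $C$ is a \emph{strict} subset of some $C_i$ while $C'\cap C_i=\emptyset$; so it suffices to show that, as long as some $C_i$ is still split among two or more current clusters, the pair minimizing closure distance lies entirely inside a single optimal cluster (a ``good'' pair). This rests on two bounds. \emph{Upper bound:} if $C_i$ is split then, using laminarity, the piece $X_i$ containing $c_i$ is a strict subset of $C_i$ and there is another piece $A\subseteq C_i$ disjoint from it; taking center $c=c_i$ and radius $r_i:=\max_{p\in C_i}d(c_i,p)$, coverage of $X_i\cup A\subseteq C_i$ is immediate, and the margin condition holds because Lemma~\ref{kmedian-lemma-between} gives $\mathbb{B}(c_i,r_i)=C_i$ and Lemma~\ref{kmedian-lemma-inout2} gives $d(c_i,p)<d(p,q)$ for $p\in C_i$, $q\notin C_i$; hence $d_S(X_i,A)\le r_i$, so a good pair of closure distance $\le r_i$ is always available while $C_i$ is split. \emph{Lower bound:} any ``bad'' pair $(C,C')$ with $C\subsetneq C_i$ and $C'\cap C_i=\emptyset$ has $d_S(C,C')>r_i$. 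Combining the two, the minimizing merge cannot be bad while $C_i$ is split, so laminarity is preserved; and since a good merge exists precisely while some $C_i$ is split, Phase~1 first assembles all of $\mathcal{C}$ (so each $C_i$ becomes a node of $T$) and only afterwards performs further merges.

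For (ii): once $T$ is laminar and every $C_i$ is a node, the family $\{C_1,\dots,C_k\}$ is an antichain of $T$ partitioning all leaves, i.e.\ a valid $k$-pruning of $T$. The Phase~2 dynamic program computes, by the usual bottom-up recursion over $T$ with a table indexed by node and number of clusters (precomputing for each node its optimal single center and induced cost), the minimum $k$-median cost over all $k$-prunings of $T$, in polynomial time. Since $\mathcal{C}$ is one such pruning, the returned clustering $\tilde{\mathcal C}$ has cost at most $\mathcal{OPT}$; being a $k$-clustering it has cost at least $\mathcal{OPT}$; so its cost is exactly $\mathcal{OPT}$, and hence $\tilde{\mathcal C}$ is an optimal clustering, equal to $\mathcal{C}$. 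Finally, Phase~1 performs $O(n)$ merges, each scanning $O(n^2)$ pairs and evaluating the (polynomial-time computable) closure distance, so the whole algorithm runs in polynomial time.

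The main obstacle is the lower bound in step (i): showing that covering a strict subset of $C_i$ together with points outside $C_i$ by a ball satisfying the margin condition forces radius $>r_i$. I expect this is exactly where $\alpha\ge 1+\sqrt2$ enters: after splitting on whether the ball's center lies in $C_i$ and whether all of $C_i$ lies in the ball, and chaining the triangle inequality with the center-stability inequalities $\alpha d(p,c_i)<d(p,c_j)$ together with Lemmas~\ref{kmedian-lemma-between}--\ref{kmedian-lemma-inout2}, one is driven to an inequality of the form $\alpha^2-2\alpha-1<0$, contradicting $\alpha\ge 1+\sqrt2$. Care is needed because the ball's center need not be an optimal center, so center stability can be invoked only for the $c_i$'s and their structural consequences, not for $c$ itself.
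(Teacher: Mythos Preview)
Your proposal is correct and follows essentially the paper's proof: the same laminarity induction, the same upper bound $r_i=\max_{p\in C_i}d(c_i,p)$ using $c_i$ as the ball's center, and the same lower bound $d_S(A,A')>r_i$ obtained by a two-case split on whether the closure-distance center lies in $A\subsetneq C_i$ or in $A'$ (your additional split on whether all of $C_i$ lies in the ball is not needed). Your expectation about where $1+\sqrt2$ enters is exactly right---in the case $c\in A$ the chain of inequalities yields $\alpha^2-2\alpha-1\ge 0$---and the only minor overstatement is that Phase~1 may merge already-complete optimal clusters before every $C_i$ is assembled, but laminarity alone already forces each $C_i$ to appear as a node of $T$.
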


The proof follows immediately from the following key property of the Phase 1 of  Algorithm~\ref{ClosureLinkage}.
The details of dynamic programming are presented in Appendix~\ref{subsec:dp},
and an efficient implementation of the algorithm is presented in Appendix~\ref{subsec:running_time}.

\begin{theorem} \label{thm:k-median-alg}  For $(1+\sqrt{2})$-center proximity instances,
Algorithm~\ref{ClosureLinkage} constructs a binary tree $\mathcal{T}$ such that the optimal clustering is a pruning of this tree.
\end{theorem}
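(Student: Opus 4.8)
The goal is to show that the tree $T$ produced by closure linkage is laminar with respect to the optimal clustering $\mathcal{C}$, i.e. every node of $T$ is either contained in some $C_i$, or is a union of several $C_i$'s. Since the dynamic-programming phase can then pick out the nodes $C_1,\dots,C_k$ as a pruning, this is exactly what we need. The natural way to establish laminarity is to prove by induction on the sequence of merges that \emph{every cluster $C$ ever formed by Phase 1 is laminar w.r.t. $\mathcal{C}$}. The base case is trivial (singletons). For the inductive step, suppose we are about to merge $C$ and $C'$, both laminar by the inductive hypothesis, into $C\cup C'$. If $C\cup C'$ fails to be laminar, then one of the two, say $C$, is a proper nonempty subset of some optimal cluster $C_i$, while $C'$ intersects $S\setminus C_i$; I want to derive a contradiction with the fact that the pair $(C,C')$ was the \emph{minimizer} of closure distance among all currently-available pairs.

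The plan is to play off two closure-distance estimates against each other. First I would show an \textbf{upper bound}: because $C\subsetneq C_i$, there is some point $q\in C_i\setminus C$ that is currently in some available cluster $C''$ which (by laminarity of $C''$ and the fact that $q\in C_i$) is itself a subset of $C_i$. I then bound $d_S(C,C'')$ by exhibiting a valid center and radius — namely take $c=c_i$ if $c_i$ is among the available points, or otherwise argue using a point of $C_i$ close to $c_i$; the radius can be taken to be $\max_{p\in C_i} d(c_i,p)$, and the margin condition (2) in the definition of closure distance is exactly what Lemma \ref{kmedian-lemma-inout2} guarantees for the ball of that radius around $c_i$. This gives $d_S(C,C'') \le \max_{p\in C_i} d(c_i,p)$ (up to a small correction if $c_i\notin C\cup C''$, handled via the triangle inequality and center stability). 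Second I would show a \textbf{lower bound}: since $C'$ contains a point $q'\notin C_i$ while $C$ contains a point $p\in C_i$, any ball witnessing $d_S(C,C')$ must have a center $c$ and radius $d$ covering both $p$ and $q'$; using Lemma \ref{kmedian-lemma-between} (parts (1) and (2)) together with the margin requirement, I force $d$ to exceed $\max_{p\in C_i}d(c_i,p)$, hence $d_S(C,C') > d_S(C,C'')$, contradicting minimality of $(C,C')$.

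The main obstacle, and the step requiring the most care, is the \textbf{lower bound on $d_S(C,C')$}: I have to rule out \emph{every} choice of center $c\in C\cup C'$ and show no small radius can simultaneously satisfy coverage and margin when the pair straddles the boundary of $C_i$. The argument splits on whether the witnessing center $c$ lies inside $C_i$ or outside. If $c\in C_i$: covering the outside point $q'\in C'$ forces $d\ge d(c,q')$, and I use center stability plus the triangle inequality (along the lines of the proof of Lemma \ref{kmedian-lemma-inout2}, where $d(p,q)>(\alpha-1)\max(d(p,c_i),d(q,c_j))$) to show $d(c,q')$ is already too large. If $c\notin C_i$: then $c$ is in the optimal cluster of some $q'$, and covering a point $p\in C\subseteq C_i$ forces the ball to reach across the gap; the margin condition then fails because $p$ is closer to its \emph{own} center $c_i$ (which by Lemma \ref{kmedian-lemma-inout2} lies outside $\mathbb{B}(c,d)$ for the relevant radius) than the margin inequality $d(c,p)<d(p,\cdot)$ allows, or else $d$ is forced up past the threshold. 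This is precisely the place where the constant $\alpha = 1+\sqrt 2$ enters: the two triangle-inequality chains — one giving $d(c_i,c_j) < (1+\tfrac1\alpha)\,(\text{something})$ and the other $d(c_i,c_j) > (\alpha-1)\,(\text{something})$, as in the proof of Lemma \ref{kmedian-lemma-between} — are compatible exactly when $\alpha^2 - 2\alpha - 1 \ge 0$, i.e. $\alpha \ge 1+\sqrt 2$. Once both bounds are in hand the contradiction is immediate and the induction closes, giving laminarity of $T$ and hence the theorem.
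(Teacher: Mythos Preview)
Your overall architecture---induction on merges, an upper bound $d_S(C,C'')\le d:=\max_{p\in C_i}d(c_i,p)$ for some sibling $C''\subseteq C_i\setminus C$, and a lower bound $d_S(C,C')>d$---matches the paper exactly. Two places need repair.

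For the upper bound, your ``small correction if $c_i\notin C\cup C''$'' is unnecessary and in fact dubious: the margin requirement in the closure-distance definition is tied to the specific center, and replacing $c_i$ by a nearby point does not obviously preserve it. The paper avoids this by simply \emph{choosing} $C''$ to be the current cluster containing $c_i$ whenever $c_i\notin C$ (laminarity then gives $C''\subseteq C_i\setminus C$); hence $c_i\in C\cup C''$ always, and Lemma~\ref{kmedian-lemma-inout2} yields the margin condition for $\mathbb{B}(c_i,d)$ with no correction needed.

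The real gap is in your lower bound when the witnessing center $c$ lies in $C_i$. You propose to use only coverage (``$\hat d\ge d(c,q')$'') and then argue $d(c,q')>d$ via $d(c,q')>(\alpha-1)\max(d(c,c_i),d(q',c_j))$. This fails: on a line take $c_i=0$, $c_j=10$, $\alpha=1+\sqrt2$, $p^*=-7$ (so $d=7$), $c=2.9\in C_i$, $q'=7.1\in C_j$; all center-stability constraints hold yet $d(c,q')=4.2<d$. What rescues the argument---and what the paper does---is the \emph{margin} condition. Since $q'\in\mathbb{B}(c,\hat d)$ and $d(q',c_j)<d(c,q')$ by Lemma~\ref{kmedian-lemma-inout2}(2), margin forces $c_j\in\mathbb{B}(c,\hat d)$, i.e.\ $d(c,c_j)\le\hat d$. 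Now chain
\[
d=d(p^*,c_i)<\frac{d(p^*,c_j)}{\alpha}\le\frac{d+d(c_i,c)+d(c,c_j)}{\alpha},\qquad d(c_i,c)<\frac{d(c,c_j)}{\alpha},
\]
to obtain $(\alpha-1)d<\frac{\alpha+1}{\alpha}\,d(c,c_j)$, hence $d<d(c,c_j)\le\hat d$ precisely when $\alpha^2-2\alpha-1\ge 0$. So the quadratic you identified is correct, but it lives in this margin-based chain comparing $d$ to $d(c,c_j)$, not in a direct coverage bound on $d(c,q')$.
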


\begin{proof}
We prove correctness by induction.
In particular, assume that our current clustering is {laminar} with respect to the
optimal clustering. That is, for each cluster $A$ in our current clustering
and each $C$ in the optimal clustering,
we have either $A\subseteq C$, or $C\subseteq A$, or $A \mycap C = \varnothing$.
This is clearly true at the start.
To prove that the merge steps keep the laminarity,
we need to show the following:
if $A$ is a strict subset of an optimal cluster $C_i$, $A'$ is a subset of another optimal cluster
or the union of one or more other clusters,
then there exists $B$ from $C_i \setminus A$,  such that $d_\data(A,B) < d_\data(A,A') $.

We first prove that there is a cluster $B \subseteq C_i \setminus A$ in the current cluster list such that $d_\data(A, B) \leq \tilde d \defeq \max_{p\in C_i}d(c_i, p)$. 
There are two cases.
First, if $c_i \not\in A$, then define $B$ to be the cluster
in the current cluster list that contains $c_i$.
By induction, $B\subseteq C_i$ and thus $B\subseteq C_i \setminus A$.
Then we have $d_\data(B,A) \leq \tilde d$ since there is $c_i \in B$, and (1) for any
$p\in A\mycup B$, $d(c_i, p)\leq \tilde d$,
(2) for any $p\in \data$ satisfying $d(c_i, p)\leq \tilde d$, and any $q\in \data$ satisfying $d(c_i,q) > \tilde d$,
by Lemma~\ref{kmedian-lemma-inout2} we know $p\in C_i$ and $q\not\in C_i$, and thus $d(c_i,p) < d(p, q)$.
In the second case when $c_i \in A$, we pick any $B \subseteq C_i \setminus A$ and a
similar argument gives $d_\data(A,B) \leq \tilde d$.

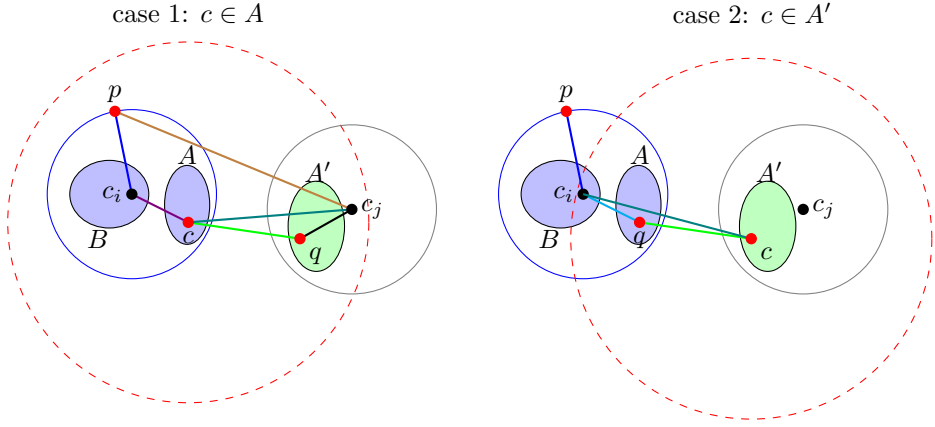
\begin{figure}[!tbhp]
\centering

\begin{tikzpicture}[scale=0.75]

\def\maxx{8.4}
\def\maxy{4.1}
\def\bbox{(-\maxx,-\maxy) rectangle (\maxx,\maxy)}
\clip \bbox;

\node at (-5,3.2) {case 1: $c \in A$};
\node at (5,3.2) {case 2: $c \in A'$};

\begin{scope}[shift={(-6cm,0cm)}]
\coordinate (ci) at (0,0) {};
\coordinate (cj) at (3.9,-0.27) {} {};
\coordinate (c) at (1,-0.5) {} {} {};
\coordinate (q) at (2.98,-0.79) {} {} {} {} {} {};
\coordinate (p) at (-0.3,1.47) {} {};

\draw[blue]  (0,0) ellipse (1.5); 
\draw[fill=blue!25]  (-0.4,0) ellipse (0.7 and 0.6);
\draw[fill=blue!25]  (0.98,-0.19) ellipse (0.4 and 0.7);
\draw[fill=green!25]  (3.27,-0.57) ellipse (0.5 and .8);
\node at (-0.6,-0.8) {$B$};
\node at (0.98,0.7) {$A$};

\draw[red,dashed] (c) circle (3.2);
\draw[blue,thick] (ci) -- (p);
\draw[blue,thick,violet] (ci) -- (c);
\fill[black] (0,0) circle (0.1) node [left] {$c_i$};

\node at (3.3,0.4) {$A'$};
\draw[red,thick,black] (cj) -- (q);
\draw[gray] (cj) circle (1.5);

\draw[brown,thick] (cj) -- (p);
\draw[green,thick] (c) -- (q);
\draw[teal,thick] (c) -- (cj);

\fill[red] (c) circle (0.1) node [below] {\textcolor{black}{$c$}};
\fill[red] (p) circle (0.1) node [above] {\textcolor{black}{$p$}};
\fill[black] (cj) circle (0.1) node [right] {$c_j$};
\fill[red] (q) circle (0.1) node [below right] {\textcolor{black}{$q$}};
\end{scope}

\begin{scope}[shift={(2cm,0cm)}]
\coordinate (ci) at (0,0) {};
\coordinate (cj) at (3.9,-0.27) {} {};
\coordinate (q) at (1,-0.5) {} {} {};
\coordinate (c) at (2.98,-0.79) {} {} {} {} {} {};
\coordinate (p) at (-0.3,1.47) {} {};

\draw[blue]  (0,0) ellipse (1.5); 
\draw[fill=blue!25]  (-0.4,0) ellipse (0.7 and 0.6);
\draw[fill=blue!25]  (0.98,-0.19) ellipse (0.4 and 0.7);
\draw[fill=green!25]  (3.27,-0.57) ellipse (0.5 and .8);
\node at (-0.6,-0.8) {$B$};
\node at (1,0.7) {$A$};

\draw[red,dashed] (c) circle (3.2);
\draw[blue,thick] (ci) -- (p);
\draw[cyan,thick] (ci) -- (q);
\fill[black] (0,0) circle (0.1) node [left] {$c_i$};

\node at (3.3,0.4) {$A'$};
\draw[gray] (cj) circle (1.5);

\draw[green,thick] (c) -- (q);
\draw[teal,thick] (ci) -- (c);

\fill[red] (c) circle (0.1) node [below right] {\textcolor{black}{$c$}};
\fill[red] (p) circle (0.1) node [above] {\textcolor{black}{$p$}};
\fill[black] (cj) circle (0.1) node [right] {$c_j$};
\fill[red] (q) circle (0.1) node [below] {\textcolor{black}{$q$}};
\end{scope}

\end{tikzpicture}
\caption{Comparing $\tilde d$ and $d_{\data}(A,A')$ in closure linkage.}\label{fig:closureLinkage}
\end{figure}

As a second step, we need to show that $\tilde d < \hat{d} \defeq d_\data(A,A')$.
There are two cases: the center for $d_\data(A,A')$ is in $A$ or in $A'$.
See Figure~\ref{fig:closureLinkage} for an illustration.
In the first case, there is a point $c\in A$ such that $c$ and $\hat{d}$ satisfy the requirements of the closure distance.
Pick a
point $q \in A'$, and define $C_j$ to be the cluster in the optimal clustering that contains $q$.
As $d(c,q)\leq \hat{d}$, and by
Lemma~\ref{kmedian-lemma-inout2} we have $d(c_j, q) < d(c,q)$,
then $d(c_j, c) \leq \hat{d}$ (otherwise it violates the
second requirement of closure distance).
Suppose $p = \arg\max_{p'\in C_i} d(c_i, p')$.
Then we have
$\tilde d = d(p, c_i) < d(p,c_j)/\alpha \leq (\tilde d + d(c_i,c)+d(c,c_j))/ \alpha$
where the first inequality comes from Lemma~\ref{kmedian-lemma-basic} and the second from the triangle inequality.
Since $ d(c_i,c) < d(c,c_j) / \alpha$, we can combine the above inequalities and compare $\tilde d$ and $d(c, c_j)$,
and when $\alpha \geq 1 + \sqrt{2}$ we have $\tilde d < d(c, c_j) \leq \hat{d}$.

Now consider the second case, when
there is a point $c \in A'$ such that $c$ and $\hat{d}$ satisfy the requirements
in the definition of the closure distance.
Select an arbitrary point $q \in A$.
We have $\hat{d} \geq d(c, q)$ from the first requirement, and $d(c,q) > d(c_i, q)$ by
Lemma~\ref{kmedian-lemma-between}.
Then from the second requirement of closure distance $d(c_i, c) \leq \hat{d}$.
And by Lemma~\ref{kmedian-lemma-inout2}, $\tilde d = d(c_i, p) < d(c_i, c)$,
we have $\tilde d < d(c_i, c) \leq \hat{d}$.
\end{proof}



{\note}
Our factor of $\alpha =1+\sqrt{2}$ beats the NP-hardness {lower bound} of
$\alpha = 3$ of~\cite{ABS10} for center-proximity instances. The reason is that the lower bound of~\cite{ABS10} requires the
addition of Steiner points that can act as centers but are not part of
the data to be clustered (though the upper bound of~\cite{ABS10} does not allow
such Steiner points).  One can also show a lower bound for
center-proximity instances without Steiner points.  In particular for any $\epsilon > 0$, the problem of solving $(2 - \epsilon)$-center proximity $k$-median
instances is NP-hard~\cite{cReyzin12}. 
There is also a low bound for perturbation resilience. 
Balcan, Haghtalab and White~\cite{balcan2015symmetric} recently showed that there is no polynomial time algorithm for $k$-center instances under $(2 - \epsilon)$-perturbation resilience, unless \NP = \RP. 
They also showed that closure linkage solves $k$-center instances under 2-perturbation resilience in polynomial time.

{\note}
The first condition in our definition of closure distance is similar to
the minimax linkage criteria~\cite{bien2011hierarchical}. More precisely, our closure distance
definition has two conditions: coverage condition and margin condition. If
the margin condition is removed from the definition, then the closure
distance reduces to the minimax linkage distance. For our purposes
however, the margin condition is crucial --- in particular, we can
provably argue that when the center promixity condition is satisfied 
Algorithm~\ref{ClosureLinkage} produces a tree such that the optimal clustering is a pruning
of the tree (Theorem~\ref{thm:alphamain}).

\section{$(\alpha, \epsilon)$-Perturbation Resilience for the $k$-Median Objective}\label{alphaEpsilonForMedian}

In this section we consider a natural  relaxation of the $\alpha$-perturbation resilience,
 the $(\alpha,\epsilon)$-perturbation resilience property,
that requires the optimum after perturbation of up to
a multiplicative factor $\alpha$ to be $\epsilon$-close to the original (one should think of $\epsilon$ as sub-constant).
We show that if the instance is $(\alpha,\epsilon)$-perturbation resilient with
$\alpha > 2+\sqrt{3}$, then we can in polynomial time output a clustering
that provides a $(1 + 5\epsilon/\rho)$-approximation to the optimum, 
where $\rho$ is the fraction of the points in the smallest cluster. Thus this improves over the best worst-case approximation guarantees known~\cite{shi2013app} when $\epsilon \leq \sqrt{3}\rho/5$ and also
beats the lower bound of $(1 + 1/e)$ on the best approximation achievable
on worst case instances for the metric $k$-median objective \cite{GuhaK99,jain_new_2002} when $\epsilon \leq \rho/(5e)$.

The key idea is to understand and leverage the structure implied by
$(\alpha,\epsilon)$-perturbation resilience. We show that perturbation resilience
implies that there exists only a small fraction of points that are bad in the sense that their distance to their own center is not $\alpha$ times smaller than their distance to any other centers in the optimal solution. We then use this
 bounded number of bad points in our clustering algorithm.

\subsection{Structure of $(\alpha,\epsilon)$-Perturbation Resilience}\label{structure}
\yingyu{
Throughout this section we will assume that $|C_i|$ is sufficiently large compared to $\epsilon n$, since for interesting practical clustering instances, one would expect that a large fraction of a optimal cluster will remain the same after small perturbation. The exact bound will be stated explicitly in our main theorems. For now we can simply assume $|C_i|>2\epsilon n$ for all $i$.
}

To understand the structure of $(\alpha, \epsilon)$-perturbation resilience,
we need to consider the difference
between the optimal clustering $\mathcal{C}$ under $d$
and the optimal clustering $\mathcal{C'}$ under a perturbation $d'$,
defined as $\min_{\sigma\in \mathcal{S}_k}\sum_{i=1}^k|C_i \setminus C'_{\sigma(i)}|$. 
Since $\sum_{i=1}^k |C_i \setminus C'_{\sigma(i)}| \leq \epsilon n$ by
assumption, we
clearly have separately
for each $i$ that $|C_i \setminus C'_{\sigma(i)}| \leq \epsilon n$.
Since $|C_i|>2\epsilon n$ this implies that $C'_{\sigma(i)}$ is
the unique cluster in ${\cal C'}$ such that $|C_i \cap C'_{\sigma(i)}| > \frac{1}{2}|C_i|$.
Without loss of generality, let us index $\cal C'$ so that $\sigma$ is
the identity.
We denote by $c'_i$ the center of $C'_i$.

In the following we introduce the notions of bad points and good points, 
and then show that under perturbation resilience we do not have too many bad points.
\begin{definition}
Define bad points for $k$-median to be those that are not $\alpha$ times closer to its own center than to any other center in the optimal clustering.
That is,
$$
  B:= \cup_i B_i,~B_i := \{p\in C_i: \exists j\neq i, \alpha d(c_i,p) \geq d(c_j,p)\}.
$$
The other points $G := \data\setminus B$ are called good points.
Let $G_i := G \cap C_i$ denote the good points in cluster $C_i$.
\end{definition}

\begin{theorem} \label{badPointsBoundTheorem}
Suppose the clustering instance is $(\alpha, \epsilon)$-perturbation resilient
and $\min_i |C_i| > 6 \left(\frac{\alpha+1}{\alpha - 1} \right)(\epsilon n + \alpha + 1)$.
Then $|B| \leq \epsilon n$.
\end{theorem}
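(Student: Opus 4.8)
The plan is to prove the contrapositive by a perturbation argument. Assume $|B| > \epsilon n$; I will exhibit a perturbation $d'$ of $d$ whose optimal clustering disagrees with $\mathcal{C}$ on more than $\epsilon n$ points, contradicting $(\alpha,\epsilon)$-perturbation resilience (Definition~\ref{def:alphaPRr}).

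I would take the perturbation underlying Lemma~\ref{kmedian-lemma-basic}, but applied to all optimal clusters simultaneously: $d'(p,q) = \alpha\, d(p,q)$ if $p,q$ lie in a common $C_i$, and $d'(p,q) = d(p,q)$ otherwise. This is a legal perturbation: $d \le d' \le \alpha d$ pointwise, and Definition~\ref{def:alphaPRr} only requires $d'$ to be a nonnegative function. Under $d'$ the optimal center of each $C_i$ is still $c_i$, so $\Phi_{d'}(\mathcal{C}) = \alpha\,\mathcal{OPT}$; moreover for every good $p \in G_i$ we have $d'(p,c_i) = \alpha\, d(p,c_i) < d(p,c_j) = d'(p,c_j)$ for all $j \ne i$, whereas for every bad $p \in B_i$ there is some $j \ne i$ with $d'(p,c_j) = d(p,c_j) < \alpha\, d(p,c_i) = d'(p,c_i)$. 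In particular, moving even a single bad point to a cluster it now prefers strictly lowers the $d'$-cost below $\alpha\,\mathcal{OPT}$, so the optimum under $d'$ is not $\mathcal{C}$; the quantitative version of this last observation is what drives the next step.

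Now let $\mathcal{C}'$ be the optimal clustering of $d'$; by $(\alpha,\epsilon)$-perturbation resilience it is $\epsilon$-close to $\mathcal{C}$. Since the hypothesis forces $\min_i |C_i| > 2\epsilon n$, the optimal matching between $\mathcal{C}$ and $\mathcal{C}'$ is canonical (each $C_i$ keeps more than half of its points inside a single $C'_j$), so after re-indexing $\sum_i |C_i \setminus C_i'| \le \epsilon n$ and $\sum_i |C_i' \setminus C_i| \le \epsilon n$; in particular every core $C_i \cap C_i'$ has size at least $|C_i| - \epsilon n$, which the hypothesis keeps comfortably above the remaining error terms. I would then use the geometric consequences of goodness — the good-point analogue of Lemma~\ref{kmedian-lemma-basic}, e.g.\ $d(p,q) > (\alpha-1)\, d(p,c_i)$ for good $p \in G_i,\, q \in G_j$, together with triangle inequalities toward arbitrary points of other clusters — to show (a) a good point lying in a core has no cheaper home under $d'$ than its own cluster, hence stays in $C_i'$; and (b) a bad point $p \in B_i$ can remain in $C_i'$ only at the expense of a quantifiable amount of extra cost to $\mathcal{C}'$ (since $p$ strictly prefers being re-assigned, it can only stay if the relevant other cluster has had its center displaced by more than a bounded amount), and the size bound caps the number of bad points that can be rescued this way by $O\!\big(\alpha(\alpha+1)/(\alpha-1)\big)$. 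Combining (a) and (b), $\sum_i |C_i \setminus C_i'|$ is at least $|B|$ minus a constant of that order; so $|B| > \epsilon n$ together with the assumed lower bound on $\min_i|C_i|$ forces $\sum_i |C_i \setminus C_i'| > \epsilon n$, contradicting $\epsilon$-closeness and proving $|B| \le \epsilon n$.

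The main obstacle is step (b): controlling the centers of the new optimum. In Section~\ref{section-kmedian} full center-stability made every optimal cluster rigid, but here only good points are well-behaved, so the argument must separately (i) rule out the new optimum tearing a large cluster apart — this is what the $2\epsilon n$ and $\frac{2\alpha}{\alpha-1}\epsilon n$ slack in the hypothesis buys, via the stability of the good cores — and (ii) bound, through $(\alpha-1)$-factor triangle-inequality estimates, how many cluster centers can be displaced far enough to let a bad point stay, which is exactly the origin of the additive constant $\frac{2\alpha(\alpha+1)}{\alpha-1}$. Everything else is routine bookkeeping with the triangle inequality.
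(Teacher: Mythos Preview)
Your overall plan—construct a perturbation under which bad points prefer other centers, then use $(\alpha,\epsilon)$-resilience to bound how many moved—is the paper's strategy. But your choice of perturbation is different, and that difference undermines the crucial step you yourself flag as the obstacle.

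The paper does \emph{not} blow up intra-cluster distances. It blows up \emph{all} distances by $\alpha$ except $d(p,c(p))$, where $c(p)$ is $p$'s own center if $p$ is not a selected bad point and $p$'s second-nearest center otherwise. This asymmetry is the whole point: under the paper's $d'$, the only points enjoying any unperturbed edges are the original centers $c_1,\dots,c_k$, so every competing candidate center pays the full $\alpha$-factor on essentially every point it serves. That structural advantage is precisely what makes the center-rigidity argument (Claims~\ref{newold} and~\ref{badPointsBound}) go through and yield $c_i'=c_i$ for all $i$.

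Your perturbation has the opposite effect. A point $q\in C_j$ lying near $c_i$—say a bad point of $C_j$—can serve $C_i'$ at cost roughly $\sum_{p\in C_i}d(q,p)$, all cross-cluster and hence unperturbed, which can be a factor $\alpha$ cheaper than using $c_i$ itself (whose edges to $C_i$ you blew up). So your $d'$ actively \emph{incentivizes} the new optimum to place centers outside their original clusters. Once that happens, your step (b) collapses: with $c_i'\approx c_i$ but $c_i'\in C_j$, a bad point $p\in B_i$ compares $d'(p,c_i')=d(p,c_i')\approx d(p,c_i)$ to $d'(p,c_j')$, both unperturbed, and since $c_i$ is $p$'s nearest original center $p$ stays in $C_i'$. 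Your sketch for (b) (``bound via $(\alpha-1)$-factor estimates how many centers can be displaced'') does not address this; showing the centers don't move is the entire content of Claims~\ref{newold} and~\ref{badPointsBound}, and those claims lean essentially on the paper's choice of $d'$.

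Two smaller points. First, even granting (a) and (b), your counting only yields $|B|\le \epsilon n + O(\alpha(\alpha+1)/(\alpha-1))$; the paper closes that gap by a selection trick, working with $\hat B_i$ of size $\min(\epsilon n+1,|B_i|)$ rather than all of $B$. Second, the cluster-size lower bound does not help turn ``$|B|>\epsilon n$'' into ``$\sum_i|C_i\setminus C_i'|>\epsilon n$'' as your last sentence suggests; its actual role is inside the center-rigidity argument, to make the coefficient of $d(c_i,c_i')$ negative when Claims~\ref{newold} and~\ref{badPointsBound} are combined.
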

\paragraph{Intuition}

Assume for contradiction that $|B| > \epsilon n$.
The main idea is to select a subset of $(\epsilon n + 1)$ bad points and then construct a specific perturbation so that in the new optimal clustering these (and only these) selected bad points move to new clusters, leading to a clustering that is $\epsilon$ far from the original optimal clustering. This is contradictory to the $(\alpha, \epsilon)$-perturbation resilience property, and thus there are at most $\epsilon n$ bad points.

The selected bad points and the perturbation are defined as follows.
Select an arbitrary subset $\hat{B}$ of $(\epsilon n + 1)$ bad points from $B$,
and let $\hat{B}_i = \hat{B} \cap C_i$ denote the selected bad points in $C_i$. 
Let $c(p)$ denote the second nearest center for $p\in \hat{B}_i$ and the nearest center for $p \in C_i \setminus\hat{B}_i$.
That is, for any $1 \leq i \leq k$ and any $p \in C_i$, let
\begin{eqnarray*}
c(p) = 
\begin{cases}
c_j \text{ where } j=\arg\min_{j'\neq i} d(p,c_{j'}) & \text{if } p \in \hat{B}_i\\
c_i & \text{if } p \in C_i \setminus \hat{B}_i.
\end{cases}
\end{eqnarray*}
The perturbation blows up all distances by a factor of $\alpha$ except for those distances between $p$ and $c(p)$.
Formally,
\begin{eqnarray*}
d'(p,q) = \left\{ \begin{array}{ll}
d(p,q) & \textrm{if $p =c(q)$, or $q = c(p)$,}\\
\alpha d(p,q) & \textrm{otherwise}.
\end{array} \right.
\end{eqnarray*}

The key challenge in showing the contradiction is to show that $c'_i = c_i$ for all $i$, that is, the optimal centers do not change after the perturbation. Once this is shown
it is then immediate that in the optimum clustering under $d'$ each point $p$ is
assigned to the center $c(p)$, and thus the selected bad points $\hat{B}$ will move from their original optimal clusters
and all others will not.
\yingyu{So the distance between the new clustering
and the original clustering is $|\hat{B}| > \epsilon n$, which} is contradictory to the $(\alpha, \epsilon)$-perturbation resilience property.  

\begin{figure}[th]
\centering
\subfloat[Notations $A_i$ and $M_i$]{%

\begin{tikzpicture}
\usetikzlibrary{shapes,backgrounds}

\def\maxx{2.5}
\def\maxy{1.7}
\def\bbox{(-\maxx,-\maxy) rectangle (\maxx,\maxy)}
\clip \bbox;

\coordinate (centerC) at (-0.25,0);
\def\C{(centerC) circle (1.2)}

\coordinate (centerCt) at (0.25,0);
\def\Ct{(centerCt) circle (1.2)}

\fill[red!25] \C;
\fill[green!25] \Ct;

\begin{scope}
	\clip \C;
	\fill[red!50!green!50] \Ct;
\end{scope}

\draw \C;
\draw \Ct;

%


\node at (-2,-1.1) {$C'_i$};
\node at (2,-1.1) {$C_i$};
\node at (-1.2,0) {$A_i$};
\node at (1.2,0) {$M_i$};

\end{tikzpicture}
}
\quad\quad\quad
\subfloat[Notations $W_i$ and $V_i$]{%

\begin{tikzpicture}
\usetikzlibrary{shapes,backgrounds}

\def\maxx{2.5}
\def\maxy{1.7}
\def\bbox{(-\maxx,-\maxy) rectangle (\maxx,\maxy)}
\clip \bbox;

\coordinate (centerC) at (-0.25,0);
\def\C{(centerC) circle (1.2)}

\coordinate (centerCt) at (0.25,0);
\def\Ct{(centerCt) circle (1.2)}


\begin{scope}
	\clip \C;
	\fill[yellow!25] \Ct;
\end{scope}

\draw \C;
\draw \Ct;

\coordinate (a) at (-2,0.7) {} {} {};
\coordinate (b) at (2,0.7) {};
\def\FPup{(a) -- (b) -- ++(0,2) -- ++(-2.7,0) -- cycle}

\begin{scope}
	\clip \C;
	\clip \FPup;
	\fill[magenta] \Ct;
\end{scope}

\draw (a) -- (b);

\node at (-2,-1.1) {$C'_i$};
\node at (2,-1.1) {$C_i$};
\node at (0,0) {$W_i$};
\node at (0,0.94) {$V_i$};
\node at (1.8,0.94) {$\hat{B}_i$};

\end{tikzpicture}
}
\caption{Different types of points. (a) $A_i = C'_i \setminus C_i, M_i = C_i \setminus C'_i$. (b) $W_i = (C_i \cap C'_i) \setminus \hat{B}_i, V_i = (C_i \cap C'_i) \setminus \hat{B}_i$. As a result, $C_i = W_i \dcup V_i \dcup M_i$ and $C'_i = W_i \dcup V_i \dcup A_i$. }\label{fig:alphaEpsilonNotation}
\end{figure}
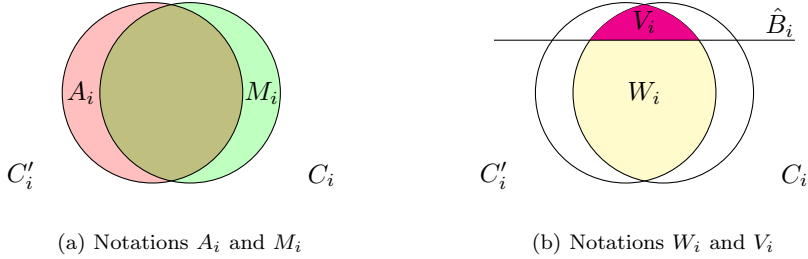

It will now be convenient to define a few quantities.  Let $A_i = C_i' \setminus C_i$ (the points added when switching from $C_i$ to $C_i'$), $M_i = C_i \setminus C_i'$ (the points removed), $W_i = (C_i \cap C_i') \setminus \hat{B}_i$ (the common points excluding selected bad points), and $V_i = (C_i \cap C_i') \cap \hat{B}_i$ (the selected bad points in common).  So, $C_i = W_i  \dcup V_i \dcup M_i$ and $C_i' = W_i \dcup V_i \dcup A_i$.  See Figure~\ref{fig:alphaEpsilonNotation}.  Note that $|A_i| \leq \epsilon n$, $|M_i| \leq \epsilon n$, and $|V_i| \leq \epsilon n+1$, with the bulk of the points in $W_i$.

The intuition for the proof that $c_i' = c_i$ is the following.  Assume for contradiction that $c_i' \neq c_i$. First, $d(c_i,c_i')$ cannot be too large compared to the average distance between $c_i$ and $W_i$, else by the triangle inequality $\ds(c_i',W_i)$ would also be large, violating the fact that $\dps(c'_i, C'_i) \leq \dps(c_i, C'_i)$; see Claim~\ref{newold}.  On the other hand, if $d(c_i,c_i')$ is small then by the triangle inequality $\ds(c_i',W_i) \approx \ds(c_i,W_i)$.  Since distances between $c_i'$ and $W_i$ are blown up by a factor of $\alpha$ in moving from $d$ to $d'$ but distances between $c_i$ and $W_i$ are not, $\dps(c_i',W_i)$ will be significantly larger than $\dps(c_i,W_i)$, which will also violate the fact that $\dps(c'_i, C'_i) \leq \dps(c_i, C'_i)$; see Claim~\ref{otherdirection}.

\begin{proofof}{Theorem~\ref{badPointsBoundTheorem}}
We now present the formal proof. Before proving the two key claims mentioned in the intuition, we begin with two convenient claims. The first convenient claim shows that $c_i' \neq c_j$ for $j \neq i$. 
The second convenient claim shows the relation of $\dps$ and $\ds$ on $A_i$. 

\begin{claim}\label{newneqold}
If $\min_i |C_i| > (\frac{2}{\alpha - 1} + 3) \epsilon n + 1$, then $c'_i \neq c_j (\forall j\neq i)$.
\end{claim}
\mycomment{The proof is moved from appendix to here. We can also put it at the end of the section but putting it here seems fine, since its not too long.}
\begin{proof}
Assume for contradiction that $c'_i = c_j$.
We first need to show $c'_j \neq c_l (\forall l)$.
Clearly, $c'_j\neq c_j$, since otherwise,
moving all the points in $C'_j$ to $C'_i$ will not increase the cost,
which violates $(\alpha, \epsilon)$-perturbation resilience.
We also know that $c'_j \neq c_l (l\neq j)$ since otherwise,
there is $p\in W_j$, $d(c_l, p) = d(c'_j, p) \leq d'(c'_j,p) < d'(c'_i, p) = d(c_j, p)$,
which contradicts the fact that $p\in C_j$.

Now we can apply the intuition described above to show that $c'_i = c_j$
and $c'_j \neq c_l (\forall l)$ lead to an contradiction.
Note that points in $ W_j \cup V_j = C_j \cap C'_j$ are closer to $c'_j$ than to $c'_i = c_j$ under $d'$.
Then back to $d$, for any $p\in W_j$, since $c'_j \neq c_l (\forall l)$,
$\alpha d(c'_j, p) = d'(c'_j,p )\leq d'(c'_i,p) = d(c_j, p)$,
resulting in $\ds(c'_j, W_j) \leq  \ds(c_j, W_j)/\alpha$.
Similarly, for any $p \in V_j$, $\alpha d(c'_j, p) = d'(c'_j,p )\leq d'(c'_i,p) = \alpha d(c_j, p)$,
resulting in $\ds(c'_j, V_j) \leq  \ds(c_j, V_j)$.
These facts have two consequences.

First, since points in $W_j$ are $\alpha$ time closer to $c'_j$ than to $c_j$,
the distance between $c'_j$ and $c_j$ is small:
\begin{eqnarray}
d(c'_j, c_j) \leq \frac{d(c'_j, W_j)}{|W_j|} + \frac{\ds(c_j, W_j)}{|W_j|} \leq (1+\frac{1}{\alpha}) \ds(c_j,W_j).\label{eqn:newOldLowerBound}
\end{eqnarray}

Second, since $c_j$ is the optimal center
for $C_j = W_j \cup V_j \cup M_j$,
it should save a lot of cost on $M_j$ compared to $c'_j$,
which suggests that $c_j$ and $c'_j$ would be far apart.
Formally,
\begin{eqnarray*}
\ds(c'_j,C_j) = \ds(c'_j, W_j \cup V_j \cup M_j)
\geq  \ds(c_j, C_j) = \ds(c_j, W_j \cup V_j \cup  M_j).
\end{eqnarray*}
Since $\ds(c'_j, W_j) \leq  \ds(c_j, W_j)/\alpha$ and  $\ds(c'_j, V_j) \leq  \ds(c_j, V_j)$,
we have
\begin{eqnarray}
\ds(c'_j, M_j) - \ds(c_j, M_j) & \geq & \ds(c_j, W_j) - \frac{1}{\alpha} \ds(c_j, W_j),\nonumber\\
|M_j| \ds(c'_j, c_j) & \geq & (1-\frac{1}{\alpha}) \ds(c_j, W_j)\label{eqn:newOldUpperBound}.
\end{eqnarray}

When $|C_j| >  (\frac{2}{\alpha - 1} + 3) \epsilon n + 1$, we have
$(1 - 1 / \alpha) |W_j| >  (1 + 1 / \alpha) |M_j|$.
Then Inequalities~\ref{eqn:newOldUpperBound} and \ref{eqn:newOldLowerBound}
lead to $d(c_j,c'_j) = 0$.
This means $c_j = c'_j$ which is a contradiction to the assumptions.
\end{proof}

\begin{claim}\label{lemma:distanceTranslation}
Suppose $\min_i |C_i| > (\frac{2}{\alpha - 1} + 3) \epsilon n + 1$.
If $c_i \neq c'_i$, then we have
\begin{itemize}
\item[(1)] $\dps(c'_i, A_i) \geq  \alpha \ds(c'_i, A_i \setminus \{ c(c'_i) \})$,
\item[(2)] $\dps(c_i, A_i)  \leq  \alpha \ds(c_i, A_i \setminus \{ c(c'_i)\}) + \alpha (1 + \alpha) d(c'_i, c_i)$.
\end{itemize}
\end{claim}

\begin{proof}
These translations from $d'$ to $d$ can be verified by the definition of $d'$.
In most cases, $d'(\cdot, \cdot) = \alpha d(\cdot, \cdot)$;
the only exceptions are the distances between $p$ and $c(p)$.
The detailed verification is presented below.

(1) Since $c'_i \neq c_i$, and by Claim~\ref{newneqold}, we know $c'_i \neq c_j (\forall j)$.
So we only need to check if $c(c'_i) \in A_i$. We have
\begin{eqnarray*}
\dps(c'_i, A_i) \geq \dps(c'_i, A_i \setminus \{ c(c'_i) \}) = \alpha \ds(c'_i, A_i \setminus \{ c(c'_i) \}).
\end{eqnarray*}

(2) If $c(c'_i) \not\in A_i$, then the inequality is trivial. If $c(c'_i) \in A_i$, then
\begin{eqnarray*}
\dps(c_i, A_i) & = & \dps(c_i, A_i \setminus \{c(c'_i)\} ) + d'(c_i, c(c'_i)) \leq \alpha \ds(c_i, A_i \setminus \{ c(c'_i) \}) + \alpha d(c_i, c(c'_i)).
\end{eqnarray*}
We have $d(c_i, c(c'_i)) \leq d(c_i, c'_i) + d(c'_i, c(c'_i))$.
If $c'_i$ is a selected bad point, then $d(c'_i, c(c'_i)) \leq \alpha d(c'_i, c_i)$.
Otherwise, $c(c'_i)$ is the nearest center for $c'_i$, then $d(c'_i, c(c'_i)) \leq d(c'_i, c_i)$.
In any case, the inequality for $\dps(c_i, A_i)$ follows.
\end{proof}

We are now ready to present the complete proofs of the two key claims.

\begin{claim}\label{newold}
For each $i$,
$d(c_i,c'_i) \leq 3\left(\frac{\alpha + 1}{\alpha}\right)\frac{\ds(c_i,  W_i)}{|C_i|}.$
\end{claim}

\begin{proof}
The key idea is that since $\dps(c'_i, C'_i) \leq \dps(c_i, C'_i)$ and
$C_i' \setminus W_i$ is small,  it must be the case that $\dps(c_i',
W_i)$ is not too much larger than $\dps(c_i,W_i)$.  Now, since distances
between $c_i$ and $W_i$ remain the same in 
moving from $d$ to $d'$ but distances between $c_i'$ and $W_i$ are
blown up by a factor of $\alpha$ (except for the distance between
$c_i'$ and $c_i$ itself), this means that $\alpha \ds(c_i',W_i) -
\ds(c_i,W_i)$ must be small.  This is then used together with the triangle
inequality to get an upper bound on $d(c_i,c_i')$. We provide the formal proof below.

First, if $c'_i = c_i$ the claim is trivially true so assume $c_i' \neq c_i$.
We begin with the fact that
$\dps(c'_i, C'_i) \leq \dps(c_i, C'_i) $ and then break $C_i'$ into its three components $W_i$, $V_i$, and $A_i$.
We move the $W_i$ terms to one side and move
the rest of the terms to the other side, resulting in
\begin{eqnarray}
\dps(c'_i, W_i) - \dps(c_i, W_i)
\leq
  \dps(c_i, A_i)  - \dps(c'_i, A_i)  + \dps(c_i, V_i) - \dps(c'_i, V_i) . \label{eq:claim1a}
\end{eqnarray}

Beginning with the right-hand side of (\ref{eq:claim1a}), by the triangle inequality we have 
$\ds(c_i,V_i) \leq \ds(c_i', V_i) + |V_i| d(c_i,c_i')$.  
Thus, $\dps(c_i, V_i) \leq \dps(c_i', V_i) + \alpha|V_i|d(c_i,c_i')$.
Similarly, by Claim~\ref{lemma:distanceTranslation} we have 
$\dps(c_i, A_i) \leq \dps(c_i', A_i) + \alpha|A_i|d(c_i,c_i') + \alpha(\alpha+1)d(c_i, c'_i)$.  So, the right-hand side of (\ref{eq:claim1a}) is at most $\alpha(|V_i|+|A_i| + \alpha+1)d(c_i,c_i')$.
Now, examining the left-hand side, this quantity is at least 
$\alpha \ds(c'_i, W_i \setminus \{ c(c'_i) \}) - \ds(c_i, W_i)$.  So, we have 
\begin{eqnarray}
\alpha \ds(c'_i, W_i \setminus \{ c(c'_i) \}) - \ds(c_i, W_i) & \leq &
\alpha(|V_i|+|A_i| + \alpha+1)d(c_i,c_i').  \label{eq:claim2a}
\end{eqnarray}
Using the fact that by the triangle inequality, $\alpha (|W_i|-1)d(c_i,c_i') \leq \alpha \ds(c_i',W_i \setminus \{c(c'_i)\}) + \alpha \ds(c_i, W_i \setminus \{c(c'_i)\})$, and subtracting $(\alpha+1)\ds(c_i,W_i)$ from both sides, 
we get 
\begin{eqnarray}
\alpha \ds(c_i,c_i') (|W_i|-1) - (\alpha+1)\ds(c_i,W_i)
 \leq 
\alpha \ds(c_i', W\!\setminus\!\{c(c_i')\}) - \ds(c_i,W_i).  \label{eq:claim3a}
\end{eqnarray}
Combining (\ref{eq:claim2a}) and (\ref{eq:claim3a}) we have:
\begin{eqnarray*}
\alpha d(c_i, c'_i) (|W_i| - 1) -  (\alpha + 1)\ds(c_i, W_i)
\leq \alpha d(c_i, c'_i) \left(|V_i| + |A_i| + \alpha + 1\right)
\end{eqnarray*}
which implies the desired result when $|C_i| > 5\epsilon n + 2\alpha + 6$.
\end{proof}

\begin{claim}\label{otherdirection}
For each $i$, if $c_i' \neq c_i$ then 
$d(c_i,c'_i) \geq \left(\frac{\alpha-1}{2\alpha}\right)\frac{d(c_i,W_i)}{\epsilon n + \alpha + 1}.$
\end{claim}

\begin{proof}
Assume $c_i' \neq c_i$ and let $d_i =
d(c_i,c_i')$.  \yingyu{We will begin with the fact that $\ds(c_i,C_i) \leq \ds(c_i',C_i)$, and then proceed to compare $\ds(c_i,C'_i)$ and $\ds(c_i',C'_i)$, and finally compare $\dps(c_i,C'_i)$ and $\dps(c_i',C'_i)$, which will give the desired bound.}

First, since $\ds(c_i, C_i) \leq \ds(c'_i, C_i)$ and the difference between $C'_i$ and $C_i$ is small,  $\ds(c_i,C_i')$ cannot be much larger than $\ds(c_i',C_i')$.
Specifically, by $(\alpha,\epsilon)$-perturbation resilience, $|A_i| \leq
\epsilon n$ and $|M_i|
\leq \epsilon n$. We have by the triangle  inequality
\begin{eqnarray}
\ds(c_i,A_i) & \leq & \ds(c_i',A_i) + (\epsilon n)d_i, \label{eqn:dAa} \\
\ds(c_i,M_i) & \geq & \ds(c_i',M_i) - (\epsilon n)d_i. \label{eqn:dMa}
\end{eqnarray}
So, $\ds(c_i,C_i') \leq \ds(c_i',C_i') + 2(\epsilon n) d_i$.

\yingyu{Now, we turn to compare $\dps(c_i,C'_i)$ and $\dps(c_i',C'_i)$.}
We begin with $A_i$.  By Claim~\ref{lemma:distanceTranslation} we have 
\begin{eqnarray}
\dps(c_i, A_i) \leq \dps(c_i', A_i) + \alpha\epsilon n d_i + \alpha(\alpha+1)d_i. \label{eqn:dpAa}
\end{eqnarray}
\yingyu{On $C_i \setminus M_i$, the cost of $c_i$ is smaller than that of $c_i'$.}
Specifically, from (\ref{eqn:dMa}) we have:
\begin{eqnarray*}
\ds(c_i, C_i \setminus M_i) & \leq & \ds(c_i', C_i \setminus M_i) +
(\epsilon n) d_i.
\end{eqnarray*}
so
\begin{eqnarray}
\alpha \ds(c_i, C_i \setminus M_i) & \leq & \alpha \ds(c_i', C_i \setminus M_i) +
\alpha (\epsilon n) d_i\nonumber\\
& \leq & \dps(c_i', C_i \setminus M_i) + (\alpha-1)d_i + \alpha (\epsilon n)d_i \label{eqn:dpWVa}
\end{eqnarray}
where the second step is from the following fact: $\dps(c_i', C_i \setminus M_i) = \alpha \ds(c_i', C_i \setminus M_i)$ if $c(c'_i) \neq c_i$, else $\dps(c_i', C_i \setminus M_i)  =  \alpha \ds(c_i', C_i \setminus M_i) - (\alpha-1)d_i$.
Now, the left-hand side above equals $\dps(c_i, C_i \setminus M_i) +
(\alpha-1)\ds(c_i,W_i)$ because distances between $c_i$ and $W_i$ are {\em not} blown up by a factor of $\alpha$.

Adding up (\ref{eqn:dpAa}) and (\ref{eqn:dpWVa}) 
means that we get a contradiction if $(\alpha-1)\ds(c_i,W_i) >  (2\alpha\epsilon n  + \alpha(\alpha+1) + \alpha - 1) d_i$. In other words, if our savings in using $c_i$ as center is greater than our extra cost.
Therefore, $d_i \geq \left(\frac{\alpha-1}{2\alpha}\right)\frac{\ds(c_i,W_i)}{\epsilon n + \alpha + 1}$ as desired.
\end{proof}

Combining the upper bound of  Claim~\ref{newold} with the lower bound of Claim~\ref{otherdirection} when $c_i' \neq c_i$, we get a contradiction for sufficiently large $|C_i|$ as given in the theorem statement, yielding $c_i'=c_i$.
\end{proofof}

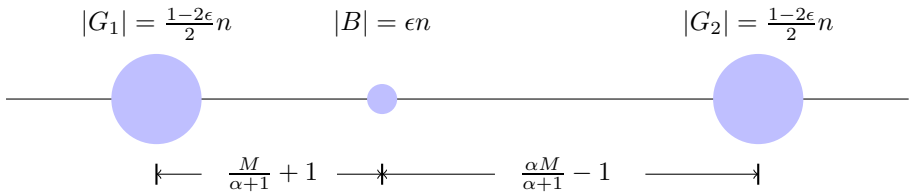
\begin{figure}[!tbhp]
\centering

\begin{tikzpicture}[scale = 2]
\usetikzlibrary{shapes,backgrounds}

\def\maxx{3}
\def\maxy{1}
\def\bbox{(-\maxx,-\maxy) rectangle (\maxx,\maxy)}
\clip \bbox;

\coordinate (G1) at (-2,0) ;
\coordinate (G2) at (2,0);
\coordinate (B) at (-0.5,0);

\draw (-\maxx,0) -- (\maxx,0);
\fill[blue!25] (G1) circle (0.3);
\fill[blue!25] (G2) circle (0.3);
\fill[blue!25] (B) circle (0.1);

\coordinate (G1n) at (-2,-.5) ;
\coordinate (G2n) at (2,-.5);
\coordinate (Bn) at (-0.5,-.5);
\draw[thick] (G1n) ++ (0,0.07)-- +(0,-0.14);
\draw[thick] (G2n) ++ (0,0.07)-- +(0,-0.14);
\draw[thick] (Bn) ++ (0,0.07)-- +(0,-0.14);

\draw[<-] (G1n) -- ++(0.3,0); 
\node at (-1.6,-.5) [right] {$\frac{M}{\alpha+1} + 1$}; 
\draw[<-] (Bn) -- ++(-0.3,0);

\draw[<-] (Bn) -- ++(0.75,0); 
\node at (0.35,-.5) [right] {$\frac{\alpha M}{\alpha+1} - 1$}; 
\draw[<-] (G2n) -- ++(-0.75,0);

\node  at (-2,.5) {$|G_1| = \frac{1-2\epsilon}{2} n$};
\node  at (2,.5) {$|G_2| = \frac{1-2\epsilon}{2} n$};
\node at (-0.5,.5) {$|B|=\epsilon n$};
\end{tikzpicture}
\caption{An example showing the optimality of the bound on the number of bad points.}\label{fig:badPointOptimal}
\end{figure}

{\note}
The bound in Theorem~\ref{badPointsBoundTheorem} is optimal in the sense that for any $\alpha > 1$ and $0< \epsilon < 1/5$,
we can easily construct an $(\alpha, \epsilon)$-perturbation resilient $2$-median instance which has $\epsilon n$ bad points.

The instance is shown in Figure~\ref{fig:badPointOptimal}.
It has $3$ groups of points: $G_1, G_2$, and $B$.
Both $G_1$ and $G_2$ have $(1-\epsilon)n/2$ points, and $B$ has $\epsilon n$ points. Let $M$ be a sufficiently large constant, say, $M > n^2/\epsilon$.
The distances within the same group are $1$, while those between the points in $G_1$ and $G_2$ are $M$,
those between the points in $B$ and $G_1$ are $\frac{M}{\alpha + 1}+1$, and those between the points in $B$ and $G_2$ are $\frac{\alpha M}{\alpha + 1}-1$.
The instance satisfies the triangle inequality, which can be verified by a case analysis.
The optimal clustering before perturbation has one center in $G_1$ and the other in $G_2$.
Then $B$ are trivially bad points, and thus we have $\epsilon n$ bad points in this instance.

Now we show that the instance is $(\alpha, \epsilon)$-perturbation resilient.
To prove that the optimal clustering after perturbation $\mathcal{C}'$ is $\epsilon$-close to the original optimal clustering,
it suffices to show that $\mathcal{C}'$ has one center from $G_1 \cup B$ and the other center from $G_2$.
Assume for contradiction that this is not true.
If both centers come from $G_2$, the cost of points in $G_1$ is $\frac{(1-\epsilon)n}{2}M$.
On the other hand, the optimal cost before perturbation is $(1-\epsilon)n - 2 + \epsilon n ( \frac{M}{\alpha + 1} + 1)$,
so the optimal cost after perturbation is no more than $\alpha ((1-\epsilon)n - 2 + \epsilon n ( \frac{M}{\alpha + 1} + 1))$.
But this is smaller than $\frac{(1-\epsilon)n}{2}M$, which is a contradiction.
Similarly, we get a contradiction if both centers come from $G_1 \cup B$.


%

\subsection{Approximation Bound}\label{subsec:app}

Now, we consider \yingyu{the problem of approximating} the cost of the optimum clustering. We can
see that after removing the bad points, the optimal clusters are far apart from each other. In order to get rid of
the influence of the bad points, we generate a list of blobs, which form a partition of the data points, and each
of which contains only good points from one optimal cluster.
Then we construct a tree on the list of blobs with a pruning that assigns all good points correctly.
We will show that this pruning has low cost, so the lowest cost pruning of the tree is a good approximation.
The details are described in Algorithm~\ref{alg:aekmedian_blobapprox}.

A key step is to generate the list of almost ``pure'' blobs, which is described in Algorithm~\ref{alg:aekmedian_blobs}.
Suppose for any $i$ and any good point $p \in G_i$, its $\gamma |G_i|$ nearest neighbors contain no good points outside $C_i$.
Also suppose the algorithm knows the value of $\gamma$.
Informally, the algorithm maintains a threshold $t$.
At each threshold, for each point $p$ that has not been added to the list,
the algorithm checks its $\gamma t$ nearest neighbors $N_{\gamma t}(p)$.
It constructs a graph $F_t$ by connecting any two points that have sufficiently many common neighbors.
It then builds another graph $H_t$ by connecting any two points that have sufficiently many common neighbors in $F_t$,
and adds sufficiently large components in $H_t$ to the list.
Finally, for each remaining point $p$, it checks if most of $p$'s neighbors are in the list and
if there are blobs containing a significant amount of $p$'s neighbors.
If so, it inserts $p$ into such a blob with the smallest median distance.
Then the threshold is increased and the above steps are repeated.

The intuition behind Algorithm~\ref{alg:aekmedian_blobs} is as follows.
As mentioned above, the algorithm works when for any $i$ and any good point $p \in G_i$,
the $\gamma |G_i|$ nearest neighbors of $p$ contain no good points outside $C_i$ ($\gamma = 1$ for the $k$-median instances considered in this section, as shown in Lemma~\ref{lem:strictSep}; $\gamma=\frac{4}{5}$ for the min-sum instances considered in Section~\ref{sec:alpha_espilon_minsum}, as shown in Claim~\ref{cla:nn}).
Without loss of generality, assume $|C_1| \leq |C_2| \leq \dots \leq |C_k|$.
When $t \leq |C_1|$, good points in different clusters do not have most neighbors in common
and thus are not connected in $F_t$. However, they may be connected by a path of bad points.
So we further build the graph $H_t$ to disconnect such paths,
which ensures that the blobs added into the list contain only good points from one optimal cluster.
The final insert step (Step~\ref{step:insert}) makes sure that when $t=|C_1|$, all remaining good points in $C_1$
will be added to the list and will not affect the construction of blobs from other optimal clusters.
We can show by induction that, at the end of the iteration $t=|C_i|$, all good points in $C_j (j\leq i)$ are added to the list.
When $t$ is large enough, any remaining bad points are inserted into the list, so the points are partitioned into a list of almost pure blobs.
The formal guarantee for Algorithm~\ref{alg:aekmedian_blobs} is stated in Lemma~\ref{lem:bloblist}.

\newcommand{\score}{\mathrm{score}}
\newcommand{\myrank}{\mathrm{rank}}
\newcommand{\median}{\mathrm{median}}

Another key step is to construct a tree on these blobs.
Since good points are closer to good points in the same optimal cluster than to those in other clusters (Lemma~\ref{lem:strictSep}),
there exist algorithms that can build a tree with a pruning that assigns all good points correctly.
In particular, we can use the robust linkage procedure in~\cite{nina_colt_2010},
which repeatedly merges the two blobs $C, C'$ with the maximum $\score(C,C')$ defined as follows.
For each $p \in C$, sort the other blobs in decreasing order of the median distance between $p$ and points in the blob,
and let $\myrank(p,C')$ denote the rank of $C'$.
Then define $\myrank(C,C') = \median_{x\in C} [\myrank(x,C')]$ and $\score(C,C') = \min[\myrank(C,C'), \myrank(C',C)]$.
Intuitively, for any blobs $A,A'$ from the same optimal cluster and $D$ from a different cluster,
good points in $A$ always rank $A'$ later than $D$ in the sorted list,
so $\myrank(A,A') > \myrank(A,D)$. Similarly, $\myrank(A',A) > \myrank(A',D)$, and thus $\score(A',A) > \score(A,D)$.
This means the algorithm will always merge blobs from the same cluster before merging them with blobs outside,
so there is a pruning that assigns all good points correctly.

\newcommand{\blobs}{\mathcal{L}}

\begin{algorithm}[!t]
\caption{$k$-median, $(\alpha,\epsilon)$ perturbation resilience}
\label{alg:aekmedian_blobapprox}
\begin{algorithmic}[1]
\REQUIRE{Data set $\data$, distance function $d(\cdot, \cdot)$ on $\data$, $\min_i |C_i|$, $\epsilon>0$}
\STATE Run Algorithm~\ref{alg:aekmedian_blobs} to generate a list $\blobs$ of blobs with parameters $u_B=\epsilon n, \gamma = 1$.
\STATE Run the robust linkage procedure in~\cite{nina_colt_2010} to get a cluster tree $\mathcal{T}$.
\STATE Run dynamic programming on $\mathcal{T}$ to get the minimum cost pruning ${\cal \tilde{C}}$ and its centers $\tilde{\mathbf{c}}$.
\ENSURE{Clustering ${\cal \tilde{C}}$ and its centers $\tilde{\mathbf{c}}$.}
\end{algorithmic}
\end{algorithm}

\begin{algorithm}[!t]
\caption{Generating interesting blobs}
\label{alg:aekmedian_blobs}
\begin{algorithmic}[1]
\REQUIRE{Data set $\data$, distance function $d(\cdot, \cdot)$ on $\data$, the size of the smallest optimal cluster $\min_i |C_i|$, the upper bound on
the number of bad points $u_B$, a parameter $\gamma\in [4/5,1]$}
\STATE Let $N_r(p)$ denote the $r$ nearest neighbors of $p$ in $\data$.
\STATE Let $\blobs=\emptyset, A_\data= \data$. Let the initial threshold $t = \min_i |C_i|$.
\STATE Construct a graph $F_t$ by connecting $p,q \in A_\data$ if\\ $|N_{\gamma t}(p) \mycap N_{\gamma t}(q)| > (2\gamma-1)t - 2 u_B$. \label{step:Ft}
\STATE Construct a graph $H_t$ by connecting points $p,q \in A_\data$ if $p, q$ share more than $u_B$ neighbors in $F_t$.
\STATE Add to $\blobs$ all the components $C$ of $H_t$ with $|C| \geq \frac{1}{2}\min_i |C_i|$ and remove them from $A_\data$. \label{step:newblob}
\STATE For each point $p \in A_\data$, check if most of $N_{\gamma t}(p)$ are in $\blobs$ and if there exists $C \in \blobs$ containing a significant number of points in $N_{\gamma t}(p)$. More precisely, check if\\
(1) $|N_{\gamma t}(p) \setminus \blobs| \leq \frac{1}{2}\min_i |C_i| + 2 u_B$;\\
(2) $\blobs_p \neq \emptyset$ where $\blobs_p = \{C \in \blobs: | C \mycap N_{\gamma t}(p)|\geq (\gamma-\frac{3}{5})|C|\}$. \\
If so, assign $p$ to the blob in $\blobs_p$ of smallest median distance,  remove $p$ from $A_\data$. \label{step:insert}
\STATE While $|A_\data| > 0$, increase $t$ by $1$ and go to Step~\ref{step:Ft}.
\ENSURE{The list $\blobs$.}
\end{algorithmic}
\end{algorithm}

In the following, we prove that Algorithm~\ref{alg:aekmedian_blobapprox} outputs a good approximation.
We first prove a key property of the good points in $(\alpha,\epsilon)$-perturbation resilience instances in Lemma~\ref{lem:strictSep} and show in Lemma~\ref{lem:bloblist} that the property ensures the success of Algorithm~\ref{alg:aekmedian_blobs}, and then prove a property of the bad points in Lemma~\ref{lem:badpoints}. Finally, we use these lemmas to prove the approximation bound in Theorem~\ref{thm:aekmedian}. 

\begin{lemma}[Theorem 8 in~\cite{cReyzin12}, Lemma 2.6 in~\cite{ABS10}]\label{lem:strictSep}
When $\alpha > 2+\sqrt{3}$, for any good points $p_1, p_2 \in G_i, q \in G_j (j\neq i)$, we have $d(p_1,p_2) < d(p_1,q)$.
Consequently, for any good point $p \in G_i$, all its $|G_i|$ nearest neighbors belong to $C_i \cup B$.
\end{lemma}
\begin{proof}
The following proof is implicit in~\cite{ABS10} and we include it for completeness. We rephrase it slightly so that it is more intuitive.
By the triangle inequality and the definition of good points, 
$$
	d(p_2, p_1) + d(p_1, q) + d(q, c_j) \geq d(p_2, c_j) > \alpha d(p_2, c_i) \geq \alpha (d(p_1,p_2) - d(p_1, c_i)).
$$
Rearranging terms leads to
\begin{eqnarray}
  d(p_1, q) + d(q, c_j) + \alpha d(p_1,c_i) > (\alpha - 1) d(p_1,p_2). \label{eqn:strictSep}
\end{eqnarray}
Now, to compare $d(p_1, q)$ and $d(p_1, p_2)$, we need to get rid of the extra terms $d(q,c_j)$ and $d(p_1, c_i)$. By the same proof in Lemma~\ref{kmedian-lemma-between}(2), 
$$d(p_1,q) > (\alpha -1) d(p_1, c_i),~~\textrm{and}~~d(p_1,q) > (\alpha -1) d(q, c_j).$$ 
Plugging these into (\ref{eqn:strictSep}), we have
\begin{eqnarray*}
   \left( 1 + \frac{1}{\alpha-1}+ \frac{\alpha }{\alpha-1}\right) d(p_1, q) > (\alpha - 1) d(p_1,p_2). 
\end{eqnarray*}
So when $\alpha > 2 + \sqrt{3}$, $d(p_1, q) > d(p_1, p_2)$.
\end{proof}

\begin{lemma}\label{lem:bloblist}
Suppose the number of bad points is bounded by $u_B$,
and for any $i$ and any good point $p \in G_i$, all its $\gamma |G_i|$ nearest neighbors in $S$ are from $C_i \cup B$.
If $\min_i |C_i| > 30 u_B$, then Algorithm~\ref{alg:aekmedian_blobs} generates a list $\blobs$ of
blobs each of size at least $\frac{1}{2}\min_i |C_i|$ such that:
\begin{itemize}
\item[(1)] The blobs in $\blobs$ form a partition of $S$.
\item[(2)] Each blob in $\blobs$ contains good points from only one optimal cluster.
\end{itemize}
\end{lemma}

\begin{proof}
Without loss of generality, assume $|C_1| \leq |C_2| \leq \dots \leq |C_k|$.
We prove the following two claims by induction on $i \leq k$:
\begin{itemize}
\item[(1)] For any $t\leq |G_i|$, any blob in the list $\blobs$ only contains good points from only one optimal cluster; all blobs
have size at least $\frac{1}{2}\min_i |C_i|$.
\item[(2)] At the beginning of the iteration $t = |G_i|+1$, any good point $p \in G_j, j\leq i$ has already been assigned to a blob in
the list that contains good points only from $C_j$.
\end{itemize}

The first two claims imply that each blob in the list contains good points from only one optimal cluster. Moreover,
at the beginning of the iteration $t =|G_k|+1$, all good points have been assigned to one of the blobs in $\blobs$, so there
are only bad points left, the number of which is smaller than $\frac{1}{2}\min_i |C_i|$.
These remaining points will eventually be assigned to the blobs before $\gamma t>n$, so the blobs form a partition of $S$.

The claims are clearly both true initially. We show now that as long as $t \leq |G_1|$, the graphs $F_t$ and $H_t$ have the following properties.

\begin{figure}[!tbhp]
\centering
\begin{tikzpicture}[scale=0.4]

\fill[blue!30]  (0,-.5) ellipse (2 and 2.5);
\fill[red!30]  (-2.2,-2.6) rectangle (2.2,-3.8);

\begin{scope}[shift={(6cm,0cm)}]
\fill[teal!30]  (0,-.5) ellipse (2 and 2.5);
\fill[red!30]  (-2.2,-2.6) rectangle (2.2,-3.8);
\end{scope}

\begin{scope}[shift={(12cm,0cm)}]
\fill[olive!30]  (0,-.5) ellipse (2 and 2.5);
\fill[red!30]  (-2.2,-2.6) rectangle (2.2,-3.8);
\end{scope}

\node (v1) at (2,-3) {};
\node (v3) at (2,-3.4) {};
\node (v2) at (4,-3) {};
\node (v4) at (4,-3.4) {};

\begin{scope}[shift={(0cm,0cm)}]
\draw[thick,red!30]  (2,-3) edge (4,-3);
\draw[thick,red!30]  (2,-3.4) edge (4,-3.4);
\end{scope}
\begin{scope}[shift={(6cm,0cm)}]
\draw[thick,red!30]  (2,-3) edge (4,-3);
\draw[thick,red!30]  (2,-3.4) edge (4,-3.4);
\end{scope}

\node at (-3.6,-3.2) {\textcolor{red}{$B$}};
\node at (0,2.6) {\textcolor{blue}{$G_1$}};
\node at (6,2.6) {\textcolor{teal}{$G_2$}};
\node at (12,2.6) {\textcolor{olive}{$G_3$}};
\end{tikzpicture}
\caption{A high level illustration of the graph $F_t$.}\label{fig:Ft}
\end{figure}
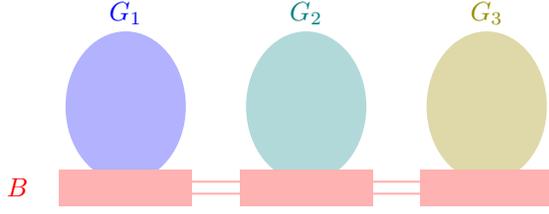

\begin{itemize}
\item No good point $p_i$ in cluster $C_i$ is connected in $F_t$ to a good point $p_j$ in a different cluster $C_j$.
By assumption, $p_i$ has no neighbors outside $C_i \cup B$
and $p_j$ has no neighbors outside $C_j \cup B$, so they share at most $u_B < (2\gamma-1)t - 2 u_B$ neighbors.
\item No point $q$ is connected in $F_t$ to both a good point $p_i$ in $C_i$ and a good point $p_j$ in a different cluster $C_j$.
If $q$ is connected to $p_i$, then $|N_{\gamma t}(p_i) \cap N_{\gamma t}(q)| > (2\gamma-1)t - 2 u_B$.
Since $p_i$ has no neighbors outside $C_i \cup B$, $N_{\gamma t}(q)$ contains more than $(2\gamma-1)t - 3u_B \geq \gamma t/2$ points from $G_i$.
Similarly, if $q$ is connected to $p_j$, then $N_{\gamma t}(q)$ contains more than $\gamma t/2$ points from $G_j$, which is contradictory.
Thus, the graph $F_t$ looks like the illustration in Figure~\ref{fig:Ft}.

\item All the components in $H_t$ of size at least $\frac{1}{2}\min_{i} |C_i|$ will only contain good points from one optimal cluster.
As there are at most $u_B$ bad points, any two points connected in $H_t$ must be connected in $F_t$ to at least one good point.
Then by the above two properties, points on a path in $H_t$ must be connected in $F_t$ to good points in the same cluster,
so there is no path connecting good points from different clusters.
\end{itemize}

We can use the three properties to argue the first claim: as long as $t \leq |G_1|$, each blob in $\blobs$ contains good
points from at most one optimal cluster. This is true at the beginning and by the third property, for any
$t \leq |G_1|$, anytime we insert a whole new blob in the list in Step~\ref{step:newblob}, that blob must contain point from at most one
optimal cluster. We now argue that this property is never violated as we assign points to blobs already in the list
in Step~\ref{step:insert}. Suppose a good point $p \in C_i$ is inserted into $C \in \blobs$.
Then $C \in \blobs_p$, which means $|N_{\gamma t}(p) \cap C| \geq |C|/5 > u_B$.
So $N_{\gamma t}(p) \cap C$ contains at least one good point,
which must be from $C_i$ since $N_{\gamma t}(p)$ contains no good points outside $C_i$.
Then by induction $C$ must contain only good points from $C_i$,
and thus adding $p$ to $C$ does not violate the first claim.

We now show the second claim: after the iteration $t = |G_1|$, all the good points in $C_1$ have already been
assigned to a blob in the list that only contains good points from $C_1$. There are two cases. First, if at the
beginning of the iteration $t = |G_1|$, there are still at least $\frac{1}{2}\min_i |C_i|$ points from the good point
set $G_1$ that do not belong to blobs in the list.
Any such good point has all $\gamma |G_1|$ neighbors in $C_1\cup B$.
Then any two such good points share at least $2\gamma |G_1| - |C_1 \cup B| \geq (2\gamma -1)|G_1| - |B| \geq  (2\gamma -1)t - 2u_B$ neighbors.
So they will connect to each other in $F_t$ and then in $H_t$, and thus we will add one
blob to $\blobs$ containing all these points.
Second, it could be that at the beginning of the iteration $t = |G_1|$, all
but less than $\frac{1}{2}\min_i |C_i|$ good points in $G_1$ have been assigned to a blob in the list.
Denote the points that have not yet been assigned as $E$. Any point $p \in E$ has no neighbors outside $C_1 \cup B$.
Then $|N_{\gamma t}(p) \setminus \blobs| \leq |E| + |B| \leq \frac{1}{2}\min_i |C_i|+ 2u_B$.
Also, there exists a blob $C$ containing good points from $C_1$ such that $C \in \blobs_p$.
Otherwise, $N_{\gamma t}(p)$ contains at most $(\gamma-\frac{3}{5}) (|C_1 \cup B|) < \gamma |C_1| - \frac{1}{2}|C_1| - 2u_B$ points in $C_1 \cap \blobs$,
while it contains at most $|E|$ good points in $C_1 \setminus \blobs$ and contains no points outside $C_1 \cup B$.
In total, $N_{\gamma t}(p)$ has less than $\gamma t$ points, which is contradictory.
So $\blobs_p \neq \emptyset$ and $p$ will be added to the list in Step~\ref{step:insert}.

We then iterate the argument on the remaining set $A_S$. The key point is that for $t \geq |G_i|, i > 1$, we have
that all the good points in $C_1, C_2, \dots, C_i$ have already been assigned to blobs in $\blobs$.
\end{proof}

Lemma~\ref{lem:strictSep} and~\ref{lem:bloblist} show that Algorithm~\ref{alg:aekmedian_blobs} with parameters $u_B=\epsilon n$
and $\gamma=1$ produces a list of sufficiently large, almost pure blobs. Then the robust linkage procedure in~\cite{nina_colt_2010}
can build a tree on these blobs with a pruning that assigns all good points correctly.
Now it suffices to show that this pruning is a good approximation,
for which we need to bound the cost increased by the bad points assigned incorrectly.
The following property of these bad points turns out to be useful.
Intuitively, Algorithm~\ref{alg:aekmedian_blobs} is designed such that
whenever a bad point is added to a blob containing good points from a different cluster,
it must be closer to a significant number of points in that cluster than to a significant number of points
in its own cluster. Then the cost increased by incorrectly assigning each such bad point is small, resulting in a good approximation.

\begin{lemma}\label{lem:badpoints}
Suppose for any good point $p \in G_i$, all its $|G_i|$ nearest neighbors in $S$ are from $C_i \cup B$,
and $\min_i |C_i| > 30 u_B$. When running Algorithm~\ref{alg:aekmedian_blobs} with $\gamma=1$, if a bad
point $q \in B_i$ is assigned to a blob $C$ containing good points from a different optimal clustering $C_j$,
then there exist $m=\frac{1}{5}\min_i |C_i|$ points $Z_i$ from $C_i$, and $m$ points $Z_j$ from $C_j$,
such that $d(q, Z_i) \geq d(q,Z_j)$.
\end{lemma}

\begin{proof}
There are two cases: $q$ is added into $C$ in (1) Step~\ref{step:newblob} or (2) Step~\ref{step:insert}.

\paragraph{Case 1}
There must be a path in $H_t$ connecting $q$ to a good point in $C_j$ at threshold $t$.
For any edge $(x,y)$ in $H_t$, since $x,y$ share at least $\epsilon n$ neighbors in $F_t$ and
there are at most $\epsilon n$ bad points, they share at least one good point as neighbor in $F_t$.
As shown in the proof of Lemma~\ref{lem:bloblist}, no point can connect to good points from different clusters,
so in $F_t$ all points on the path must connect to good points in $C_j$.
In particular, $q$ is connected in $F_t$ to a good point $p \in G_j$.
Then $|N_{t}(p) \cap N_{t}(q)| > t - 2 u_B$.
Since $p$ is still in $A_S$, $t \leq |G_j|$, and thus $N_{t}(p)$ contains no points outside $C_j \cup B$.
This means that at least $t - 3 u_B \geq m$ points in $N_{t}(q)$ are good points in $C_j$,
then we can select $m$ points $Z_j$ from $N_{t}(q) \cap G_j$.
We also have that at most $2 u_B$ points in $N_{t}(q)$ are points in $C_i$,
so we can select $m$ points $Z_i$ from $C_i \setminus N_{t}(q)$.

\paragraph{Case 2}
There are three subcases when $q$ is inserted into $C$ at threshold $t$.
\begin{itemize}
\item[(1)] There is no good points from $C_i$ in the list.
Since $|N_t(q) \setminus \blobs| \leq \frac{1}{2} \min_i |C_i| + 2u_B$,
$N_t(q)$ contains at most this number of good points in $C_i$.
This means at least $\frac{1}{2} \min_i |C_i| - 2 u_B > m$ good points in $C_i$ are outside $N_t(q)$,
from which we can select $Z_i$.
On the other hand, we can select $Z_j$ as follows.
When inserting $q$ into $C$, we have $|N_t(q) \cap C| \geq \frac{2}{5}|C| \geq m + u_B$.
Since $C$ contains only good points from $C_j$ and some bad points, $N_t(q) \cap C$ contains at least $m$
good points in $C_j$, from which we can select $Z_j$.
Since $Z_j$ are from $N_t(q)$ and $Z_i$ are outside $N_t(q)$, we have $d(q,Z_i) \geq d(q,Z_j)$.
\item[(2)] There exists $C' \in \blobs$ containing good points from $C_i$, but $C' \not\in \blobs_p$.
This means $|B(q,t) \cap C'| \leq \frac{2}{5}|C'|$, so there are at least $\frac{3}{5}|C'| \geq m + u_B$ points in $C'$ are outside $N_t(q)$.
At least $m$ of these points are good points from $C_i$, since $C'$ contains only good points from $C_i$ and at most $u_B$ bad points. So, we can select $Z_i$ from them.
On the other hand, we can select $Z_j$ as in the first subcase.
\item[(3)] There exists $C' \in \blobs_p$ containing good points from $C_i$.
Since $q$ is assigned to $C$ rather than $C'$ according to median distances,
we know that at least half of the points $Z'_j$ from $C$ are closer to $q$ than at least half of the points $Z'_i$ from $C'$.
Since there are at most $u_B$ bad points, we can select $m$ good points $Z_j$ from $Z'_j$ and select $m$ good points $Z_i$ from $Z'_i$.
Note that $Z_j$ are all from $G_j$ and $Z_i$ are all from $G_i$, so $d(q,Z_i) \geq d(q,Z_j)$.
\end{itemize}
Therefore, the statement is true in all cases.
\end{proof}

\begin{theorem}\label{thm:aekmedian}
If the clustering instance is $(\alpha,\epsilon)$-perturbation resilient for $\alpha > 2+\sqrt{3}$ and $\epsilon \leq \rho/30$ where $\rho =\frac{\min_i |C_i|}{n}$,
then Algorithm~\ref{alg:aekmedian_blobapprox} produces a clustering which is $(1+\frac{5\epsilon}{\rho})$-approximation to the optimal clustering with
respect to the $k$-median objective in polynomial time.
\end{theorem}

\begin{proof}
By Lemma~\ref{lem:strictSep} and~\ref{lem:bloblist},
Algorithm~\ref{alg:aekmedian_blobs} partitions the points into a list of blobs, each of which has size at least $\frac{1}{2}\min_i|C_i|$
and contains only good points from one optimal cluster.
Let $B'_i$ denote the bad points that are assigned to blobs containing good points in $C_i$.
By Lemma~\ref{lem:strictSep}, Theorem 9 in~\cite{nina_colt_2010} can be applied to $\blobs$,
by which we know that $\{(C_i \cap G) \cup B'_i\}$ is a pruning of the tree.
Suppose the cost of the optimum is $\OPT$.
We now show that this pruning, using the original centers $\{c_i\}$, is a $(1+\frac{5\epsilon}{\rho})$-approximation to $\OPT$.

Suppose a bad point $q \in C_i$ is assigned to a blob $C$ containing good points from a different optimal cluster $C_j$.
By Lemma~\ref{lem:badpoints}, there exist $m=\frac{1}{5}\min_i |C_i|$ points $Z_i$ from $C_i$, and $m$ points $Z_j$ from $C_j$,
such that $d(q, Z_i) \geq d(q,Z_j)$.
Then the increase in cost due to $q$ is bounded as follows:
\begin{eqnarray*}
d(q,c_j) - d(q,c_i) & \leq & \frac{d(q, Z_j) + d(c_j, Z_j)}{m} - \frac{d(q, Z_i) - d(c_i, Z_i)}{m} \\
& \leq & \frac{1}{m}[ d(c_j, Z_j) + d(c_i, Z_i)] \leq \frac{\OPT}{m}.
\end{eqnarray*}
As there are at most $\epsilon n$ bad points and $m = \frac{\min_i |C_i|}{5}$, the increase of cost is at most $\frac{\epsilon n}{m} \OPT = \frac{5\epsilon}{\rho} \OPT$.

\paragraph{Running Time} In Algorithm~\ref{alg:aekmedian_blobs}, for each $p \in S$, we first
sort all the other points in ascending order of distances in time $O(n^2 \log n)$.
At each threshold $t$, think of a directed $t$-regular graph $E_t$, where, for each point $q$ in the $t$ nearest neighbors
of a point $p$, there is a directed edge from $p$ to $q$ in $E_t$.
Let $A_{E}$ denote the adjacency matrix for $E_t$, and let $N= A_{E} A^\top_{E}$.
Then $N_{pq}$ is the number of common neighbors between $p$ and $q$, which can be used in constructing $F_t$.
Computing $N$ takes time $O(n^\omega)$, \yingyu{where $\omega$ is the matrix multiplication exponent.}
The same method can be used to compute the number of common neighbors in $F_t$ and construct $H_t$.
Since there are $O(n)$ thresholds, the total time for constructing $F_t$ and $H_t$ is $O(n^{\omega+1})$.
For the other steps, adding a blob takes time $O(n^2)$ and inserting a point takes time $O(n^2)$.
These steps can be performed at most $O(n)$ times, so they take $O(n^3)$ time.
In total, Algorithm~\ref{alg:aekmedian_blobs} takes time $O(n^{\omega+1})$.
Since the robust linkage algorithm~\cite{nina_colt_2010} takes time at most $O(n^{\omega+1})$,
and the dynamic programming takes time $O(n^3)$ (Appendix~\ref{subsec:dp}),
the running time of Algorithm~\ref{alg:aekmedian_blobapprox} is $O(n^{\omega+1})$.
\end{proof}


\subsection{Sublinear Time Algorithm for the $k$-Median Objective}\label{sec:inductive}

Consider a clustering instance $(X, d)$ that is $(\alpha,\epsilon)$-perturbation resilient to $k$-median.
For simplicity, suppose the distances are normalized such that $\max_{p,q} d(p,q) = 1$.
Let $N = |X|$. Let $\rho = \min_i |C_i| / N$ denote the fraction of the points in the smallest cluster,
$\zeta = \Phi_X(\mathbf{c})/N$ denote the average cost of the points in the optimum clustering.

\begin{algorithm}[!tbhp]
\caption{$k$-median, $(\alpha,\epsilon)$ perturbation resilience, sublinear}\label{alg:kmedian_Inductive}
\begin{algorithmic}[1]
\REQUIRE{Data set $X$, distance function $d(\cdot, \cdot)$ on $X$, $\min_i |C_i|$.}
\STATE{Draw a sample $\data$ of size $n = \Theta(\frac{k}{\epsilon^2 \zeta}\ln\frac{N}{\delta})$ i.i.d.\ from $X$.}
\STATE{Run Algorithm~\ref{alg:aekmedian_blobapprox}  on $\data$ to obtain ${\cal \tilde{C}}$ and $\tilde{\mathbf{c}}$.}
\ENSURE{The implicit clustering obtained by assigning each point in $X$ to its nearest neighbor in $\tilde{\mathbf{c}}$.}
\end{algorithmic}
\end{algorithm}

\begin{theorem}\label{inductive-median-theorem}
Suppose $(X,d)$ is $(\alpha, \epsilon)$-perturbation resilient for
$\alpha > 2+\sqrt{3}$, $\epsilon < \rho /100$.
Then with probability $\geq 1-\delta$, Algorithm~\ref{alg:kmedian_Inductive} outputs an implicit clustering
that is $2(1 + \frac{16\epsilon}{\rho} )$-approximation in time
$poly(\log\frac{N}{\delta}, k, \frac{1}{\epsilon}, \frac{1}{\zeta})$.
\end{theorem}

\begin{proof}
It suffices to show that $\Phi_S(\tilde{\mathbf{c}})$ is close to $\Phi_S(\mathbf{c})$ where $\mathbf{c}$ are the optimal centers for $X$. Note that Algorithm~\ref{alg:aekmedian_blobapprox} builds a tree with a pruning $\mathcal{P'}$ that assigns all good points correctly.
The key is to use the cost of this pruning as a bridge for $\Phi_S(\tilde{\mathbf{c}})$ and $\Phi_S(\mathbf{c})$: on one hand,
$\Phi_S(\tilde{\mathbf{c}})$ is no more than the cost of $\mathcal{P'}$ since $\tilde{\mathbf{c}}$ is the centers in the minimum cost pruning; on the other hand,
the cost of $\mathcal{P'}$ is roughly bounded by twice $\Phi_S(\mathbf{c})$ by the triangle inequality.

Formally, recall the following notations. If we partition $A$ into $\mathcal{P}$, the cost using centers $\mathbf{p}$ is denoted as $\Phi_A(\mathcal{P},\mathbf{p})$.
If we partition $A$ by assigning points to nearest centers in $\mathbf{p}$, the cost is denoted as $\Phi_A(\mathbf{p})$.
We will show that the cost of the implicit clustering $\Phi_X(\tilde{\mathbf{c}})$ approximates the optimum $\Phi_X(\mathbf{c})$.

First, we will prove that when $n$ is sufficiently large, with high probability,
$\Phi_X(\tilde{\mathbf{c}}) / N \approx \Phi_S(\tilde{\mathbf{c}}) / n$
and $\Phi_X(\mathbf{c}) / N \approx \Phi_S(\mathbf{c}) / n $.
For every set of centers $\mathbf{p}$, if $n = \Theta(\frac{k }{ \upsilon^2\zeta}\log \frac{N}{\delta})$ where $0 < \upsilon <1$, then by the Chernoff bound,
\begin{eqnarray*}
\mathrm{Pr}\biggl [\biggl|\frac{\Phi_S(\mathbf{p})}{n} - \frac{\Phi_X(\mathbf{p} )}{N} \biggr | > \upsilon \frac{\Phi_X(\mathbf{p} )}{N}\biggr]
\leq 2 \exp\left\{-\frac{\upsilon^2}{3} \frac{\Phi_X(\mathbf{p} )}{N} n\right\}
\leq 2 \exp\left\{-\frac{\upsilon^2}{3} \zeta n\right\}
 \leq   \frac{\delta}{4N^k}.
\end{eqnarray*}
By the union bound, we have with probability at least $1-\delta/4$,
$$
(1 - \upsilon) \frac{\Phi_X(\tilde{\mathbf{c}})}{N} \leq \frac{\Phi_S(\tilde{\mathbf{c}}) }{ n},~\textrm{and}~ \frac{\Phi_S(\mathbf{c}) }{n} \leq (1+\upsilon) \frac{\Phi_X(\mathbf{c}) }{N}.
$$
We can choose $\upsilon=\epsilon/20$,
then it is sufficient to show $ \Phi_S(\tilde{\mathbf{c}}) \leq  2(1 + {12\epsilon}/{ \rho})\Phi_S(\mathbf{c})$.

Next, since ${\cal \tilde{C}}$ may be different from $\mathcal{C} \cap S$,
we need to find a bridge for comparing $ \Phi_S(\tilde{\mathbf{c}})$ and $\Phi_S(\mathbf{c})$.
Now, we turn to analyze Algorithm~\ref{alg:aekmedian_blobapprox} on $S$ to find such a bridge.
First, we know that $X$ has at most $\epsilon N$ bad points.
Since $n$ is sufficiently large,
with probability at least $1-\delta/4$,
$S$ has at most $2\epsilon n$ bad points.
Similarly, with probability at least $1-\delta/4$, for any $i,|C_i \cap S| > 60\epsilon n$.
These ensure that Algorithm~\ref{alg:aekmedian_blobapprox} can successfully produce a tree with a pruning
$\mathcal{P'}$ that assigns all good points in $S$ correctly, as shown in Theorem~\ref{thm:aekmedian}.
Suppose in $S$, $\mathbf{c'}$ are the optimal centers for $\mathcal{P'}$.
Then we can use $\Phi_S(\mathcal{P'}, \mathbf{c'})$ as a bridge for comparing $ \Phi_S(\tilde{\mathbf{c}})$ and $\Phi_S(\mathbf{c})$.

On one hand, $\Phi_S(\tilde{\mathbf{c}}) \leq \Phi_S({\cal \tilde{C}}, \tilde{\mathbf{c}}) \leq \Phi_S(\mathcal{P'}, \mathbf{c'})$.
The first inequality comes from the fact that in $\Phi_S(\tilde{\mathbf{c}})$ each point is assigned to its nearest center
and the second comes from that ${\cal \tilde{C}}$ is the minimum cost pruning.

On the other hand, $\Phi_S(\mathcal{P'}, \mathbf{c'}) \leq 2\Phi_S(\mathcal{P'}, \mathbf{c}) \leq 2(1 + {12\epsilon}/{ \rho})\Phi_S(\mathbf{c})$.
The second inequality comes from an argument similar to that in Theorem~\ref{thm:aekmedian}
and the fact that $\Phi_S(\mathcal{P'}, \mathbf{c})$ is different from $\Phi_S(\mathbf{c})$ only on the bad points.
The first inequality comes from the triangle inequality. More precisely,
for any $N'_i \in \mathcal{P'}$,
\begin{eqnarray*}
2 |N'_i| \sum_{p \in N'_i} d(p, c_i) & = & \sum_{q \in N'_i}\sum_{p \in N'_i} (d(p, c_i) + d(q, c_i)) \\
& \geq & \sum_{p \in N'_i} \sum_{q \in N'_i} d(p,q)
\geq \sum_{p \in N'_i} \sum_{q \in N'_i} d(q, c'_i) = |N'_i|\sum_{q \in N'_i} d(q, c'_i).
\end{eqnarray*}
This completes the proof.
\end{proof}

{\note} 
If we have an oracle that given a set of points
$C_i'$ finds the best center {\em in $X$} for that set, then we can save
a factor of $2$ \yingyu{in the approximation factor}.

\section{$\alpha$-Perturbation Resilience for the Min-Sum Objective}\label{section-minsum}

In this section we provide an efficient algorithm for clustering $\alpha$-perturbation resilient instances for the metric min-sum $k$-clustering problem (Algorithm~\ref{algminsum}).
Recall that $\ds(A,B) = \sum_{p\in A, q \in B} d(p,q)$ and $\da(A,B) = \ds(A,B) / (|A||B|)$.

\begin{algorithm}[!tbhp]
\caption{Min-sum, $\alpha$ perturbation resilience}\label{algminsum}
\label{AverageLinkage}
\begin{algorithmic}[1]
\REQUIRE{Data set $\data$, distance function $d(\cdot, \cdot)$ on $\data$, $\min_i |C_i|$.}
\STATE{Connect each point with its $\frac{1}{2}\min_i |C_i|$ nearest neighbors.}\label{componentConstruct}
\STATE{Initialize the clustering $\mathcal{C'}$ with each connected component being a cluster.}
\STATE{Repeat until only one cluster remains in $\mathcal{C'}$: \\ merge clusters $C,C'$ in $\mathcal{C'}$ which minimize $\da(C,C')$.}\label{componentMerge}
\STATE{Let $\mathcal{T}$ be the tree with components as leaves and internal nodes corresponding to the merges performed.}
\STATE{Run dynamic programming on $\mathcal{T}$ to get the minimum min-sum cost pruning ${\cal \tilde{C}}$.}
\ENSURE{${\cal \tilde{C}}$.}
\end{algorithmic}
\end{algorithm}

\begin{theorem}\label{thm:mainminsum}
For $(3\frac{\max_i |C_i|}{\min_i |C_i|})$-perturbation resilient instances, Algorithm~\ref{algminsum} outputs the optimal min-sum $k$-clustering in polynomial time.
\end{theorem}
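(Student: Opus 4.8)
The plan is to show that Phase~1 of Algorithm~\ref{algminsum} builds a binary tree $T$ having the optimal clustering $\mathcal{C}$ as a pruning, after which Phase~2's dynamic program extracts it (note $\mathcal{C}$ is the \emph{unique} optimum under $d$, since the identity perturbation $d'=d$ would otherwise violate Definition~\ref{def:alphaPR}). Throughout write $m=\min_i|C_i|$, $M=\max_i|C_i|$, $\alpha=3M/(m-1)$ (assume $m\ge 2$, else the statement is vacuous), and $d_{sum}(A,B)=\sum_{p\in A}\sum_{q\in B}d(p,q)$. The one structural fact I would extract from perturbation resilience, in the spirit of Lemma~\ref{kmedian-lemma-basic}, is: \emph{for every optimal cluster $C_i$, every nonempty $A\subsetneq C_i$, and every $j\neq i$, $d_{sum}(A,C_j)>\alpha\,d_{sum}(A,C_i\setminus A)$.} To prove it, define the legal perturbation $d'$ that multiplies $d(p,q)$ by $\alpha$ whenever $p,q\in C_i$ and leaves every other distance unchanged; comparing the $d'$-cost of $\mathcal{C}$ with that of the clustering obtained by moving $A$ from $C_i$ into $C_j$, all within-$C_i$ and within-$C_j$ terms cancel and the difference equals $2\bigl(d_{sum}(A,C_j)-\alpha\,d_{sum}(A,C_i\setminus A)\bigr)$, which is positive because $\mathcal{C}$ is still the unique $d'$-optimum. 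Applied to a singleton $A=\{p\}$, together with a twin inequality for a point $q$ and the triangle inequality, this also yields the pointwise bound used in Phase~1.

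\textbf{Phase~1: components lie inside clusters.} I would show that for every $p\in C_i$ all of $p$'s $m/2$ nearest neighbors lie in $C_i$, so the neighbor graph has no cross-cluster edge and each connected component is contained in one $C_i$; moreover each component has more than $m/2$ points (it contains $p$ and $p$'s $m/2$ nearest neighbors). Suppose instead some $q\in C_j$ ($j\neq i$) is among $p$'s $m/2$ nearest neighbors. Then at least $m/2$ points of $C_i\setminus\{p\}$ are at distance $\ge d(p,q)$ from $p$, so $d_{sum}(p,C_i\setminus\{p\})\ge (m/2)\,d(p,q)$. Combining this with the structural inequalities for $p$ and for $q$ and the triangle inequality (to pass between $d_{sum}(q,\cdot)$ and $d_{sum}(p,\cdot)$), the $d(p,q)$ factors cancel and one obtains $|C_i|+\alpha|C_j|>(\alpha^2-1)\,m/2$, i.e.\ $\alpha<1+2M/m$. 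But $\alpha=3M/(m-1)>1+2M/m$ (equivalently $M(m+2)>m(m-1)$, true since $M\ge m$), a contradiction.

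\textbf{Average-linkage step: laminarity is preserved.} I would prove by induction on the merges, with the additional invariant that every current node has more than $m/2$ points (true initially, trivially preserved), that merging the minimum-$d_{avg}$ pair keeps the current clustering laminar with respect to $\mathcal{C}$. A merge of the minimum pair $(D,E)$ can break laminarity only if (say) $D\subsetneq C_i$ and $E\not\subseteq C_i$; since $D\subsetneq C_i$, by laminarity-so-far $C_i\setminus D$ is tiled by current nodes, so minimality of $(D,E)$ forces $d_{avg}(D,E)\le d_{avg}(D,C_i\setminus D)$ (a convex combination of the $d_{avg}(D,\cdot)$ over that tiling). If $E$ is a whole cluster or a union of whole clusters (none equal to $C_i$), the structural inequality gives $d_{sum}(D,E)>\alpha\,d_{sum}(D,C_i\setminus D)$ while $|E|$ is at most $M$ times the number of clusters inside $E$, so $d_{avg}(D,E)>\frac{\alpha}{M}\cdot\frac{d_{sum}(D,C_i\setminus D)}{|D|}\ge\frac{\alpha}{M}|C_i\setminus D|\min_{D'}d_{avg}(D,D')>\min_{D'}d_{avg}(D,D')$, using $|C_i\setminus D|>m/2$ and $\alpha m/(2M)>1$; hence $(D,E)$ is not the minimum pair, a contradiction. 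The symmetric case $D=C_i$, $E\subsetneq C_j$ is dispatched the same way inside $C_j$.

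\textbf{The main obstacle} is the remaining case $D\subsetneq C_i$ and $E\subsetneq C_j$ with $i\neq j$. Here I would use \emph{six} relations: minimality gives $d_{avg}(D,E)\le d_{avg}(D,C_i\setminus D)$ and $d_{avg}(D,E)\le d_{avg}(E,C_j\setminus E)$; the structural property gives $d_{sum}(D,C_j)>\alpha\,d_{sum}(D,C_i\setminus D)$ and $d_{sum}(E,C_i)>\alpha\,d_{sum}(E,C_j\setminus E)$; and two triangle-inequality estimates relate $d_{sum}(D,C_j\setminus E)$ to $d_{sum}(D,E)$ and $d_{sum}(E,C_j\setminus E)$ (and symmetrically on the $C_i$ side). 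Eliminating the auxiliary sums and using that all of $|D|,|E|,|C_i\setminus D|,|C_j\setminus E|$ exceed $m/2$ while $|C_i|,|C_j|\le M$, everything collapses to $(\alpha^2-1)\,m/2<(1+\alpha)M$, i.e.\ again $\alpha<1+2M/m$, contradicting $\alpha=3M/(m-1)$. I expect this bookkeeping—choosing exactly which triangle inequalities to write and in which order to eliminate—to be the delicate part; the earlier symmetric-case check ($a=b=g=h$, $\alpha\approx3$) already shows the inequalities are tight enough to close. With laminarity established, $\mathcal{C}$ is a pruning of $T$, so the Phase~2 dynamic program returns the minimum-cost $k$-pruning, which is $\mathcal{C}$; all steps are polynomial time (the $O(n^3)$ implementation and the DP details are deferred to the appendix).
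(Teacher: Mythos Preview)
Your proposal is correct and follows essentially the same route as the paper. The key structural inequality $d_{sum}(A,C_j)>\alpha\,d_{sum}(A,C_i\setminus A)$, the nearest-neighbor lemma for Phase~1, and the laminarity argument for the average-linkage merges (splitting into the ``strict subset vs.\ strict subset'' case and the ``strict subset vs.\ union of whole clusters'' case) all mirror the paper's Fact~\ref{lemma:minsumBasicFacts} and the proof of Theorem~\ref{thm:mainminsum}; your ``main obstacle'' computation is exactly the paper's derivation of Fact~\ref{lemma:minsumBasicFacts}(3). The only cosmetic differences are that you blow up \emph{all} intra-$C_i$ distances rather than just the $A$-to-$(C_i\setminus A)$ ones (both perturbations yield the same inequality after cancellation), and in the nearest-neighbor step you eliminate via a symmetric ``multiply by $\alpha$ and add'' rather than the paper's two-case comparison of $d_{avg}(p,C_i\setminus\{p\})$ and $d_{avg}(p',C_j\setminus\{p'\})$.
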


\paragraph{Intuition}
To prove the theorem,
first we show that the $\alpha$-perturbation resilience property implies the following (Claim~\ref{lemma:minsumBasicFacts1}):
for any two different optimal clusters $C_i$ and $C_j$ and any
$A \subseteq C_i$, we have $\alpha \ds(A, C_i \setminus A) < \ds(A, C_j)$. This follows by considering the perturbation where $d'(p,q) = \alpha d(p,q)$ if $p\in A, q\in C_i \setminus A$ and $d'(p,q) = d(p,q)$ otherwise, and using the fact that the optimum  does not change after the perturbation.
This can be used to show that
when $\alpha > 3\frac{\max_{i'} |C_{i'}|}{ \min_{i'} |C_{i'}| }$, we have the following (Claim~\ref{lemma:minsumBasicFacts23}): (1) for any optimal clusters $C_i$ and $C_j$ and any
$A_i \subseteq C_i$, $A_j \subseteq C_j$ such that $\min(|C_i \setminus A_i|, |C_j \setminus A_j|)>\min_i |C_i|/2$ we have   $\da(A_i, A_j) > \min\{\da(A_i, C_i\setminus A_i), \da(A_j, C_j \setminus A_j)\}$; (2) for any point $p$  in the optimal cluster $C_i$, twice its average distance to points in $C_i \setminus \{p\}$ is smaller than the distance to any point in other optimal cluster $C_j$. Claim (2) implies that for any point $p \in C_i$ its $|C_i|/2$ nearest neighbors are in the same optimal cluster, so the leaves of the tree $\mathcal{T}$
are laminar to the optimum clustering. Claim (1) can be used to show that the merge steps preserve the laminarity with the optimal clustering, so  the minimum cost pruning of $\mathcal{T}$ will be the optimal clustering, as desired.

\begin{proofof}{Theorem~\ref{thm:mainminsum}}
We now present the formal proof of the theorem.
We first prove the key claims mentioned in the intuition.

\begin{fact}\label{fac:minsumBasicFacts}
For any nonempty sets $A,C,D$ such that $D \subseteq C$ we have\\
$
|D|\ds(A,C) \leq |C|\ds(A,D) + |A|\ds(D, C \setminus D).
$
\end{fact}
\begin{proof}
By the triangle inequality, for any $ p\in A, q \in C\setminus D$ and $z \in D$,
$$
  \ds(p,q) \leq \ds(p,z) + \ds(z,q).
$$
Summing over all $ p\in A, q \in C\setminus D$ and $z \in D$, we have
$$
  |D| \ds(A,C\setminus D) \leq |C\setminus D| \ds(A,D)+  |A| \ds(D,C\setminus D).
$$
Plugging in $\ds(A,C\setminus D) = \ds(A,C) - \ds(A,D)$ leads to the statement.
\end{proof}

\begin{claim}\label{lemma:minsumBasicFacts1}
Suppose the clustering instance is $\alpha$-perturbation resilient to the min-sum objective.
For any two different optimal clusters $C_i$ and $C_j$ and any
$A \subseteq C_i$, we have $\alpha \ds(A, C_i \setminus A) < \ds(A, C_j)$.
\end{claim}
\begin{proof}
We consider a specific perturbation and use the fact that the optimum does not change after the perturbation.
The \yingyu{perturbed metric} is defined as:
\begin{eqnarray*}
d'(p,q)  = 
\begin{cases}
\alpha d(p,q) & \textrm{if } p\in A, q\in C_i \setminus A  \textrm{ or } q\in A, p\in C_i \setminus A,\\
d(p,q) & \textrm{otherwise.}
\end{cases}
\end{eqnarray*}
Since $d'$ is a valid $\alpha$-perturbation of $d$, the optimal clustering after perturbation should remain the same.
In particular, its cost should be smaller than that of the clustering obtained
by replacing $C_i$ and $C_j$ with $C_i \setminus A$ and $A\cup C_j$.
After canceling the terms common in the two costs, we have $2\dps(A, C_i \setminus A) < 2\dps(A, C_j)$,
which then leads to $\alpha \ds(A, C_i \setminus A) < \ds(A, C_j)$.
\end{proof}

\begin{claim}\label{lemma:minsumBasicFacts23}
Suppose the clustering instance is $\alpha$-perturbation resilient to min-sum
for $\alpha > 3\frac{\max_{i'} |C_{i'}|}{ \min_{i'}|C_{i'}|}$. Then the following statements are true:
\begin{itemize}
\item[(1)] For any two different optimal clusters $C_i$ and $C_j$ and any \yingyu{nonempty}
$A_i \subseteq C_i, A_j \subseteq C_j$, if $|C_i \setminus A_i|$ and $|C_j \setminus A_j|$ are
larger than $\min_{i'} |C_{i'}|/2$, then $$\da(A_i, A_j) > \min\{\da(A_i, C_i \setminus A_i), \da(A_j, C_j \setminus A_j)\}.$$
\item[(2)] For any point $p$, all its $\min_{i'} |C_{i'}|/2$ nearest neighbors are in the same optimal cluster.
\end{itemize}
\end{claim}

\begin{proof}
(1) Let $\overline{A}_i \defeq C_i \setminus A_i$ and $\overline{A}_j \defeq C_j \setminus A_j$.
By Claim~\ref{lemma:minsumBasicFacts1} and Fact~\ref{fac:minsumBasicFacts},
\begin{eqnarray}
\alpha \ds(A_i, \overline{A}_i) < \ds(A_i,C_j) \leq \frac{1}{|A_j|}\biggl(|C_j|\ds(A_i, A_j) + |A_i|\ds(A_j, \overline{A}_j)\biggr),\label{eqn:minsum-left}\\
\alpha \ds(A_j, \overline{A}_j) < \ds(A_j,C_i) \leq  \frac{1}{|A_i|}\biggl(|C_i|\ds(A_j, A_i) + |A_j|\ds(A_i, \overline{A}_i)\biggr)\label{eqn:minsum-right}.
\end{eqnarray}
Divide Inequality~(\ref{eqn:minsum-left}) by $|A_i|$, divide Inequality~(\ref{eqn:minsum-right}) by $|A_j|$,
add them up, and move the $d(A_j, \overline{A_j})$ and $d(A_i, \overline{A_i})$ terms to the left-hand side:
\begin{eqnarray*}
(\alpha - 1)  |\overline{A}_j| \da(A_j, \overline{A}_j)
+ (\alpha - 1) |\overline{A}_i| \da(A_i, \overline{A}_i) &<&
(|C_i| + |C_j|) \da(A_i, A_j).
\end{eqnarray*}
Since $\alpha$,
$|\overline{A}_j|$ and $|\overline{A}_i|$ are large enough,
$(\alpha - 1) |\overline{A}_i| > |C_i|$ and  $(\alpha - 1) |\overline{A}_j| > |C_j|$.
So, 
$$
|C_j| \da(A_j, \overline{A}_j) + |C_i|\da(A_i, \overline{A}_i) < (|C_i|  + |C_j|)\da(A_i,A_j).
$$  
Dividing by $(|C_i|+|C_j|)$ we see that $\da(A_i,A_j)$ is greater than a weighted average of $\da(A_j, \overline{A}_j)$ and $\da(A_i, \overline{A}_i)$ and so is certainly greater than the minimum as desired.

(2) Suppose $p$ comes from the optimal cluster $C_i$.
Let $q =\arg\min_{p'\not\in C_i}d(p,p')$, and suppose $q\in C_j$.

If $\da(p, C_i) \geq \da(q, C_j)$,
then by Inequality~(\ref{eqn:minsum-left}),
\begin{eqnarray*}
\alpha \ds(p, C_i) \leq   |C_j| d(p, q) + \ds(q, C_j) =   |C_j| (d(p, q) + \da(q, C_j)) \leq |C_j| (d(p, q) + \da(p, C_i))
\end{eqnarray*}
which leads to $\da(p, C_i) < d(p,q)/2$ since $\alpha$ is sufficiently large.

If $\da(p, C_i) < \da(q, C_j)$, then we have $ \da(q,C_j) < d(p,q)/2$
by a similar argument.

In conclusion, we always have $\da(p, C_i)  < d(p,q)/2$. \yingyu{By the Markov inequality, at least half of all points in $C_i$ lie at distance at most $d(p, q) = \min_{p'\not \in C_i} d(p,p')$ from $p$.}
\end{proof}

We are now ready to use the lemmas to prove our theorem.
It is sufficient to show that in Algorithm~\ref{AverageLinkage}:
\begin{itemize}
\item[(1)] Initially each $A \in {\cal C}'$ satisfies $A \subseteq C$ for some $C \in {\cal C}$;
\item[(2)] $\mathcal{C'}$ is always laminar to the optimal clustering $\mathcal{C}$,
that is, for any $A \in \mathcal{C'}$ and $C \in \mathcal{C}$, we have either $A \subseteq C$, or $C\subseteq A$, or $A \cap C = \varnothing$.
\end{itemize}
Then the minimum cost pruning of $\mathcal{T}$ will be the optimal clustering, which can be obtained by dynamic programming.

Claim~\ref{lemma:minsumBasicFacts23}(2) implies that initially each $A \in {\cal C}'$ satisfies $A \subseteq C$ for some $C \in {\cal C}$.  This means that $\mathcal{C'}$ is laminar initially. Then Claim~\ref{lemma:minsumBasicFacts23}(1) can be used to show that the merge steps preserve the laminarity, so $\mathcal{C'}$ is always laminar to the optimal clustering.

More precisely, we prove the laminarity by induction.
By Claim~\ref{lemma:minsumBasicFacts23}(2), $\mathcal{C'}$ is laminar initially.
It is sufficient to prove that if the current clustering is laminar, then the merge step keeps the laminiarity.
Assume that our current clustering $\mathcal{C'}$ is laminar to the optimal clustering.
Consider a merge of two clusters $A$ and $A'$.
There are two cases when laminarity could fail to be satisfied after the merge:
\begin{itemize}
\item[(1)] \yingyu{$A$ and $A'$} are strict subsets from different optimal clusters, that is, $A \subsetneq C_i, A' \subsetneq C_j \neq C_i$;
\item[(2)] $A$ is a strict subset of an optimal cluster $C_i$ and $A'$ is the union of one or several other optimal cluster(s).
\end{itemize}
In the first case, $C_i \setminus A_i$ ($C_j \setminus A'$ respectively) contains at least one cluster in $\mathcal{C}'$.  By construction, each cluster in $\mathcal{C}'$ is of size at least $\min_{i'} |C_{i'}|/2$, and thus $C_i \setminus A_i$ and $C_j \setminus A'$ are of size at least $\min_{i'} |C_{i'}|/2$.
Then by Claim~\ref{lemma:minsumBasicFacts23}(1), $A$ and $A'$ cannot be merged.
In the second case, for any $E$ that is a subset of $C_i \setminus A$ in the current clustering, we have $\da(A, E) \geq \da(A, A')$.
We know that $\da(A, C_i \setminus A)$ is a weighted average of the average distances between $A$ and
the clusters that are subsets of $C_i \setminus A$ in the current clustering, so $\da(A, C_i \setminus A) \geq \da(A, A')$.
Also, $\da(A, A')$ is
a weighted average of the average distances between $A$ and the optimal clusters in $A'$,
so there must exist an optimal cluster $C_j \subseteq A'$ such that $\da(A, C_j) \leq \da(A, A') \leq \da(A, C_i \setminus A)$.
This means
\begin{eqnarray*}
\ds(A, C_j) \leq \frac{|C_j|}{|C_i \setminus A|} \ds(A, C_i \setminus A) \leq \alpha \ds(A, C_i \setminus A)
\end{eqnarray*}
where the last inequality comes from $\alpha \geq 3\frac{\max_{i'} |C_{i'}|}{\min_{i'} |C_{i'}|}$
and $|C_i \setminus A| \geq \min_{i'} |C_{i'}|/2$.
This contradicts Claim~\ref{lemma:minsumBasicFacts1}.
So the merge of the two clusters $A$ and $A'$ will preserve the laminarity.

\paragraph{Running Time} Finding the nearest neighbors for each point takes $O(n \log n)$ time,
so the step of constructing components takes $O(n^2 \log n)$ time.
To compute average distances between clusters, we can record the size of each cluster, and
$\ds(C'_i, C'_j)$ for any $C'_i, C'_j$ in the current clustering, and update
$\ds(C'_i \cup C'_j, C'_l ) = \ds(C'_i, C'_l) + \ds(C'_j, C'_l)$
for any other cluster $C'_l$ when merging $C'_i$ and $C'_j$.
So the merge steps take $O(n^3)$ time.
As dynamic programming takes $O(n^3)$ time, we can find the optimum clustering in $O(n^3)$ time.
\end{proofof}

\subsection{Sublinear Algorithm for the Min-Sum Objective}\label{sec:msinductive}

Here we provide a sublinear algorithm for a clustering instance $(X, d)$
that is $\alpha$-perturbation resilient to the min-sum objective.
For simplicity, suppose the distances are normalized such that $\max_{p,q} d(p,q) = 1$.
Let $N = |X|$. Let
$\rho = {\min_i|C_i|}/{N}$ denote the fraction of the points in the smallest optimal cluster,
and $\eta = \min_{j} \min_{p \not\in C_j} \da(p,C_j)$ denote the minimum average distance between an optimal cluster and a point outside.

\begin{algorithm}[!tbhp]
\caption{Min-sum, $\alpha$ perturbation resilience, sublinear}\label{alg:minsumInductive}
\begin{algorithmic}[1]
\REQUIRE{Data set $X$, distance function $d(\cdot, \cdot)$ on $X$, $\min_i |C_i|$.}
\STATE{Draw a sample $\data$ of size $n = \Theta(\frac{1}{\rho\eta}\ln{\frac{Nk}{\delta}})$ i.i.d.\ from $X$.}
\STATE{Run Algorithm~\ref{AverageLinkage} on $\data$ to obtain ${\cal \tilde{C}}$.}
\ENSURE{The implicit clustering of $X$ obtained by assigning each point $p \in X$ to $\tilde{C}_i \in {\cal \tilde{C}}$ such that $\ds(p, \tilde{C}_i )$ is minimized.}
\end{algorithmic}
\end{algorithm}

Our main result in this subsection is the following.

\begin{theorem} \label{thm:minsum-inductive}
Suppose the clustering instance $(X,d)$ is $\alpha$-perturbation resilient to the min-sum objective where
$\alpha \geq 6\frac{\max_i|C_i|}{\min_i|C_i|}$.
Then with probability at least $ 1-\delta$, Algorithm~\ref{alg:minsumInductive} outputs an implicit optimum clustering
in time $poly(\log{\frac{Nk}{\delta}}, \frac{1}{\rho}, \frac{1}{\eta})$.
\end{theorem}

\begin{proof}
To prove the theorem, we first show the following (Lemma~\ref{lemma:minsumInductiveLaminar}): with high probability,
$\mathcal{C'}$ in Algorithm~\ref{AverageLinkage} is always laminar to $\mathcal{C} \cap S$. The key idea is that
when the sample is sufficiently large, we have that for any $p \in C_i$ and $C_j (j\neq i)$,
$$3\frac{\max_i|C_i \cap S|}{\min_i|C_i \cap S |} \ds(p, C_i \cap S) < \ds(p, C_j \cap S)$$
since $\frac{\ds(p, C_i \cap S)}{n} \approx \frac{\ds(p, C_i)}{N}$,
$\frac{\ds(p, C_j \cap S)}{n} \approx \frac{\ds(p, C_j)}{N}$
and $\frac{\max_i|C_i \cap S|}{\min_i|C_i \cap S|} \approx \frac{\max_i|C_i|}{\min_i|C_i|}$.
Then $\mathcal{C} \cap S$ satisfies the properties for the linkage in Algorithm~\ref{AverageLinkage} to succeed, and
thus $\mathcal{C'}$ in Algorithm~\ref{AverageLinkage} is always laminar to $\mathcal{C} \cap S$. Then $\mathcal{C} \cap S$ is a pruning of the tree.

Then we show that $\mathcal{C} \cap S$ is actually the minimum cost pruning ${\cal \tilde{C}}$ (Lemma~\ref{lemma:minsumInductivePruning}).
\yingyu{
The key idea is that any other pruning of the same size must join some clusters in $\mathcal{C} \cap S$ and at the same time split some other clusters. Since the clusters in $\mathcal{C} \cap S$ are far apart, joining different clusters in it will increase the cost significantly, while splitting clusters will only save a small amount. So any other pruning must have larger cost than $\mathcal{C} \cap S$. }
It immediately follows from the two lemmas that the implicit clustering obtained is the optimum clustering $\mathcal{C}$.
\end{proof}

We now present the proofs of the lemmas \yingyu{that imply} the correctness of the theorem.

\begin{lemma}\label{lemma:minsumInductiveLaminar}
Suppose the clustering instance is $\alpha$-perturbation resilient to the min-sum objective for
$\alpha \geq 6\frac{\max_i|C_i|}{\min_i|C_i|}$.
When $n = O(\frac{1}{\rho \eta}\ln{\frac{Nk}{\delta}})$, with probability at least $1-\delta$,
$\mathcal{C'}$ in Algorithm~\ref{AverageLinkage} is always laminar to $\mathcal{C} \cap S$.
\end{lemma}

\begin{proof}
The intuition is that on $X$, for any $i \neq j$, any $p \in C_i$, we have $\alpha \ds(p, C_i) < \ds(p, C_j)$.
When $n$ is sufficiently large, we can show $\ds(p, C_i \cap S) \approx \frac{n}{N}\ds(p, C_i)$ for any $i$
and $\frac{\max_{i'}|C_{i'} \cap S|}{\min_{i'}|C_{i'} \cap S|} \approx \frac{\max_{i'}|C_{i'}|}{\min_{i'}|C_{i'}|}$, and thus we have a similar claim for $S$.
Then $\mathcal{C'}$ in Algorithm~\ref{AverageLinkage} is always laminar to $\mathcal{C} \cap S$.

First, since $n = O(\frac{1}{\rho \eta}\ln{\frac{Nk}{\delta}})$, by the Chernoff bound we have
\begin{eqnarray*}
\mathrm{Pr}\biggl[\biggl |\frac{|C_i \cap S|}{n} - \frac{|C_i|}{N} \biggr |
 \geq \upsilon\frac{|C_i|}{N} \biggr]
  \leq  2 \exp\left\{-\frac{2\upsilon^2}{2+\upsilon}\frac{|C_i|}{N} n\right\}
  \leq  2 \exp\{-\frac{2\upsilon^2}{2+\upsilon} \rho n\} \leq \frac{\delta}{4k}.
\end{eqnarray*}
Set $\upsilon=1/20$. By the union bound, with probability at least $1-\delta/4$, for any $1\leq i\leq k$,
\begin{eqnarray}
\left(1-\frac{1}{20}\right)  \frac{n}{N} |C_i|\leq |C_i \cap S|\leq \left(1+\frac{1}{20}\right) \frac{n}{N}|C_i|.\label{eqn:size}
\end{eqnarray}

Similarly, with probability at least $1-\delta/4$, for any $i\neq j$ and $p \in C_i$,
\begin{eqnarray}
\ds(p,C_j \cap S) \geq \left(1-\frac{1}{20}\right)  \frac{n}{N}\ds(p,C_j).\label{eqn:distance1}
\end{eqnarray}
Now, fix any $i$ and $p \in C_i$. We have
\begin{eqnarray*}
\mathrm{Pr}\biggl[ \frac{\ds(p,C_i \cap S)}{n} - \frac{\ds(p,C_i)}{N} 
 \geq \upsilon\frac{\ds(p,C_i)}{N} \biggr]
  \leq  \exp\left\{-\frac{2\upsilon^2}{2+\upsilon}\frac{\ds(p,C_i)}{N}  n\right\}.
\end{eqnarray*}
Let $\upsilon = \frac{\min_{j\neq i} \ds(p,C_j) }{10\ds(p,C_i)}$. Since $\upsilon \geq 1/10$, 
\begin{eqnarray*}
&&\mathrm{Pr}\biggl[ \frac{\ds(p,C_i \cap S)}{n} - \frac{\ds(p,C_i)}{N} 
 \geq \frac{\min_{j\neq i} \ds(p,C_j)}{10 N} \biggr] 
  \leq  \exp\left\{-\frac{2\upsilon}{21}\frac{\ds(p,C_i)}{N}  n\right\} \\
	& \leq & \exp\left\{-\frac{2}{210}\frac{\min_{j\neq i}\ds(p,C_j)}{N}  n\right\} \leq \exp\left\{-\frac{2}{210}\min_{j\neq i}\da(p,C_j) \rho n\right\} \leq  \frac{\delta}{4Nk}.
\end{eqnarray*}
By the union bound, with probability at least $1-\delta/4$, for any $j \neq i$ and $p \in C_i$,

\begin{eqnarray}
\ds(p,C_i \cap S) \leq \frac{n}{N}\ds(p,C_i) + \frac{n}{10 N}\ds(p,C_j).\label{eqn:distance2}
\end{eqnarray}

Now, by (\ref{eqn:size}), we have $\max_{i'}|C_{i'} \cap S| \leq (1+\upsilon) \frac{n}{N}\max_{i'}|C_{i'}|$,
$\min_{i'}|C_{i'} \cap S| \geq (1-\upsilon) \frac{n}{N} \min_{i'}|C_{i'}|$.
\yingyu{Combining these bounds with (\ref{eqn:distance1}) and (\ref{eqn:distance2}),} we have
that with probability at least $1-\delta$, for any $ i \neq j $ and any $p \in C_i$,
\begin{eqnarray} \label{eqn:AverageLinkage_sub}
3\frac{\max_{i'}|C_{i'} \cap S|}{\min_{i'}|C_{i'} \cap S|} \ds(p, C_i \cap S) < \ds(p, C_j \cap S),
\end{eqnarray}
which guarantees that Algorithm~\ref{AverageLinkage} will successfully outputs $\mathcal{C} \cap S$.
\yingyu{
More precisely, the proof of Theorem~\ref{thm:mainminsum} only depends on Claim~\ref{lemma:minsumBasicFacts1}:
for any two different optimal clusters $C_i$ and $C_j$ and any
$A \subseteq C_i$, we have $\alpha \ds(A, C_i \setminus A) < \ds(A, C_j)$, where $\alpha \geq 3\frac{\max_{i'}|C_{i'}|}{\min_{i'}|C_{i'}|}$.  (\ref{eqn:AverageLinkage_sub}) implies that for any two different optimal clusters $C_i$ and $C_j$ and any
$A \subseteq C_i$, we have $3\frac{\max_{i'}|C_{i'} \cap S|}{\min_{i'}|C_{i'} \cap S|} \ds(A, (C_i \cap S) \setminus A) < \ds(A, C_j \cap S)$. Therefore, the same argument for Theorem~\ref{thm:mainminsum} can be used to show that Algorithm~\ref{AverageLinkage} successfully outputs $\mathcal{C} \cap S$.
}
\end{proof}

\begin{lemma}\label{lemma:minsumInductivePruning}
Suppose the clustering instance is $\alpha$-perturbation resilient to the min-sum objective for
$\alpha \geq 6\frac{\max_i|C_i|}{\min_i|C_i|}$.
When $n = O(\frac{1}{\rho \eta}\ln{\frac{Nk}{\delta}})$, with probability at least $1-\delta$,
$\mathcal{C} \cap S$ is the unique minimum min-sum cost pruning of the tree in Algorithm~\ref{AverageLinkage}.
\end{lemma}

\begin{proof}
Since the tree is laminar to $\mathcal{C} \cap S$, we know that $\mathcal{C} \cap S$ is a pruning of the tree,
and any other pruning can be obtained by
splitting some clusters in $\mathcal{C} \cap S$ and joining some others into unions.
Intuitively, since the clusters in $\mathcal{C} \cap S$ are far apart, joining different clusters in it will increase the cost significantly, while splitting clusters will only save a small amount. So any other pruning must have larger cost than $\mathcal{C} \cap S$.
This claim then implies $\mathcal{C} \cap S$ is the minimum cost pruning.
We first prove a similar claim for $\mathcal{C}$ by the $\alpha$-perturbation resilience, that is, 
for any three different clusters $C_i, C_j, C_l \in \mathcal{C}$, any $A_X \subseteq C_i$,
$\alpha \ds(A_X, C_i \setminus A_X) < \ds(C_j, C_l)$.
Then we prove the claim for $\mathcal{C} \cap S$:
for any $A \subseteq C_i \cap S, 2\ds(A, C_i \cap S \setminus A) < \ds(C_j \cap S, C_l \cap S)$.
Finally we use it to prove $\mathcal{C} \cap S$ is the minimum cost pruning.

\begin{claim}\label{cla:minsum_three}
For any three different optimal clusters $C_i, C_j, C_l \in \mathcal{C}$, any $A_X \subseteq C_i$,
$\alpha \ds(A_X, C_i \setminus A_X) < \ds(C_j, C_l)$.
\end{claim}
\begin{proof}
For any $A_X \subseteq C_i$, we define a perturbation as follows: blow up the distances between the points in $A_X$ and those in $C_i \setminus A_X$
by a factor of $\alpha$, and keep all the other pairwise distances unchanged.
By the $\alpha$-perturbation resilience, we know that $\mathcal{C}$ is still the optimum clustering after perturbation.
Therefore, it has lower cost than the clustering obtained by replacing $C_i$ with $A_X$ and $C_i \setminus A_X$, and
replacing $C_j$ and $C_l$ with $C_j \cup C_l$. After canceling the common terms in the costs of the two clusterings,
we have $2 \dps(A_X, C_i \setminus A_X) < 2 \dps(C_j, C_l)$, which leads to the claim.
\end{proof}

\begin{claim}\label{cla:minsum_three2}
With probability at least $1-\delta$, for any three different optimal clusters $C_i, C_j, C_l \in \mathcal{C}$, and any $A \subseteq C_i \cap S$, 
$$
2\ds(A, C_i \cap S \setminus A) \leq 2\ds(C_i \cap S , C_i \cap S) < \ds(C_j \cap S, C_l \cap S).
$$
\end{claim}

\begin{proof}
On one hand, for $A_X \subseteq C_i$, $\alpha \ds(A_X, C_i \setminus A_X) < \ds(C_j, C_l)$ by Claim~\ref{cla:minsum_three}. Since 
$
\sum_{A_X \subseteq C_i} \ds(A_X, C_i \setminus A_X) = 2^{|C_i|} d(C_i, C_i)/2,
$ 
we have $\frac{\alpha}{2} \ds(C_i, C_i) < \ds(C_j, C_l).$
On the other hand, a similar argument as that of Lemma~\ref{lemma:minsumInductiveLaminar} shows with probability at least $1 - \delta/2$, for any $C_i$ and any $p \in C_i, \ds(p,C_i \cap S) \leq (1+\upsilon) \frac{n}{N}\ds(p,C_i)$ for $\upsilon=1/20$.
So 
\begin{eqnarray}
\ds(C_i \cap S, C_i \cap S)
& = & \sum_{p \in C_i \cap S} \ds(p, C_i \cap S)
\leq  (1+\upsilon) \frac{n}{N}\sum_{p \in C_i \cap S} \ds(p, C_i) \nonumber\\
& = & (1+\upsilon) \frac{n}{N} \sum_{q \in C_i}\ds(C_i \cap S, q) \leq (1+\upsilon)^2 \frac{n^2}{N^2} \ds(C_i, C_i). \nonumber \label{eqn:minsumInductiveCostSameCluster}
\end{eqnarray}
Similarly, with probability at least $1 - \delta/2$, for any $C_j$ and $C_l$, $\ds(C_j \cap S, C_l \cap S)\geq (1-\upsilon)^2 \frac{n^2}{N^2} \ds(C_j, C_l)$.
The claim then follows by combining the three inequalities and noting $\upsilon=1/20$.
\end{proof}

Now, we use Claim~\ref{cla:minsum_three2} to prove the optimality of $\mathcal{C}\cap S$.
Suppose a pruning $\mathcal{P^*}$ is obtained by splitting $h$ clusters in $\mathcal{C} \cap S$ and at the same time
joining some other clusters into $g$ unions.
Specifically, for $1\leq i \leq h$, split $C_i \cap S$ into $m_i \geq 2$ clusters $S_{i,1},\dots, S_{i, m_i}$;
after that, merge $C_{h + 1} \cap S, \dots, C_{h + l_g} \cap S$ into $g$ unions,
that is, for $1\leq j \leq g$ and $l_0 = 0$, merge $l_j - l_{j-1} \geq 2$ clusters $C_{h+l_{j-1}+1} \cap S,\dots,C_{h+l_{j}} \cap S$
into a union $U_j$; the other clusters in $\mathcal{C} \cap S$ remain the same in $\mathcal{P^*}$.
Since the number of clusters is still $k$, we have $\sum_{i} m_i - h = l_g - g$.
The cost saved by splitting clusters is
\begin{eqnarray}
&&\sum_{1\leq i \leq h} \ds(C_i \cap S, C_i \cap S) - \sum_{1\leq i \leq h} \ \ \sum_{1\leq p \leq m_i} \ds(S_{i,p}, S_{i,p}) \nonumber \\ 
& = &\sum_{1\leq i \leq h} \ \ \sum_{1\leq p \leq m_i} \ds(S_{i,p}, C_i \cap S \setminus S_{i,p}). \label{eqn:minsumInductiveCostSaved}
\end{eqnarray}
The cost increased by joining clusters is
\begin{eqnarray}
&& \sum_{1\leq j \leq g} \ds(U_j, U_j) - \sum_{1\leq j \leq g} \ \ \sum_{h+l_{j-1} < p \leq h+l_{j}} \ds(C_{p} \cap S, C_{p} \cap S) \nonumber\\
&=& \sum_{1\leq j \leq g} \ \ \sum_{h+l_{j-1} < p \neq q \leq h+l_{j}} \ds(C_{p} \cap S, C_{q} \cap S). \label{eqn:minsumInductiveCostIncreased}
\end{eqnarray}
To prove that $\mathcal{C} \cap S$ is the unique minimum cost pruning, we need to show that (\ref{eqn:minsumInductiveCostSaved}) is less than (\ref{eqn:minsumInductiveCostIncreased}).
Since each term in (\ref{eqn:minsumInductiveCostIncreased}) is twice larger than any term in (\ref{eqn:minsumInductiveCostSaved}),
it suffices to show that the number of the terms in (\ref{eqn:minsumInductiveCostIncreased}) is at least half the number of the terms in (\ref{eqn:minsumInductiveCostSaved}). Formally, we need to show $$2\sum_{1\leq j \leq g} {l_j - l_{j-1} \choose 2} \geq \sum_{1\leq i \leq h} m_i.$$
We have $2\sum_{j} {l_j - l_{j-1} \choose 2} = \sum_j (l_j - l_{j-1})(l_j - l_{j-1} - 1) \geq 2\sum_j (l_j - l_{j-1} - 1) = 2(l_g - g)$,
where the inequality comes from $l_j - l_{j-1} \geq 2$.
Since $l_g - g = \sum_{i} m_i - h$, it is sufficient to show $l_g - g \geq h$.
This comes from $l_g - g = \sum_{i} m_i - h = \sum_{i} (m_i - 1) \geq \sum_{i} 1 = h$ since $ m_i \geq 2$.
\end{proof}


\section{$(\alpha,\epsilon)$-Perturbation Resilience for the Min-Sum Objective}\label{sec:alpha_espilon_minsum}

For $(\alpha,\epsilon)$-perturbation resilient min-sum instances, we will show that if $\alpha \geq \frac{8\max_i |C_i|}{ \min_i |C_i|}, \epsilon = \tilde{O}\left(\rho\right)$ where $\rho=\frac{\min_i |C_i|}{ n}$,
there exists a polynomial time algorithm that outputs a clustering that is both a  $(1+\tilde{O}(\epsilon/\rho) )$-approximation
and also $\tilde{O}(\epsilon)$-close to the optimal clustering.
Formally,

\begin{theorem}\label{thm:alpha_espilon_minsum}
Suppose the instance is $(\alpha, \epsilon)$-perturbation resilient to the min-sum objective
for $\alpha > \frac{8\max_i |C_i|}{\min_i|C_i|}$ and $\epsilon < \frac{\min_i |C_i|}{600 n \log n}$.
There exists an algorithm that outputs a clustering which is a $\left(1+\frac{40 \epsilon n \log n}{\min_i |C_i|}\right)$-approximation to the optimal clustering in polynomial time.
Furthermore, the output clustering is also $(6\epsilon \log n)$-close to the optimal clustering.
\end{theorem}

Since $\epsilon = O\left(\frac{\min_i |C_i|}{n\log n}\right)$, the approximation factor is always $O(1)$ and gets better if $\epsilon$ gets smaller.
To prove the theorem, we first derive new useful structural properties implied by $(\alpha,\epsilon)$-perturbation resilience for min-sum, and then use them to design our algorithm achieving the guarantees in the theorem.
Throughout this section, we assume $\alpha > \frac{8\max_i |C_i|}{\min_i|C_i|}$ and $\epsilon < \frac{\min_i |C_i|}{600 n \log n}$, except for where their values are explicitly specified. 
Also, since $\max_i|C_i|/\min_i|C_i| < n$ we may assume without loss of generality that $\alpha \leq 8n$.

\yingyu{The rest of the section is organized as follows. In Section~\ref{sec:properties_aeminsum}, we prove useful properties of the $(\alpha,\epsilon)$-perturbation resilient min-sum instances. We first show that in the optimal clustering, except for a few bad points, all the other points are good in the sense that they are much closer to their own  clusters than to any other clusters.  Furthermore, we show that there exist a subset of points we call potentially good points which can act as a proxy for the good points in the clustering tasks.  Given these properties, we design an algorithm in Section~\ref{sec:algo_aeminsum}. We first construct a tree with a pruning close to the optimal clustering, find that pruning, and finally adjust the points so that the pruning becomes the desired approximation.}

%

\subsection{Structural Properties of $(\alpha,\epsilon)$-Perturbation Resilience}\label{sec:properties_aeminsum}
We describe the high level ideas for the structural properties, and then present the formal proofs in Section~\ref{sec:bound} and Section~\ref{sec:properties_gp_aeminsum}.

\mycomment{From this point to Section 6.1.1 are new intuitions added.}
The good points and bad points for min-sum are defined as follows. Here $\beta$ is chosen to be $\frac{4}{5}\alpha$ since we are not able to prove the bound for $\beta = \alpha$ but will be able to when $\beta$ is slightly smaller than $\alpha$. Some other constant can be used instead of $\frac{4}{5}$.

\begin{definition}
Define bad points for min-sum to be those that are not $\beta$ times closer to their own clusters than to other clusters, where $\beta=\frac{4}{5}\alpha$. That is, 
\begin{eqnarray*}
B \defeq \cup_i B_i, \textrm{where}~B_i \defeq \{p \in C_i: \exists j \neq i, d(p, C_j) \leq \beta d(p, C_i)\}, \beta \defeq \frac{4}{5}\alpha.
\end{eqnarray*}
The other points $G=\data \setminus B$ are called good points.
\end{definition}

We will show that when $\alpha$ is sufficiently large and $\epsilon$ is sufficiently small,
the number of bad points are bounded by $\tilde{O}(\epsilon n)$ (Theorem~\ref{thm:bpbound_aeminsum} in Section~\ref{sec:bound}).
Intuitively, if there are more bad points, then we can construct a perturbation, so that the optimal clustering after perturbation must move large fraction of the bad points to new clusters, which will then be contradictory to the $(\alpha,\epsilon)$-perturbation resilience.
Consider a bad point $p \in C_i$ and let $C_j$ be its second nearest cluster. We try to make sure that $p$ will move to $C_j$ while good points in $C_i$ stay. That is, we try to make sure that the optimal clustering after perturbation is $\{\tilde{C}_i\}$ where $\tilde{C}_i$ is the union of the good points in $C_i$ and the bad points that have $C_i$ as their second nearest cluster. So the perturbation should blow up all distances except those within $\tilde{C}_i (1\leq i\leq k)$. The proof then proceeds by considering clustering $\{C'_i\}$ that are $\epsilon$-close to $\{C_i\}$, and showing that $\{\tilde{C}_i\}$ has smaller cost than $\{C'_i\}$. 

There are some technical difficulties to be addressed. One is that we can only show $\{\tilde{C}_i\}$ has smaller cost than $\{C'_i\}$ when the costs of bad points are within a constant factor from each other. Therefore, we partition the bad points according to their costs so that there are $O(\log n)$ groups, and the bad points in the same group have costs within a factor of 2 from each other. 
Another technical difficulty is that besides bad points, there might be other points moved to new clusters in $\{C'_i\}$, so comparing the costs of $\{\tilde{C}_i\}$ and $\{C'_i\}$ requires bounding the costs of these points too. We address this by partitioning the points into several types so that the costs of points of the same type can be bounded in the same way.

Once we bound the number of bad points, it is possible to design approximation algorithms if the influence of the few bad points can be eliminated, since the good points in different optimal clusters are far from each other by definition, and thus can be handled by simple algorithms (such as the variant of average linkage algorithm used for $\alpha$-perturbation resilient min-sum instances). 
However, it is unclear how to compute the bad points or the good points.
The key is to introduce a proxy called potentially bad points (potentially good points respectively), which can be easily computed. These notions are formalized as follows.

\begin{definition}
\begin{itemize}
\item[(1)] Define $m_B \defeq 6 \epsilon \log n$.
\item[(2)] For a set $A$ with $|A| > 2 m_B$, define the potentially bad points $F(A)$ to be the $2 m_B$ points in $A$ that are farthest from $A$.
That is, $F(A) \subseteq A, |F(A)| = 2 m_B$, and for any $p \in F(A)$, $q \in S \setminus F(A)$, $d(p, A) \geq d(q, A)$. \yingyu{The potentially good points of $A$ are defined to be $P(A) \defeq A \setminus F(A)$.}
\item[(3)]  The robust average distance $\dra(A_1, A_2)$ between two sets $A_1, A_2$ is defined as the average distance between their potentially good points. Formally,
$$
\dra(A_1, A_2) \defeq \frac{\ds(P(A_1), P(A_2))}{|P(A_1)||P(A_2)|}.
$$
\item[(4)] For a cluster $A$, define its robust min-sum cost as $\drs(A) \defeq \ds(P(A), P(A))$.
For a clustering $\mathcal{C}$, define its robust min-sum cost as $\sum_{C\in \mathcal{C}} \drs(C)$.
\end{itemize}
\end{definition}

To show that the potentially good points can be regarded roughly as a proxy for the good points, we show that
the cost between sufficiently many good points in two clusters accounts for most of the cost between the two clusters (Lemma~\ref{lem:boundMergeCost}),
and that the cost between any points and the potentially good points in a cluster is roughly bounded by the cost between these points and any sufficiently large subset (in particular, the good points) of the cluster (Lemma~\ref{lem:FapproxG}). 
The first statement is due to that there are just a few bad points and the good points in different clusters are far apart. The second is due to that by definition, the potentially bad points are furthest away from other points in the cluster, so removing other points will not change the cost as much as removing the potentially bad points, even in the worst case.

Since the potentially good points  can act as a proxy,  the robust average distance approximates the average distance between good points, and the robust min-sum cost computed after removing the potentially bad points approximates the min-sum cost computed after removing the actual bad points. Thus, they can be used to design approximation algorithms using the potentially bad points as if we knew the actual bad points, as done in Section~\ref{sec:algo_aeminsum}. 

\subsubsection{Bounding the Number of Bad Points}\label{sec:bound}

Here we bound the number of bad points by $\tilde{O}(\epsilon n)$ when $\alpha$ is sufficiently large and $\epsilon$ is sufficiently small.

\begin{theorem}\label{thm:bpbound_aeminsum}
Suppose the clustering instance is $(\alpha, \epsilon)$-perturbation resilient to the min-sum objective
for $\alpha > 4$ and $\epsilon < \frac{\min_i |C_i| }{200  n}$.
Then $|B| \leq m_B = 6 \epsilon n \log n$.
\end{theorem}

\begin{proof}
We will first show that $|B| \leq 2 \eta \epsilon n$ where $\eta = \left\lceil \log \frac{\max_i \max_{p\in B_i} d(p,C_i)}{\min_i \min_{p\in B_i} d(p,C_i)}\right\rceil$, and then show that $ \eta \leq 3 \log n$, completing the proof.

As the first step, assume for contradiction $|B| > 2 \eta \epsilon n$.
We will construct a perturbation which will eventually lead to a contradiction.

We begin by introducing some notations.
\yingyu{Consider the following $\eta$ intervals:} $[2^{t-1} v, 2^t v]$ where $v=\min_i \min_{p\in B_i} d(p,C_i), 1\leq t \leq \eta$.
At least one of the intervals, say $[r,2r]$, will contain the costs of more than $2\epsilon n$ bad points.
Let $\hat{B}$ denote an arbitrary subset of $2\epsilon n$ bad points in this interval.
Let $\hat{B}_i = \hat{B} \cap C_i$ denote the selected bad points in the optimal cluster $C_i$. Let $K_i = C_i \setminus \hat{B}_i$ denote the other points in $C_i$, and set $K = \cup_i K_i$.
Denote as $D_i$ all those selected bad points whose second nearest cluster is $C_i$, that is, $D_i = \{p: \exists j~\textrm{such that}~p \in \hat{B}_j~\textrm{and}~i = \arg\min_{\ell\neq j}d(p, C_{\ell})\}$. Note that by definition we have $\cup_i D_i = \hat{B}$. Finally, let $\tilde{C}_i = K_i \cup D_i$. See Figure~\ref{fig:BCD} for an illustration. 

Now we are ready to construct the perturbation, which tries to make the selected bad points move to their second nearest clusters and keep the other points in their original clusters. That is, the perturbation favors the clustering $\tilde{C}_i$.
More precisely, the perturbation is constructed as follows: blow up all distances by a factor of $\alpha$ except those within $\tilde{C}_i, 1\leq i\leq k$. That is,
\begin{eqnarray*}
d'(p,q) = \left\{ \begin{array}{ll}
d(p,q) & \textrm{if $p \in \tilde{C}_i$, and $q \in \tilde{C}_i$ for some $i$,}\\
\alpha d(p,q) & \textrm{otherwise}.
\end{array} \right.
\end{eqnarray*}

To derive a contradiction, consider the optimal clustering after perturbation, denoted as $\{C'_i\}$.
Since there are more than $\epsilon n$ bad points in $\hat{B}$, by $(\alpha, \epsilon)$-perturbation resilience, not all of them move to new clusters in $\{C'_i\}$, and thus $\{C'_i\}$ is different from $\{\tilde{C}_i \}$.
In fact, we will show that the clustering $\{\tilde{C}_i \}$ has a lower cost than $\{C'_i\}$, which is a contradiction.
To do so, we consider the following thought experiment: remove all points in $C'_i \setminus \tilde{C}_i$ so that $\{C'_i\}$ becomes $\{C'_i \cap \tilde{C}_i\}$, and then add all points in $\tilde{C}_i\setminus C'_i$ so that  $\{C'_i \cap \tilde{C}_i\}$ becomes $\{\tilde{C}_i\}$.  The cost saved in the first step is 
\begin{eqnarray} \label{eqn:costsaved}
\sum_i \dps(C'_i,C'_i) - \sum_i \dps(C'_i \cap \tilde{C}_i, C'_i \cap \tilde{C}_i)
\end{eqnarray}
and the cost added in the second step is 
\begin{eqnarray} \label{eqn:costadded}
\sum_i \dps(\tilde{C}_i, \tilde{C}_i) - \sum_i \dps(C'_i \cap \tilde{C}_i, C'_i \cap \tilde{C}_i).
\end{eqnarray}
We will show that (\ref{eqn:costsaved}) is larger than (\ref{eqn:costadded}), leading to the contradiction.

\begin{figure*}[!thbp]
\begin{minipage}[t]{0.49\textwidth}
\centering

\begin{tikzpicture}
\usetikzlibrary{shapes,backgrounds}

\def\largeN{2.2}
\def\bbox{(-\largeN,-\largeN) rectangle (\largeN,\largeN)}
\clip \bbox;

\coordinate (centerC) at (-0.6,0);
\def\C{(centerC) circle (1.2)}

\coordinate (centerCt) at (0.6,0);
\def\Ct{(centerCt) circle (1.2)}

\fill[green!25] \C;
\fill[blue!25] \Ct;

\begin{scope}
	\clip \C;
	\fill[green!25!blue] \Ct;
\end{scope}

\draw \C;
\draw \Ct;

\node at (-2,-1.1) {$C_i$};
\node at (2,-1.1) {$\tilde{C}_i$};
\node at (-1.3,0) {$\hat{B}_i$};
\node at (0,0) {$K_i$};
\node at (1.3,0) {$D_i$};
\end{tikzpicture}
\caption{Perturbation construction.}\label{fig:BCD}
\end{minipage}
\begin{minipage}[t]{0.49\textwidth}
\centering

\begin{tikzpicture}
\usetikzlibrary{shapes,backgrounds}

\def\largeN{2.2}
\def\largeNu{2.2*1.5}
\def\bbox{(-\largeN,-\largeN) rectangle (\largeN,\largeNu)}
\clip \bbox;

\coordinate (centerC) at (-0.6,0);
\def\C{(centerC) circle (1.2)}

\coordinate (centerCt) at (0.6,0);
\def\Ct{(centerCt) circle (1.2)}

\coordinate (centerCp) at (0,1.039);
\def\Cp{(centerCp) circle (1.2)}

\fill[green!25] \C;
\fill[blue!25] \Ct;
\fill[red!25] \Cp;

\coordinate (F1) at (0,2.239);
\def\FP{(centerCp) -- (F1)}
\coordinate (F1int) at (0,\largeNu);
\coordinate (F2int) at (0,-\largeN);
\def\FPup{(F1int) -- (F2int) -- (\largeN, -\largeN) -- (\largeN, \largeNu) -- (F1int) --cycle}
\begin{scope}
	\clip \FPup;
	\fill[green!25!blue!25] \Cp;
\end{scope}
\begin{scope}[even odd rule]
	\clip \FPup \bbox;
	\fill[red!25] \Cp;
\end{scope}

\coordinate (Ft1) at (1.63,-0.61) {};
\def\FtP{(centerCt) -- (Ft1)}
\def\FtPup{(centerCt) -- (Ft1) -- (\largeN, -\largeN) -- (\largeN, \largeNu) -- (centerCt) --cycle}
\begin{scope}
	\clip \FtPup;
	\fill[green!35!red!35] \Ct;
\end{scope}
\begin{scope}[even odd rule]
	\clip \FtPup \bbox;
	\fill[red!25] \Ct;
\end{scope}

\begin{scope}
	\clip \C;
	\fill[green!25!blue!25] \Ct;
\end{scope}

\begin{scope}
	\clip \C;
	\fill[green!35!red!35] \Cp;
\end{scope}

\begin{scope}
	\clip \Ct;
	\fill[blue!25!red!25] \Cp;
\end{scope}

\begin{scope}
	\clip \Ct;
	\clip \C;
	\fill[blue!33!red!33!green] \Cp;
\end{scope}

\draw \C;
\draw \Ct;
\draw \Cp;
\draw \FP;
\draw \FtP;

\node at (-2,-1.1) {$C_i$};
\node at (2,-1.1) {$\tilde{C}_i$};
\node at (0,2.6) {$C'_i$};
\node at (-0.8,0.8) {$U_i$};
\node at (-0.5,1.6) {$V_i$};
\node at (0.4,1.6) {$W_i$};
\node at (1.3,0) {$\tilde U_i$};
\node at (0.77,-0.7) {$\tilde V_i$};
\node at (0,-0.5) {$\tilde W_i$};
\end{tikzpicture}
\caption{Different types of points.}\label{fig:UVW}
\end{minipage}
\end{figure*}

To bound the costs, we first divide the points into different types. See Figure~\ref{fig:UVW} for an illustration.
First, we need to move out $C'_i \setminus \tilde{C}_i$  from each $C'_i$. These points can be divided into three types:
\begin{itemize}
\item[(1)] $U_i = C'_i \cap \hat{B}_i$ are the selected bad points in $C_i$ that need to be moved out.
\item[(2)] $V_i= (C'_i \setminus \tilde{C_i}) \cap (\cup_{j\neq i} \hat{B}_j) = (C'_i \setminus D_i) \cap (\cup_{j\neq i} \hat{B}_j) = (\hat{B} \setminus\hat{B}_i) \cap (C'_i \setminus D_i) $ are the selected bad points that are from other optimal clusters. They are in $C'_i$ but their second nearest cluster is not $C_i$.
\item[(3)] $W_i= (C'_i \setminus \tilde{C_i}) \cap (\cup_{j\neq i} K_j) = \cup_{j\neq i} (K_j \cap C'_i)$ are points that are from $K_j$ for some $j\neq i$ and are in $C'_i$. But they are not from $K_i$ and thus are not in $\tilde{C_i}$.
\end{itemize}
Second, we need to move in $\tilde{C}_i \setminus C'_i$ for each $\tilde{C}_i$. Similarly, these points can also be divided into three types:
\begin{itemize}
\item[(1)] $\tilde{W}_i=K_i \setminus C'_i = \cup_{j \neq i} (K_i \cap C'_j)$ are those points in $K_i$ and in $C'_j$ for some $j\neq i$. This means that they are points in $\cup_j W_j$. More specifically, we have $\cup_i \tilde{W}_i = \cup_j W_j$.
\item[(2)] $\tilde{V}_i = (D_i \setminus C'_i)  \cap \left(\cup_{\ell \neq j} (\hat{B}_\ell \cap C'_j) \right)$ are part of the selected bad points whose second nearest cluster is $C_i$ but not in $C'_i$. They are originally in $\hat{B}_\ell$ for some $\ell$ but are in $C'_j$ for some $j\neq \ell$. In other words, they are points from $V_j$ for some $j$, and we have $\cup_i \tilde{V}_i = \cup_j V_j$. Formally,
\begin{eqnarray*}
  \tilde{V}_i =  (D_i \setminus C'_i)  \cap \left(\cup_j ((\hat{B} \setminus \hat{B}_j ) \cap C'_j) \right)
	 =  \cup_{j\neq i} \left( (\hat{B} \setminus \hat{B}_j ) \cap C'_j \cap D_i\right),\\
\cup_i \tilde{V}_i  = \cup_{j} \left( (\hat{B} \setminus \hat{B}_j ) \cap C'_j \cap (\cup_{i\neq j} D_i)\right)  = \cup_j \left( (\hat{B} \setminus \hat{B}_j) \cap C'_j \setminus D_j\right) = \cup_j V_j.
\end{eqnarray*}
\item[(3)] $\tilde{U}_i = (D_i \setminus C'_i)  \cap \left( \cup_{j} (\hat{B}_j \cap C'_j) \right)$ are also part of the selected bad points whose second nearest cluster is $C_i$. They are originally in $\hat{B}_j$ for some $j$ and are also in $C'_j$. In other words, they are points from $U_j$ for some $j$, and we have $\cup_i \tilde{U}_i = \cup_j U_j$.
\end{itemize}
In summary,  $C'_i$ and $\tilde{C}_i$ are partitioned into four parts respectively:
$$C'_i = (C'_i \cap \tilde{C}_i) \dcup U_i \dcup V_i \dcup W_i,~~\tilde{C}_i = (C'_i \cap \tilde{C}_i) \dcup \tilde{U}_i \dcup \tilde{V}_i \dcup \tilde{W}_i.$$
These different types of points have the following relations:
$$\cup_i \tilde{U}_i = \cup_j U_j, ~~\cup_i \tilde{V}_i = \cup_j V_j,~~\cup_i \tilde{W}_i = \cup_j W_j.$$

We now consider the costs saved and added when moving these points for all clusters $i$.
Suppose we first move out $\{W_i\}_{i=1}^k$, then $\{V_i\}_{i=1}^k$, and finally $\{U_i\}_{i=1}^k$. The cost saved by moving out $\{W_i\}_{i=1}^k$ is defined as
\begin{eqnarray}  \label{eqn:Wsave}
\Delta_W :=  \sum_i \dps(C'_i, C'_i)  - \sum_i \dps(C'_i \setminus W_i, C'_i \setminus W_i) \geq 2\sum_i \dps(W_i, C'_i \cap C_i).
\end{eqnarray} 
The cost saved by moving out $\{V_i\}_{i=1}^k$ is 
\begin{eqnarray} \label{eqn:Vsave}
\Delta_V & := & \sum_i \dps(C'_i \setminus W_i, C'_i \setminus W_i)  - \sum_i \dps(C'_i \setminus W_i \setminus V_i, C'_i \setminus W_i \setminus V_i)  \nonumber \\
& \geq & 2\sum_i \dps(V_i, C'_i \cap C_i).
\end{eqnarray} 
The cost saved by moving out $\{U_i\}_{i=1}^k$ is 
\begin{eqnarray} \label{eqn:Usave}
\Delta_U & := &  \sum_i \dps(C'_i \setminus W_i \setminus V_i, C'_i \setminus W_i \setminus V_i) - \sum_i \dps(C'_i \cap C_i, C'_i \cap C_i)  \nonumber\\
& \geq & 2\sum_i \dps(U_i, C'_i \cap K_i).
\end{eqnarray} 
Next, we move in $\{\tilde{W}_i\}_{i=1}^k$, then $\{\tilde{V}_i\}_{i=1}^k$, and finally $\{\tilde{U}_i\}_{i=1}^k$. The cost added by moving in $\{\tilde{W}_i\}_{i=1}^k$ is
\begin{eqnarray} \label{eqn:Wadd}
\Delta_{\tilde{W}} & := & \sum_i \dps(\tilde{W}_i \dcup (C'_i \cap \tilde{C}_i), \tilde{W}_i \dcup (C'_i \cap \tilde{C}_i)) - \sum_i \dps(C'_i \cap \tilde{C}_i, C'_i \cap \tilde{C}_i)  \nonumber\\
& \leq & 2 \sum_i \dps(\tilde{W}_i, \tilde{W}_i \dcup (C'_i \cap \tilde{C}_i)).
\end{eqnarray} 
The cost added by moving in $\{\tilde{V}_i\}_{i=1}^k$ is 
\begin{eqnarray} \label{eqn:Vadd}
\Delta_{\tilde{V}} & := & \sum_i \dps(\tilde{C}_i \setminus \tilde{U}_i, \tilde{C}_i \setminus \tilde{U}_i) - \sum_i \dps(\tilde{W}_i \dcup (C'_i \cap \tilde{C}_i), \tilde{W}_i \dcup (C'_i \cap \tilde{C}_i))   \nonumber\\
& \leq & 2 \sum_i \dps(\tilde{V}_i, \tilde{C}_i).
\end{eqnarray} 
The cost added by moving in $\{\tilde{U}_i\}_{i=1}^k$ is 
\begin{eqnarray} \label{eqn:Uadd}
\Delta_{\tilde{U}} := \sum_i \dps( \tilde{C}_i, \tilde{C}_i) - \sum_i \dps(\tilde{C}_i \setminus \tilde{U}_i, \tilde{C}_i \setminus \tilde{U}_i) \leq 2 \sum_i \dps(\tilde{U}_i, \tilde{C}_i).
\end{eqnarray} 

Clearly, 
$$ (\Delta_W + \Delta_V + \Delta_U ) - (\Delta_{\tilde{W}} + \Delta_{\tilde{V}} + \Delta_{\tilde{U}} )  = \sum_i \dps( C'_i, C'_i) - \sum_i \dps( \tilde{C}_i, \tilde{C}_i). $$ 
We are now ready to show that the cost saved $(\Delta_W + \Delta_V + \Delta_U ) $ is greater than the cost added $(\Delta_{\tilde{W}} + \Delta_{\tilde{V}} + \Delta_{\tilde{U}} )$, which leads to the contradiction $\sum_i \dps( C'_i, C'_i) > \sum_i \dps( \tilde{C}_i, \tilde{C}_i)$.
The high level idea is that a significant amount of cost is saved by moving $U_i$ to the correct clusters, while the costs added are generally small since the number of points moved is bounded by $3\epsilon n$ and the cost of the selected bad points moved is bounded by $2r$. 
Formally, we have the following claim, whose proof is presented in Appendix~\ref{app:bound}.

\begin{claim}\label{cla:U}\label{cla:V}\label{cla:W}
The costs saved and added by moving out $\{ U_i\}_{i=1}^k,\{V_i\}_{i=1}^k$ and $\{W_i\}_{i=1}^k$,  and then moving in $\{\tilde{W}_i\}_{i=1}^k$, $\{\tilde{V}_i\}_{i=1}^k$ and $\{\tilde{U}_i\}_{i=1}^k$ satisfy:
\begin{eqnarray*}
(1) \ \Delta_U - \Delta_{\tilde{U}}  & \geq & 2\sum_i \dps(U_i, C'_i \mycap K_i) - 2 \sum_i \dps(\tilde{U}_i, \tilde{C}_i) 
 \\  &\geq & \frac{3}{10}\alpha \sum_i \ds(U_i, C_i) - \frac{2\alpha}{100} \sum _i \ds(W_i, C_i)  - \frac{8\alpha + 16}{100}r \epsilon n,\\
(2) \ \ \Delta_V - \Delta_{\tilde{V}}  & \geq &  2\sum_i \dps(V_i, C'_i \mycap C_i) - 2\sum_i \dps(\tilde{V}_i, \tilde{C}_i)
 \\ &\geq &  \frac{99}{50}(\alpha  -2)\sum_i \ds(V_i, C_i) - \frac{2\alpha}{100}\sum_i \ds(W_i, C_i)  - \frac{4\alpha + 8\beta}{100}r\epsilon n,\\
(3) \Delta_W - \Delta_{\tilde{W}}  & \geq &  2\sum _i \dps(W_i, C'_i \mycap C_i) - 2\sum_i \dps(\tilde{W}_i, \tilde{W}_i \dcup (C'_i \mycap \tilde{C}_i))
  \\  &\geq & \frac{98}{50}(\alpha  -2) \sum_i \ds(W_i, C_i)   - \frac{4\alpha+4\beta }{100}r\epsilon n.
\end{eqnarray*}
\end{claim}

After adding up all the inequalities in the claim, the right-hand side is a lower bound on $\dps( C'_i, C'_i) > \dps( \tilde{C}_i, \tilde{C}_i)$,
which we now show must be positive when $\alpha > 4$ and $\beta \leq \frac{4}{5}\alpha$.
The terms about $\ds(W_i,C_i)$ and $\ds(V_i, C_i)$ are non-negative, so it suffices to show that 
$$
 \frac{3}{10}\alpha \sum_i \ds(U_i, C_i) - \frac{8\alpha + 16}{100}r \epsilon n -  \frac{4\alpha + 8\beta}{100}r\epsilon n - \frac{4\alpha+4\beta }{100}r\epsilon n > 0.
$$
Since $30\alpha > 16\alpha + 12 \beta + 16$, what remains is to show $\sum_i \ds(U_i, C_i) \geq r\epsilon n$.
First, $\ds(p,C_i)\geq r$ for any $p \in U_i$. Second, $|\cup_i U_i| \geq \epsilon n$ since there are $2\epsilon n$ selected bad points
but no more than $\epsilon n$ of them move from their original clusters in $\{C_i\}$ to a different cluster in $\{C'_i\}$.
Then we have $\sum_i \ds(U_i,C_i)\geq \sum_i r |U_i| = r \sum_i |U_i| \geq r \epsilon n$.
Hence, the difference between the cost saved and the cost added is positive. This means the cost of $\{\tilde{C}_i\}$ is smaller than the cost of $\{C'_i\}$, which contradicts the assumption that $\{C'_i\}$ is the optimal clustering under $d'$.
Therefore, there can be at most $2\eta \epsilon n$ bad points.

Finally, it suffices to show that $\eta \leq 3\log n$.
Suppose $p_1$ is the point that achieves $\max_i \max_{ p \in B_i} \ds(p,C_i)$ and $p_2$ is the point that achieves $\min_i \min_{p \in B_i} \ds(p,C_i)$.
Without loss of generality, suppose $p_1 \in C_1$ and $p_2 \in C_2$.
By the definition of bad points, there exists $C_i \neq C_2$ such that $\ds(p_2, C_i) \leq \beta \ds(p_2,C_2)$.
If $C_i\neq C_1$, we have $\ds(p_1,C_1) \leq \ds(C_2, C_i)$, since otherwise we can get lower cost by splitting $C_1$ into $p_1$ and $C_1 \setminus \{p_1\}$ while merging $C_2$ and $C_i$.
If $C_i = C_1$, we also have $\ds(p_1,C_1) \leq \ds(C_2, C_i)$, since otherwise we can get lower cost by splitting $C_1$ into $p_1$ and $C_1 \setminus \{p_1\}$ and then merging $C_2$ and $C_1 \setminus \{p_1\}$.
In both cases, we have
\begin{eqnarray*}
\ds(p_1,C_1) \leq \ds(C_2, C_i)  & \leq &  |C_i| \ds(p_2, C_2) + |C_2| \ds(p_2, C_i) \\
 &\leq & |C_i| \ds(p_2, C_2) + \beta |C_2| \ds(p_2, C_2) \\
 & \leq & 8 n^2 \ds(p_2, C_2)
\end{eqnarray*}
where the last inequality follows from $\beta \leq 8n$. Then we have $\eta \leq 3\log n$.
\end{proof}

\subsubsection{Properties of  Good Points and Potentially Good Points}
\label{sec:properties_gp_aeminsum} \label{sec:properties_pgp_aeminsum}

Since there are just a few bad points and the good points in different clusters are far apart, the cost between sufficiently large subsets of their good points accounts for most of the cost between the two clusters. This means that \yingyu{we would be able to approximate the min-sum cost of all points by the min-sum cost only on the good points, if we knew the good points (Lemma~\ref{lem:boundMergeCost}).}
\yingyu{To prove Lemma~\ref{lem:boundMergeCost}, we will need the triangle inequality for the average distance, and also a technical lemma which shows that good points are much closer to its own cluster than to good points in any other cluster.}

\begin{fact}\label{fac:tri}
For any nonempty sets $A,B$ and $C$, we have $\da(A,B) \leq  \da(A,C) + \da(C,B)$, and thus
$\ds(A,B) \leq \frac{|B|}{|C|}\ds(A,C) + \frac{|A|}{|C|}\ds(C,B).$
\end{fact}
\begin{proof}
It follows from
\begin{eqnarray*} 
|C|\sum_{a \in A} \sum_{b\in B} d(a,b)  & = & \sum_{a \in A} \sum_{b\in B} \sum_{c\in C} d(a,b) \leq \sum_{a \in A} \sum_{b\in B} \sum_{c\in C} (d(a,c) + d(c,b)) \\ 
 &= & |B| \sum_{a\in A} \sum_{c\in C} d(a,c) + |A| \sum_{b\in B} \sum_{c\in C} d(c,b).
\end{eqnarray*}
\end{proof}

\begin{lemma}\label{lem:goodSubsets}
For any nonempty $A \subseteq G_i, B \subseteq G_j, j\neq i$,
we have
$$\da(A, C_i) \leq \gamma_{ji} \ \da(A, B), \mathrm{where} \ \ \gamma_{ji} = \frac{|C_j|}{(\beta-1/\beta)|C_i|} + \frac{1}{\beta^2-1}.$$
Consequently,
if $\alpha>\frac{8\max_i|C_i|}{\min_i |C_i|}$, we have $\da(A, C_i) \leq \frac{11}{50} \ \da(A, B)$.
\end{lemma}

\begin{proof}
For any $p \in A$, we have $\beta \ds(p, C_i) < \ds(p, C_j)$.
By Fact~\ref{fac:tri},
\begin{eqnarray*}
\beta \ds(A, C_i) < \ds(A, C_j) \leq \frac{|C_j|}{|B|} \ds(A, B) + \frac{|A|}{|B|} \ds(C_j, B)\\
\beta \ds(B, C_j) < \ds(B, C_i) \leq \frac{|C_i|}{|A|} \ds(B, A) + \frac{|B|}{|A|} \ds(C_i, A).
\end{eqnarray*}
Plug the second inequality into the first inequality, then the lemma follows.
\end{proof}

We are now ready to prove Lemma~\ref{lem:boundMergeCost}. 
\begin{lemma} \label{lem:boundMergeCost}
Suppose $\alpha > \frac{8\max_i |C_i|}{\min_i|C_i|}$
and $W_i \subseteq G_i, W_j \subseteq G_j$.
When $|C_i| \geq 50 |C_i \setminus W_i|$ and $|C_j| \geq 50 |C_j \setminus W_j|$, we have $\ds(C_i, C_j) \leq \frac{3}{2} \ds(W_i, W_j).$
\end{lemma}

\begin{proof}
By Fact~\ref{fac:tri} and Lemma~\ref{lem:goodSubsets}, we have
$$\da(C_i,C_j) \leq \da(C_i,W_i) + \da(W_i,W_j) + \da(W_j,C_j) \leq (\frac{11}{50} + 1 + \frac{11}{50}) \da(W_i,W_j)$$
which leads to $\ds(C_i, C_j) \leq \frac{36}{25}\frac{|C_i||C_j|}{|W_i||W_j|} \ds(W_i, W_j) \leq \frac{3}{2} \ds(W_i, W_j).$
\end{proof}

%
%

\begin{figure}
\begin{center}

\begin{tikzpicture}
\usetikzlibrary{shapes,backgrounds}

\def\largeN{3}
\def\bbox{(-\largeN,-\largeN) rectangle (\largeN,\largeN)}

\coordinate (center) at (0,0);
\def\A{(center) circle (1.2)}

\coordinate (F1) at (-1.2,1.6);
\coordinate (F2) at (2,-0.8);
\def\FP{(F1) -- (F2)}
\coordinate (F1int) at (F1 |- 0,\largeN);
\coordinate (F2int) at (F2 -| \largeN,0);
\def\FPup{\FP -- (F2int) -- (\largeN, \largeN) -- (F1int) --cycle}

\coordinate (B1) at (1.2,1.6);
\coordinate (B2) at (-2,-0.8);
\def\BG{(B1) -- (B2)}
\coordinate (B1int) at (B1 |- 0,\largeN);
\coordinate (B2int) at (B2 -| -\largeN,0);
\def\BGup{\BG -- (B2int) -- (-\largeN, \largeN) -- (B1int) --cycle}

\fill[blue!25] \A;
\node at (0,-0.5) {$W$};

\begin{scope}
	\clip \FPup;
	\fill[green!25] \A;
\end{scope}
\node at (0.6,0.6) {$X$};

\begin{scope}
	\clip \BGup;
	\fill[red!25] \A;
\end{scope}
\node at (-0.64,0.6) {$Y$};

\begin{scope}
	\clip \FPup;
	\clip \BGup;
	\fill[gray!25] \A;
\end{scope}
\node at (0,0.98) {$V$};

\node at (0,1.8) {$A$};

\draw \A;
\draw \FP;
\draw \BG;

\node at (1.6,-1.2) {$P$};
\node at (2,-0.4) {$F$};

\node at (-1.6,-1.2) {$H$};
\node at (-2,-0.4) {$\bar{H}$};
\end{tikzpicture}
\caption{Notations in Lemma~\ref{lem:FapproxG}.}\label{fig:WVXY1}
\vspace{-.2in}
\end{center}
\end{figure}
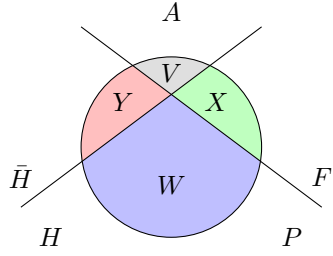

Now we turn to analyze the potentially good points.
A key property of the potentially good points is the following: for any point $p$ and any sufficiently large set $A$,
the cost between $p$ and the potentially good points in $A$
is roughly bounded by the cost between $p$ and any sufficiently large subset $H$ of $A$.
See Lemma~\ref{lem:FapproxG} for details.
A specific case is when $H$ is the actual good points in $A$.
In this case, the property says that the cost between $p$ and the potentially good points
is roughly bounded by the cost between $p$ and the actual good points.
This means that in suitable situations, we can regard potentially good points as actual good points.

\begin{lemma}\label{lem:FapproxG}
Suppose $H \subseteq A$ such that $|A\setminus H| \leq m_B$.
Let $F=F(A), P=P(A), \bar{H}=A\setminus H$.
Let $W = H \cap P, V = \bar{H} \cap F, X = F \cap H, Y=\bar{H}\cap P$.
See Figure~\ref{fig:WVXY1} for an illustration.
If $|A| \geq 20m_B$, then for any $p$,
$\ds(p, P) \leq \frac{|W|+|Y|}{|W|-|X|}\ds(p, H).$
\end{lemma}

\begin{proof}
Since $\ds(p,H) = \ds(p,X) + \ds(p,W)$ and $\ds(p, P) = \ds(p, Y) + \ds(p,W)$,
the lemma is true if $Y=\emptyset$.
Otherwise, we need to compare $\ds(p,X)$ and $\ds(p,Y)$.
By the triangle inequality, we have 
\begin{eqnarray*}
\da(W,X) \leq \da(W,p) + \da(p,X),&&~
\da(p,Y) \leq  \da(p,W) + \da(W,Y)
\end{eqnarray*}
which then lead to
\begin{eqnarray*}
\ds(p,X) \geq  \frac{\ds(W,X)}{|W|} - \frac{|X|}{|W|} \ds(p,W), &&~
\ds(p,Y) \leq \frac{\ds(W,Y)}{|W|} + \frac{|Y|}{|W|} \ds(p,W).
\end{eqnarray*}
From these bounds on $\ds(p,X)$ and $\ds(p,Y)$, we have
\begin{eqnarray*}
\ds(p,H) \geq \frac{\ds(W,X)}{|W|} + \frac{|W|-|X|}{|W|} \ds(p,W), &&~
\ds(p,P) \leq \frac{\ds(W,Y)}{|W|} + \frac{|W|+|Y|}{|W|} \ds(p,W).
\end{eqnarray*}
The lemma then follows from these two inequalities and the following claim.

\begin{claim}\label{lem:farPointCost}
$ \ds(X,W) \geq \ds(Y,W \dcup Y)$.
\end{claim}

\begin{proof}
The claim is true if $Y=\emptyset$. Otherwise, by the definition of the potentially bad points $F=F(A)$, we have $\da(X,A) \geq \da(Y,A)$.
By definition,
\begin{eqnarray}
|A| \da(X,A) & = &|W|\da(X,W) + |V| \da(X,V)  \label{eqn:farPointCostX}\\
& & + |Y|\da(X,Y) + |X| \da(X,X), \nonumber\\
|A| \da(Y,A) & = & |W \dcup Y|\da(Y,W \dcup Y) + |V| \da(Y,V) + |X| \da(Y,X). \label{eqn:farPointCostY}
\end{eqnarray}
To compare $\ds(X,W)$ and $\ds(Y, W  \dcup  Y)$, we need to bound the other terms in (\ref{eqn:farPointCostX}) and (\ref{eqn:farPointCostY}).
By Fact~\ref{fac:tri},
$$\da(X,V) \leq \da(X,W) + \da(W,Y) + \da(Y,V),$$ $$\da(X,Y) \leq \da(X,W) + \da(Y,W), \ \ \da(X,X) \leq 2\da(X,W).$$
Now we plug these into (\ref{eqn:farPointCostX}),
and then plug (\ref{eqn:farPointCostX}) and (\ref{eqn:farPointCostY}) into $\da(X,A) \geq \da(Y,A)$.
Since $\ds(W,Y) \leq \ds(Y,W \dcup Y)$ and $\da(Y,X) \geq 0$, we have
\begin{eqnarray*}
\bigl( |W| - |Y| - |V| \bigr) \ds(Y,W \dcup Y) \leq \bigl( |W| + 2|X| + |Y| + |V|\bigr) \frac{|Y|}{|X|} \ds(X,W).
\end{eqnarray*}
Since $|X  \dcup V| = |F| = 2 m_B$ and $|Y \dcup V| = |A\setminus H| \leq m_B$, we have $\frac{|Y|}{|X|}\leq 1/2$.
Then the lemma follows from the fact that $|A| \geq 20m_B, |F| = 2m_B$ and $|A\setminus H| \leq m_B$.
\end{proof}


This then completes the proof of Lemma~\ref{lem:FapproxG}.
\end{proof}

\subsection{Approximation Bound}\label{sec:algo_aeminsum}

In this subsection, we design an approximation algorithm and prove our final result Theorem~\ref{thm:alpha_espilon_minsum} by utilizing the properties of the $(\alpha,\epsilon)$-perturbation resilience.

\mycomment{Below to before Section 6.2.1 are new intuitions added}
First, note that we can generate a list of sufficiently large almost ``pure'' blobs using Algorithm~\ref{alg:aekmedian_blobs}.
However, unlike for $(\alpha, \epsilon)$-perturbation resilient $k$-median instances, it is not guaranteed that the robust linkage procedure in~\cite{nina_colt_2010} can link these blobs into a tree so that a pruning of the tree assigns all but bad points correctly.
Fortunately, since the potentially good points can act as a proxy for the good points, we can pretend there are only good points. Since the average linkage succeeds in this case (as shown for the $\alpha$-perturbation resilient instances), one would expect that the same idea can be applied. Indeed, we apply the idea but using the robust average distance instead of the average distance.
As described in Algorithm~\ref{alg:raLinkage}, we first use Algorithm~\ref{alg:aekmedian_blobs} to generate a list of blobs, and then use a robust version of average linkage to link them into a tree: repeatedly merge the two blobs with the minimum robust average distance.

After building the tree, one would like to find the pruning that assigns all but bad points correctly.
Suppose we can remove the actual bad points and compute the cost between the good points.
Since the good points from different clusters are far apart,
the good point cost increased by joining different clusters in $\mathcal{C}'$ is larger than that saved
by splitting clusters in $\mathcal{C}'$ (Lemma~\ref{lem:splitMerge}). Then any other pruning has larger cost than $\mathcal{C}'$.
Unfortunately, we do not know the actual good points.
Therefore, we consider the potentially good points and compute the robust min-sum cost.
 We show that the pruning is in fact the pruning with the minimum robust min-sum cost, so that it can be computed in polynomial time by dynamic programming.

\newcommand{\minSizeFrac}{\frac{1}{2}}
\newcommand{\clist}{\mathcal{L}}

\begin{algorithm}[!t]
\caption{Robust Average Linkage}
\label{alg:raLinkage}
\begin{algorithmic}[1]
\REQUIRE{Data set $\data$, distance function $d(\cdot, \cdot)$ on $\data$, $\min_i |C_i|$, $\epsilon>0$.}
\STATE{Use Algorithm~\ref{alg:aekmedian_blobs} with $u_B=6\epsilon n \log n$ and $\gamma=\frac{4}{5}$ to get a list $\blobs_0$ of blobs.}
\STATE{Initialize the clustering $\clist$ with each blob being a cluster.}
\STATE{Repeat till only one cluster remains:\\ merge clusters $C,C'$ which minimize $\dra(C,C')$. }
\STATE{Let $\mathcal{T}$ be the tree with blobs as leaves and internal nodes corresponding to the merges performed.}
\ENSURE{The tree $\mathcal{T}$.}
\end{algorithmic}
\end{algorithm}

\begin{algorithm}[!t]
\caption{Getting a good approximation}\label{alg:minsum_app}
\begin{algorithmic}[1]
\REQUIRE A clustering $\mathcal{C'}=\{C'_1,\dots, C'_k\}$, where $G_i \subseteq C'_i \subseteq C_i \mycup B$.
\FOR{each point $p$}
    \STATE Associate $p$ to the index $i$ such that $\ds(p, P(C'_i))$ is minimized. \label{step:assign}
\ENDFOR
\STATE Let $C''_i$ be the set of points associated to the index $i$.
\ENSURE The clustering $\mathcal{C''}=\{C''_1,\dots, C''_k\}$.
\end{algorithmic}
\end{algorithm}

\begin{algorithm}[!t]
\caption{Min-sum, $(\alpha,\epsilon)$ perturbation resilience}\label{alg:minsum_ae}
\begin{algorithmic}[1]
\REQUIRE{Data set $\data$, distance function $d(\cdot, \cdot)$ on $\data$, $\min_i |C_i|$, $\epsilon>0$.}
\STATE Run Algorithm~\ref{alg:raLinkage}  to get a tree $\mathcal{T}$.
\STATE Find the pruning $\mathcal{C'}$ with the minimum robust min-sum cost in the tree $\mathcal{T}$ by dynamic programming. 
\STATE Run Algorithm~\ref{alg:minsum_app}  to get the final clustering $\mathcal{C''}$.
\ENSURE The clustering $\mathcal{C''}=\{C''_1,\dots, C''_k\}$.
\end{algorithmic}
\end{algorithm}

However, this pruning may not be a good approximation. For example, consider an instance consisting of two unbalanced clusters.
Assume that there is only one bad point, belonging to the small cluster. Further assume the distances between the good points in each cluster are negligible, then assigning the bad point incorrectly to the large cluster will lead to an $\Omega\left(\frac{\max_i|C_i|}{\min_i |C_i|} \right)$-approximation. So the pruning $\mathcal{C}'$ may not be a constant approximation. Notice that the bad point causing trouble in this example can actually be identified: it is closer to its own optimal cluster than to its cluster in $\mathcal{C}'$.
Then by reassigning the points in $\mathcal{C}'$,  a better approximation can be computed.
It turns out that the reassignment is useful beyond this particular example, and can be used to compute a good approximation for general perturbation resilient instances.
The details are described in Algorithm~\ref{alg:minsum_app}.

All these combined together lead to our final algorithm for $(\alpha,\epsilon)$-perturbation resilient min-sum instances, summarized in Algorithm~\ref{alg:minsum_ae}.

The rest of the subsection presents the formal proofs.
In Section~\ref{sec:link}, we show that Algorithm~\ref{alg:raLinkage} outputs a tree with a pruning that assigns all but bad points correctly.
In Section~\ref{sec:pruning}, we show that this pruning can be found in polynomial time by dynamic programming.
In Section~\ref{sec:cost}, we show Algorithm~\ref{alg:minsum_app} computes a good approximation,
completing the proof of Theorem~\ref{thm:alpha_espilon_minsum}.

\subsubsection{Constructing A Tree with A Pruning Close to the Optimum}\label{sec:blob}\label{sec:link}

We now present our guarantee of Algorithm~\ref{alg:raLinkage}.

\begin{lemma}\label{thm:pruning}
The tree output in Algorithm~\ref{alg:raLinkage} has a pruning $\mathcal{C'}$ that assigns all good points correctly.
\end{lemma}

\begin{proof}
To analyze the algorithm, we begin with the following property of good points.
When combined with the property of  Algorithm~\ref{alg:aekmedian_blobs} (Lemma~\ref{lem:bloblist}),
it immediately shows that each blob in the list $\blobs_0$ has size at least $\minSizeFrac\min_i |C_i|$,
and contains good points from only one optimal cluster.

\begin{claim}\label{cla:nn}
For any $p \in G_i$,
all its $\frac{4|C_i|}{5}$ nearest neighbors belong to $C_i \cup B$.
\end{claim}

\begin{proof}
We need to show that for any $j \neq i$ and any good point $q \in G_j$, $d(p,q)$ is sufficiently large compared to $\da(p,C_i)$.
Intuitively, $p$ is much farther away from $C_j$ than from $C_i$, that is, $\beta \ds(p,C_i) \leq \ds(p,C_j)$.
It suffices to bound $\ds(p,C_j)$ by $d(p,q)$ and $\ds(p,C_i)$.
By the triangle inequality, 
\begin{eqnarray*}
\ds(p,C_j) & \leq & |C_j|d(p,q) + \ds(q,C_j) \\
\ds(q,C_j) & \leq & \frac{1}{\beta}\ds(q,C_i) \leq \frac{|C_i|}{\beta} d(p,q) + \frac{1}{\beta}\ds(p,C_i).
\end{eqnarray*}
Combining these inequalities, we have
$(\beta - \frac{1}{\beta})\ds(p,C_i) \leq   (|C_j|+ \frac{|C_i|}{\beta}) d(p,q).$
When $\alpha > 8 \frac{\max_i |C_i|}{\min_i |C_i|}$, $5\da(p,C_i) < d(p,q)$,
which then leads to the conclusion.
\end{proof}

It now suffices to prove by induction that the clustering $\clist \cap G$ is always laminar to $\mathcal{C} \cap G$.
It is true at the beginning by the property of Algorithm~\ref{alg:aekmedian_blobs}.
Assume for contradiction that the laminarity is first violated after merging $A$ and $D$.
There are two cases: 
\begin{itemize}
\item[(1)] $A$ and $D$ are strict subsets of different optimal clusters;
\item[(2)] $A$ is a strict subset of $G_i$ while $D$ is the union of the good points in several optimal clusters.
\end{itemize}

We have the following statements for the two cases respectively. By these two statements, we should first merge $A$ with $A'$ rather than with $D$, which is contradictory and completes the proof.

\begin{claim}\label{cla:case1}\label{cla:case2}
\begin{itemize}
\item[(1)] Suppose $A \in \clist, A \cap G \subsetneq G_i$, and $D \in \clist, D \cap G \subsetneq G_j (j\neq i)$.
Then there exists $A'\neq A$ in $\clist$ such that $A' \cap G \subsetneq G_i$ and $\dra(A, A') < \dra(A, D)$.
\item[(2)] Suppose $A \in \clist, A \cap G \subsetneq G_i$, and $D \in \clist$, $D \cap G$ is the union of good points in several optimal clusters. Then there exists $A'\neq A$ in $\clist$ such that $A' \cap G \subsetneq G_i$ and $\dra(A, A') < \dra(A, D)$.
\end{itemize}
\end{claim}
\mycomment{The proof of this claim is moved from the appendix to here.}
\begin{proof}
(1) It follows from the following three statements:
\begin{itemize}
\item[(a)] $\da(A \cap G, A' \cap G) < \frac{1}{2} \da(A \cap G, D \cap G)$;
\item[(b)] $\dra(A,A') \leq  \frac{7}{5}\da(A \cap G, A' \cap G);$
\item[(c)] $\frac{9}{10} \da(A \cap G, D \cap G) \leq  \dra(A,D).$
\end{itemize}

We now prove the statements respectively.

(a) For simplicity, let $G_A = A \cap G, G_D = D \cap G$. From Lemma~\ref{lem:goodSubsets}, we have
\begin{eqnarray*}
\da(G_A, C_i) \leq \gamma_{ji} \ \da(G_A,G_D), \mathrm{where} \ \ \gamma_{ji} = \frac{|C_j|}{(\beta-1/\beta)|C_i|} + \frac{1}{\beta^2-1}.
\end{eqnarray*}
Since $\ds(G_A, G_i \setminus G_A) \leq \ds(G_A, C_i)$, we have
\begin{eqnarray*}
\da(G_A, G_i \setminus G_A) \leq \frac{|C_i|}{|G_i \setminus G_A|}  \da(G_A, C_i) \leq \gamma_{ji} \frac{|C_i|}{|G_i \setminus G_A|} \da(G_A,G_D)
\leq \frac{1}{2} \da(G_A,G_D)
\end{eqnarray*}
where the last step follows from $\alpha \geq 6\frac{\max_i |C_i|}{\min_i |C_i|} + 2$, $|G_i\setminus A|$ is at least $\minSizeFrac\min_i |C_i| - m_B$.

(b) By Lemma~\ref{lem:FapproxG} and the fact that $|A|\geq \minSizeFrac\min_i |C_i|, |A'| \geq \minSizeFrac\min_i |C_i|$ and $\min_i |C_i| > 100 m_B$, we have
$$\ds(P(A), P(A')) \leq \frac{10}{9} \ds(P(A), A' \cap G) \leq \frac{100}{81} \ds(A \cap G, A' \cap G).$$
Then the claim follows from the fact that $|P(A)| \geq \frac{48}{50} |A|, | P(A')| \geq \frac{48}{50} |A'| $.

(c) For simplicity, let $G_A = A \cap G, G_D = D \cap G$. Divide $G_A$ into two parts: $W_A = G_A \cap P(A)$ and $X_A = G_A \cap F(A)$.
Define $W_D$ and $X_D$ similarly. See Figure~\ref{fig:AHF} for an illustration.

\begin{figure*}[h]
\centering

\begin{tikzpicture}
\usetikzlibrary{shapes,backgrounds}

\def\largeN{3}
\def\bbox{(-\largeN,-\largeN) rectangle (\largeN,\largeN)}

\begin{scope}[shift={(3cm,0cm)}]
\coordinate (center) at (0,0);
\def\A{(center) circle (1.2)}

\coordinate (F1) at (-1.2,1.6);
\coordinate (F2) at (2,-0.8);
\def\FP{(F1) -- (F2)}
\coordinate (F1int) at (F1 |- 0,\largeN);
\coordinate (F2int) at (F2 -| \largeN,0);
\def\FPup{\FP -- (F2int) -- (\largeN, \largeN) -- (F1int) --cycle}

\coordinate (B1) at (1.2,1.6);
\coordinate (B2) at (-2,-0.8);
\def\BG{(B1) -- (B2)}
\coordinate (B1int) at (B1 |- 0,\largeN);
\coordinate (B2int) at (B2 -| -\largeN,0);
\def\BGup{\BG -- (B2int) -- (-\largeN, \largeN) -- (B1int) --cycle}

\fill[blue!25] \A;
\node at (0,-0.5) {$W_D$};

\begin{scope}
	\clip \FPup;
	\fill[green!25] \A;
\end{scope}
\node at (0.6,0.6) {$X_D$};

\begin{scope}
	\clip \BGup;
	\fill[red!25] \A;
\end{scope}

\begin{scope}
	\clip \FPup;
	\clip \BGup;
	\fill[gray!25] \A;
\end{scope}

\node at (0,1.8) {$D$};

\draw \A;
\draw \FP;
\draw \BG;

\node at (1.6,-1.2) {$P(D)$};
\node at (2,-0.3) {$F(D)$};

\node at (-1.6,-1.2) {$G_D$};
\end{scope}

\begin{scope}[shift={(-3cm,0cm)}]
\coordinate (center) at (0,0);
\def\A{(center) circle (1.2)}

\coordinate (F1) at (-1.2,1.6);
\coordinate (F2) at (2,-0.8);
\def\FP{(F1) -- (F2)}
\coordinate (F1int) at (F1 |- 0,\largeN);
\coordinate (F2int) at (F2 -| \largeN,0);
\def\FPup{\FP -- (F2int) -- (\largeN, \largeN) -- (F1int) --cycle}

\coordinate (B1) at (1.2,1.6);
\coordinate (B2) at (-2,-0.8);
\def\BG{(B1) -- (B2)}
\coordinate (B1int) at (B1 |- 0,\largeN);
\coordinate (B2int) at (B2 -| -\largeN,0);
\def\BGup{\BG -- (B2int) -- (-\largeN, \largeN) -- (B1int) --cycle}

\fill[blue!25] \A;
\node at (0,-0.5) {$W_A$};

\begin{scope}
	\clip \FPup;
	\fill[green!25] \A;
\end{scope}
\node at (0.6,0.6) {$X_A$};

\begin{scope}
	\clip \BGup;
	\fill[red!25] \A;
\end{scope}

\begin{scope}
	\clip \FPup;
	\clip \BGup;
	\fill[gray!25] \A;
\end{scope}

\node at (0,1.8) {$A$};

\draw \A;
\draw \FP;
\draw \BG;

\node at (1.6,-1.2) {$P(A)$};
\node at (2,-0.3) {$F(A)$};
\node at (-1.6,-1.2) {$G_A$};
\end{scope}
\end{tikzpicture}
\caption{Illustration of the notations in Claim~\ref{cla:case1}.}\label{fig:AHF}
\end{figure*}
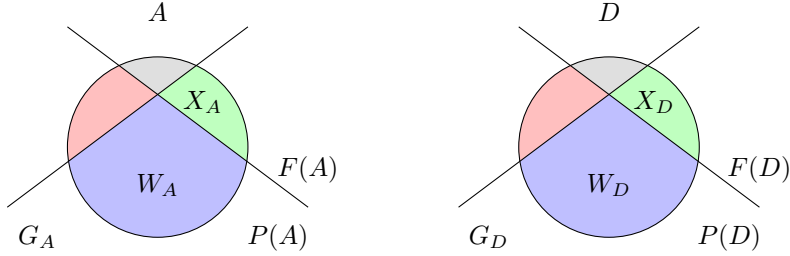

To show $\ds(G_A, G_D) \leq O(1) \ds(P(A), P(D))$,
it suffices to show $\ds(G_A, G_D) \leq O(1)\ds(W_A, W_D)$.
Since 
\begin{eqnarray*}
\ds(G_A,G_D)  &= & \ds(W_A, W_D) + \ds(G_A, X_D) + \ds(G_D, X_A) - d(X_A, X_D) \\
& \leq & \ds(W_A, W_D) + \ds(G_A, X_D) + \ds(G_D, X_A),
\end{eqnarray*}
we only need to bound $\ds(G_A,X_D)$ and $\ds(G_D,X_A)$.
By Fact~\ref{fac:tri} and Lemma~\ref{lem:goodSubsets} we have
$$\da(G_A,X_D) \leq \da(G_A, G_D) + \da(G_D, X_D) \leq (1+\frac{11}{50}) \da(G_A,G_D).$$
Since $|X_D| \leq 2m_B$ and $|D|\geq \frac{1}{2}\min_i |C_i|\geq 50 m_B$, we have
$$\ds(G_A,X_D) \leq \frac{61}{50}\frac{|X_D|}{|G_D|} \ds(G_A,G_D) \leq \frac{1}{20} \ds(G_A,G_D).$$
Similarly, $\ds(G_D,X_A) \leq \frac{1}{20}\ds(G_A,G_D)$. Therefore,
\begin{eqnarray*}
\ds(G_A,G_D) & = & \ds(W_A,W_D) + \ds(G_A, X_D) + \ds(G_D, X_A) - \ds(X_D, X_A) \\
& \leq & \ds(W_A,W_D) + \ds(G_A, X_D) + \ds(G_D, X_A) \\
& \leq  & \ds(W_A,W_D) + \frac{\ds(G_A,G_D)}{10} 
\end{eqnarray*}
which leads to $\frac{9}{10} \ds(G_A,G_D) \leq \ds(W_A,W_D) \leq \ds(P(A), P(D))$.
Then the claim follows from the fact that $|G_A|\geq |P(A)|, |G_D| \geq |P(D)| $.

(2) The proof idea is similar to that for Claim~\ref{cla:case1}.(1).
The only difference is the proof for 
$$\da(A \cap G, A' \cap G) < \frac{1}{2} \da(A \cap G, D \cap G).$$
Since $D \cap G = \cup_{j \in I_D} G_j$, it suffices to show that 
$$\da(A \cap G, A' \cap G) < \frac{1}{2} \da(A \cap G, G_j)$$ 
for any $j \in I_D$,
which can be proved by the same argument as in Claim~\ref{cla:case1}.(1).
\end{proof}

Applying the claim completes the proof of Lemma~\ref{thm:pruning}.
\end{proof}

\subsubsection{Getting A Pruning Close to the Optimal Clustering}\label{sec:pruning}
We now show that the pruning $\mathcal{C}'$ that assigns all good points correctly is the pruning with the minimum robust min-sum cost.

\begin{lemma}\label{thm:robustPruning}
Suppose the pruning $\mathcal{C}'=\{C'_1, \dots, C'_k\}$ in tree $\mathcal{T}$ assigns all good points correctly.
Then $\mathcal{C}'$ is the minimum robust min-sum cost pruning in the tree.
\end{lemma}

\begin{proof}
Computing the robust min-sum cost will eliminate the effect of the bad points and work as if we knew the actual good points:
the robust min-sum cost saved by splitting a node is at most the good point cost saved (Claim~\ref{lem:robustCostSaved}),
and the robust min-sum cost increased by merging two nodes is of the same order as the good point cost increased (Claim~\ref{lem:robustMergeCost} and Corollary~\ref{cor:multiRobustMergeCost}).


\begin{claim}\label{lem:robustCostSaved}
If $|C'_i| \geq 20m_B$, then $\drs(C'_i) \leq \ds(G_i, G_i)$.
\end{claim}

\begin{proof}
The claim follows from Claim~\ref{lem:farPointCost} (See Figure~\ref{fig:WVXY1} for an illustration of the notations) by setting $A=C'_i$ and $H = G_i$. In particular, we have $\drs(A) = \ds(P,P) \leq \ds(W,W)+ 2 \ds(Y,W \dcup Y)$
and $\ds(H,H) \geq \ds(W,W) + 2 \ds(X,W)$.
By Claim~\ref{lem:farPointCost}, $\ds(Y,W \dcup Y) \leq \ds(X,W)$, which completes the proof.
\end{proof}


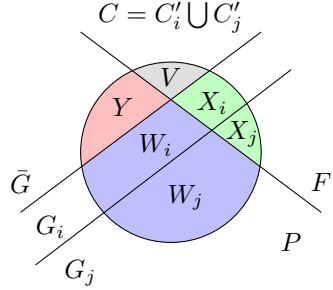
\begin{figure}[!ht]
\centering

\begin{tikzpicture}
\usetikzlibrary{shapes,backgrounds}

\def\largeN{2.2}
\def\bbox{(-\largeN,-\largeN) rectangle (\largeN,\largeN)}
\clip \bbox;

\coordinate (center) at (0,0);
\def\A{(center) circle (1.2)}

\coordinate (F1) at (-1.2,1.6);
\coordinate (F2) at (2,-0.8);
\def\FP{(F1) -- (F2)}
\coordinate (F1int) at (F1 |- 0,\largeN);
\coordinate (F2int) at (F2 -| \largeN,0);
\def\FPup{\FP -- (F2int) -- (\largeN, \largeN) -- (F1int) --cycle}

\coordinate (B1) at (1.2,1.6);
\coordinate (B2) at (-2,-0.8);
\def\BG{(B1) -- (B2)}
\coordinate (B1int) at (B1 |- 0,\largeN);
\coordinate (B2int) at (B2 -| -\largeN,0);
\def\BGup{\BG -- (B2int) -- (-\largeN, \largeN) -- (B1int) --cycle}

\coordinate (G1) at (1.6,1.1) {};
\coordinate (G2) at (-1.8,-1.5) {};
\def\GG{(G1) -- (G2)}
\coordinate (G1int) at (G1 |- 0,\largeN);
\coordinate (G2int) at (G2 -| -\largeN,0);
\def\GGup{\GG -- (G2int) -- (-\largeN, \largeN) -- (G1int) --cycle}

\fill[blue!25] \A;
\node at (0.2,-0.6) {$W_j$};
\node at (-0.2,0.1) {$W_i$};

\begin{scope}
	\clip \FPup;
	\fill[green!25] \A;
\end{scope}
\node at (0.55,0.65) {$X_i$};
\node at (0.95,0.25) {$X_j$};

\begin{scope}
	\clip \BGup;
	\fill[red!25] \A;
\end{scope}
\node at (-0.64,0.6) {$Y$};

\begin{scope}
	\clip \FPup;
	\clip \BGup;
	\fill[gray!25] \A;
\end{scope}
\node at (0,1) {$V$};

\node at (0,1.8) {$C=C'_i \bigcup C'_j$};

\draw \A;
\draw \FP;
\draw \BG;
\draw \GG;

\node at (1.6,-1.2) {$P$};
\node at (2,-0.4) {$F$};

\node at (-1.2,-1.6) {$G_j$};
\node at (-1.6,-1) {$G_i$};
\node at (-2,-0.4) {$\bar{G}$};
\end{tikzpicture}
\vspace{-.2in}
\caption{Notations in Claim~\ref{lem:robustMergeCost}.}\label{fig:WVXY2}
\end{figure}

\begin{claim}\label{lem:robustMergeCost}
For $t \in \{i,j\}$, $|C_t|\geq 100 m_B$,
and $C'_t $ contains all good points in $C_t$ but no good points in other optimal clusters.
Then
$
\drs(C'_i \cup C'_j) - \ds(G_i, G_i) - \ds(G_j, G_j) \geq (\frac{4}{3}-\frac{4}{\beta}) \ds(G_i, G_j).
$
\end{claim}

\begin{proof}
Let $C= C'_i \cup C'_j, F=F(C), P=P(C)$, and $G=G_i  \dcup  G_j, \bar{G}=C\setminus G$. Define $W_i = G_i \cap P, X_i = G_i \cap F$; define $W_j$, $X_j$ similarly.
Also, define $Y = P \cap \bar{G}, V=F \cap \bar{G}$. See Figure~\ref{fig:WVXY2}.
Then the left-hand side of the statement is
\begin{eqnarray}
&& \ds(P, P) - \ds(G_i, G_i) - \ds(G_j, G_j) \nonumber\\
& \geq & 2 (\ds(W_i, W_j) - \ds(X_i, G_i) - \ds(X_j, G_j)).\label{eqn:robustMergeCost}
\end{eqnarray}
By Lemma~\ref{lem:boundMergeCost}, we have $\ds(C_i,C_j) \leq \frac{3}{2}\ds(W_i,W_j).$
By the definition of good points,
$$
\ds(X_i, C_i) + \ds(X_j, C_j) \leq \frac{\ds(X_i, C_j) + \ds(X_j, C_i)}{\beta} \leq \frac{2}{\beta}\ds(C_i,C_j).
$$ 
Plugging these into (\ref{eqn:robustMergeCost}), we have that the left-hand side of the statement is at least $(\frac{4}{3}-\frac{4}{\beta}) \ds(C_i,C_j) \geq (\frac{4}{3}-\frac{4}{\beta}) \ds(G_i,G_j)$.
\end{proof}

The same argument as that for Claim~\ref{lem:robustMergeCost} leads to a corollary for the general case when multiple clusters are merged.
\begin{corollary}\label{cor:multiRobustMergeCost}
Let $I \subseteq [k]$.
Suppose for any $t \in I$, $|C_t|\geq 100 m_B$,
and $C'_t $ contains all good points in $C_t$ but no good points in other optimal clusters.
Then
$$\drs\left(\cup_{t\in I} C'_t \right) - \sum_{t\in I} \ds(G_t, G_t) \geq (\frac{4}{3}-\frac{4}{\beta}) \sum_{s \neq t \in I} \ds(G_t, G_s).$$
\end{corollary}

Besides these claims, another key property we need is that the good points in different optimal clusters are far apart in the sense that the good points from two different clusters have cost much larger than those in a third cluster have, as formalized in Lemma~\ref{lem:splitMerge}.  The proof of this lemma is technical  and not related to the other parts of the proof, so we defer it to Appendix~\ref{app:goodpointprp}.

\begin{lemma}\label{lem:splitMerge}
For any three different optimal clusters $C_i, C_j$, and $C_l$,
and any $A \subset G_i$, $\frac{18}{5}\ds(A, G_i\setminus A) < \ds(G_j, G_l).$
Consequently, $\frac{9}{5}\ds(G_i, G_i) < \ds(G_j, G_l).$
\end{lemma} 

Given the claims and Lemma~\ref{lem:splitMerge}, We are now ready to prove Lemma~\ref{thm:robustPruning}.

\mycomment{The proof of Lemma~\ref{thm:robustPruning} is moved from the appenix to here.}

First, by Lemma~\ref{lem:splitMerge}, good points from different clusters are far apart while good points in the same cluster are close.
Second, by Claim~\ref{lem:robustCostSaved} and Corollary~\ref{cor:multiRobustMergeCost},
the cost of good points can be approximated by the cost of the potentially good points (the robust min-sum cost).
We now use the above lemmas to show that $\mathcal{C}'$ has minimum robust min-sum cost,
so that we can use dynamic programming on the tree to get the pruning.

Suppose a pruning $\mathcal{P}$ is obtained by splitting $h$ clusters in $\mathcal{C}'$ and at the same time
joining some other clusters into $g$ unions.
Specifically, for $1\leq i \leq h$, split $C'_i$ into $m_i \geq 2$ clusters $S_{i,1},\dots, S_{i, m_i}$;
after that, merge $C'_{h + 1}, \dots, C'_{h + l_g}$ into $g$ unions,
that is, for $1\leq j \leq g$, $l_0 = 0$, merge $l_j - l_{j-1} \geq 2$ clusters $C'_{h+l_{j-1}+1},\dots,C'_{h+l_{j}}$
into a union $U_j$; the other clusters in $\mathcal{C}'$ remain the same in $\mathcal{P}$.
Since the number of clusters is still $k$, we have $\sum_{i} m_i - h = l_g - g$.

By Claim~\ref{lem:robustCostSaved}, the cost saved by splitting the $h$ clusters is
\begin{eqnarray}
\sum_{1\leq i \leq h} \drs(C'_i) - \sum_{1\leq i \leq h} \sum_{1\leq p \leq m_i} \drs(S_{i,p}) \leq \sum_{1\leq i \leq h} \drs(C'_i) \leq \sum_{1\leq i \leq h} \ds(G_i, G_i).\label{eqn:saveCost}
\end{eqnarray}
The cost increased by joining clusters is
\begin{eqnarray}
&&\sum_{1\leq j \leq g} \left( \drs(U_j) - \sum_{h+l_{j-1} < t  \leq h+l_{j}} \drs(C'_t) \right)\nonumber\\
& \geq & \sum_{1\leq j \leq g} \left(\drs(U_j) - \sum_{h+l_{j-1} < t  \leq h+l_{j}} \ds(G_t, G_t) \right) \nonumber\\
& \geq & \sum_{1\leq j \leq g} \left(\sum_{h+l_{j-1} < t  \neq s \leq h+l_{j}}(\frac{4}{3}-\frac{4}{\beta})\ds(G_t, G_s) \right)\label{eqn:increaseCost}
\end{eqnarray}
where the first inequality follows from Claim~\ref{lem:robustCostSaved}, and the second inequality follows from Corollary~\ref{cor:multiRobustMergeCost}.
To prove $\mathcal{C}'$ is the minimum cost pruning, we need to show that the saved cost (\ref{eqn:saveCost}) is less than the increased cost (\ref{eqn:increaseCost}).
Since by Lemma~\ref{lem:splitMerge}, each term in (\ref{eqn:increaseCost}) is larger than any term in (\ref{eqn:saveCost}),
it is sufficient to show that the number of the terms in (\ref{eqn:increaseCost}) is no less than the number of the terms in (\ref{eqn:saveCost}), that is $\sum_{1\leq j \leq g} {l_j - l_{j-1} \choose 2} \geq h.$
We have $\sum_{j} {l_j - l_{j-1} \choose 2} = \frac{1}{2}\sum_j (l_j - l_{j-1})(l_j - l_{j-1} - 1) \geq \sum_j (l_j - l_{j-1} - 1) = l_g - g$,
where the inequality is from $l_j - l_{j-1} \geq 2$.
Since $ m_i \geq 2$,  $l_g - g = \sum_{i-1}^h m_i - h \geq h$, which completes the proof.
\end{proof}

\subsubsection{Getting a Good Approximation}\label{sec:cost}
We now show that Algorithm~\ref{alg:minsum_app} outputs a good approximation.
We first prove that after reassignment all good points are still assigned correctly (Lemma~\ref{lem:goodPointsAssign}),
and then bound the cost.

\begin{figure*}[!h]
\centering

\begin{tikzpicture}
\usetikzlibrary{shapes,backgrounds}

\def\maxx{3}
\def\maxy{2}
\def\bbox{(-\maxx,-\maxy) rectangle (\maxx,\maxy)}
\clip \bbox;

\def\largeN{3}
\coordinate (center) at (0,0);
\def\A{(center) circle (1.2)}

\coordinate (F1) at (-1.2,1.6);
\coordinate (F2) at (2,-0.8);
\def\FP{(F1) -- (F2)}
\coordinate (F1int) at (F1 |- 0,\largeN);
\coordinate (F2int) at (F2 -| \largeN,0);
\def\FPup{\FP -- (F2int) -- (\largeN, \largeN) -- (F1int) --cycle}

\coordinate (B1) at (1.2,1.6);
\coordinate (B2) at (-2,-0.8);
\def\BG{(B1) -- (B2)}
\coordinate (B1int) at (B1 |- 0,\largeN);
\coordinate (B2int) at (B2 -| -\largeN,0);
\def\BGup{\BG -- (B2int) -- (-\largeN, \largeN) -- (B1int) --cycle}

\fill[blue!25] \A;
\node at (0,-0.5) {$W$};

\begin{scope}
	\clip \FPup;
	\fill[green!25] \A;
\end{scope}
\node at (0.6,0.6) {$X$};

\begin{scope}
	\clip \BGup;
	\fill[red!25] \A;
\end{scope}
\node at (-0.64,0.6) {$Y$};

\begin{scope}
	\clip \FPup;
	\clip \BGup;
	\fill[gray!25] \A;
\end{scope}
\node at (0,0.96) {$V$};

\node at (0,1.8) {$C'_i$};

\draw \A;
\draw \FP;
\draw \BG;

\node at (1.6,-1.2) {$P(C'_i)$};
\node at (2,-0.3) {$F(C'_i)$};

\node at (-1.6,-1.2) {$G_i$};

\end{tikzpicture}
        \caption{Notations in Lemma~\ref{lem:goodPointsAssign} and Claim~\ref{cla:costAD}.}\label{fig:WVXYi_app}
\end{figure*}
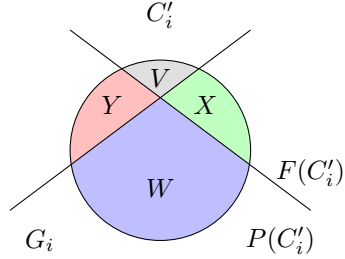

\begin{lemma}\label{lem:goodPointsAssign}
For any $p \in G_i$, any $j\neq i$, $\ds(p, P(C'_j)) > \ds(p, P(C'_i))$.
\end{lemma}

\mycomment{The proof of this lemma is moved from appendix to here}
\begin{proof}
Let $W_i = G_i \cap P(C'_i)$ denote the good points that are also potentially good points, and
let $Z_i = C_i \setminus W_i$ denote all other points in $C_i$.
See Figure~\ref{fig:WVXYi_app} for an illustration.
By Lemma~\ref{lem:FapproxG}, $\ds(p,P(C'_i)) \approx \ds(p,C_i)$.
By the definition of good points, $\beta \ds(p,C_i) \leq \ds(p,C_j)$.
So it suffices to show that $\ds(p, P(C'_j))$ is not so small compared to $\ds(p,C_j)$.
Since $W_j \subseteq P(C'_j)$, it suffices to prove that $\ds(p,W_j)$ is large compared to $\ds(p,Z_j)$.

First, by the triangle inequality, $\ds(p,Z_j) \leq \frac{|Z_j|}{|W_j|}\ds(p,W_j) + \frac{1}{|W_j|} \ds(Z_j,W_j)$.
Also, $\ds(Z_j,W_j) \leq \ds(C_j, W_j) \leq \frac{1}{\beta} \ds(C_i, W_j)$ by the definition of good points.
Furthermore, $\ds(C_i, W_j) \leq |W_j| \ds(p, C_i) + |C_i| \ds(p,W_j)$. So
\begin{eqnarray*}
\ds(p,Z_j) &\leq & \left(\frac{|Z_j|}{|W_j|} + \frac{|C_i|}{\beta|W_j|} \right) \ds(p,W_j) + \frac{1}{\beta} \ds(p,C_i) \\
&\leq & \left(\frac{|Z_j|}{|W_j|} + \frac{|C_i|}{\beta|W_j|} \right) \ds(p,W_j) + \frac{1}{\beta^2} (\ds(p,Z_j) +  \ds(p,W_j)).
\end{eqnarray*}
Therefore, we have $\ds(p,Z_j) \leq \frac{1}{3}\ds(p,W_j)$, since $|Z_j| \leq 4m_B, |W_j| \geq \frac{95}{100}|C_j| \geq 95 m_B$.
This leads to $\ds(p,W_j) \geq \frac{3}{4} \ds(p,C_j)$.

Then the lemma follows from $\ds(p, P(C'_j)) \geq \ds(p, W_j)$ and
$$\ds(p, W_j) \geq \frac{3\ds(p,C_j) }{4} \geq \frac{3\beta \ds(p,C_i)}{4} \geq \frac{3\beta \ds(p,G_i)}{4} \geq \frac{30\beta \ds(p,P(C'_i)) }{44}$$
where the last step follows from Lemma~\ref{lem:FapproxG}.
\end{proof}

We are now ready to prove our final result.
\begin{proofof}{Theorem~\ref{thm:alpha_espilon_minsum}}
By Lemma~\ref{lem:goodPointsAssign}, all the good points in $C_i$ are assigned correctly to $C''_i$.
Let $A_i = C''_i \setminus G_i$ denote all the bad points assigned to $C''_i$.
The cost of the output clustering $\mathcal{C''}$ can be written as follows.
\begin{eqnarray}
\sum_i \ds(C''_i, C''_i) & = & \sum_i \ds(G_i  \dcup  A_i, G_i  \dcup  A_i) \label{eqn:cost_decomp}\\
& = &\sum_i \ds(G_i,G_i) + 2 \sum_i \ds(G_i, A_i) + \sum_i \ds(A_i,A_i).\nonumber
\end{eqnarray}
We need to bound the last two terms.

Let $r = \frac{\min_i |C_i|}{m_B}$.
By the triangle inequality, we have $\da(A_i,A_i) \leq 2\da(A_i, G_i) $, leading to
\begin{eqnarray}
\ds(A_i,A_i) \leq  \frac{2 |A_i|}{|G_i|} \ds(A_i, G_i)  \leq \frac{2 m_B}{|C_i| - m_B} \ds(A_i, G_i)  \leq \frac{2}{r-5} \ds(A_i, G_i). \label{eqn:cost_aa}
\end{eqnarray}
So it suffices to bound $\ds(A_i, G_i)$. We have the following claim for this.

\begin{claim}\label{cla:costAD}
$\sum_i \ds(A_i,G_i) \leq \frac{r^2}{(r-5)^2} \sum_i \ds(C_i,C_i) - \frac{r^2-1}{(r-5)^2} \sum_i \ds(G_i,G_i)$.
\end{claim}
\mycomment{The proof of this claim is moved from appendix to here}
\begin{proof}
Let $W_i = P(C'_i) \cap G_i$. See Figure~\ref{fig:WVXYi_app} for an illustration.
By Fact~\ref{fac:tri}, 
\begin{eqnarray}
 \ds(A_i, G_i) &\leq & \frac{|G_i|}{|W_i|}  \ds(A_i, W_i) + \frac{|A_i|}{|W_i|}\ds(G_i, W_i)  \label{eqn:costAD1}\\
& \leq  & \frac{|G_i|}{|W_i|}  \ds(A_i, P(C'_i)) +  \frac{|A_i|}{|W_i|}\ds(G_i, G_i). \nonumber
\end{eqnarray}
So it suffices to bound $\ds(A_i, P(C'_i))$.
Fix $p \in A_i$, and suppose $p\in C_j$.
We have 

\begin{eqnarray*}
\ds(p, P(C'_i)) & \leq & \ds(p,P(C'_j)) \leq \frac{|W_j| + |Y_j|}{|W_j| - |X_j|} \ds(p, G_j) \\
& \leq & \frac{|C_j|}{|C_j| - 2|X_j| - |Y_j|} \ds(p, G_j) = \frac{r}{r-5} \ds(p, G_j)
\end{eqnarray*}
where the second step follows from Lemma~\ref{lem:FapproxG} and the last from $|X_j| \leq 2 m_B$ and $ |Y_j| \leq m_B$.
Then
\begin{eqnarray}
\sum_{i=1}^k  \ds(A_i, P(C'_i)) & \leq  & \frac{r}{r-5} \sum_ j \sum_{p \in (\cup_i A_i) \cap C_j} \ds(p, G_j) =  \frac{r}{r-5} \sum_{j=1}^k  \ds(B_j, G_j) \nonumber\\
& \leq & \frac{r}{r-5} \sum_{j=1}^k  (\ds(C_j, C_j) - \ds(G_j,G_j)).\label{eqn:costAD2}
\end{eqnarray}
The claim follows from the inequalities (\ref{eqn:costAD1}), (\ref{eqn:costAD2}) and $|X_i| \leq 2 m_B, |A_i| \leq m_B$.
\end{proof}

The proof of correctness is completed by combining Claim~\ref{cla:costAD},  (\ref{eqn:cost_decomp}),  and (\ref{eqn:cost_aa}).

\paragraph{Running Time}
Algorithm~\ref{alg:aekmedian_blobs} takes time $O(n^{\omega+1})$ (as shown in the proof of Theorem~\ref{thm:aekmedian}),
and the rest steps of Algorithm~\ref{alg:raLinkage} take time $O(n^3)$.
Finding the minimum robust min-sum cost pruning in the tree output by Algorithm~\ref{alg:raLinkage} takes time $O(n^3)$,
and Algorithm~\ref{alg:minsum_app} takes time $O(n^3)$.
So the total running time is $O(n^{\omega+1})$.
\end{proofof}

\section{Discussion and Open Questions}

We advance the line of research on clustering under
perturbation resilience in multiple ways. For $\alpha$-perturbation resilient instances,
we improve on the known guarantees for center-based objectives and
give the first analysis for min-sum.
Furthermore, for $k$-median and min-sum, we analyze and give the first algorithmic guarantees
known for a relaxed but more challenging condition of $(\alpha,
\epsilon)$-perturbation resilience, where an $\epsilon$ fraction of
points are allowed to move after perturbation.
We also give sublinear-time algorithms for $k$-median and min-sum under perturbation resilience.

A natural direction for future investigation is to explore whether one can take advantage of smaller perturbation factors for
perturbation resilient instances in Euclidian spaces\footnote{That is, where $d$ is a
Euclidean metric, though as in Definitions~\ref{def:alphaPR} and~\ref{def:alphaPRr}, $d'$ need not be.
Alternatively, one could also consider a natural version of Definitions~\ref{def:alphaPR} and~\ref{def:alphaPRr} in which $d'$ must be Euclidean as well, and in fact implemented via
a perturbation of coordinate values.}. More broadly, it would be interesting to
explore other ways in which perturbation resilient instances behave better than worst case instances (e.g., natural algorithms converge faster).

Another interesting direction is to design clustering algorithms under perturbation resilience \yingyu{whose output satisfies certain privacy requirements.}
For example, some stability notions can be useful for differential private analysis~\cite{nissim2007smooth,dwork2009differential}.
It would be interesting to explore the perturbation resilience property and design efficient clustering algorithms that preserve differential privacy.


\subsection*{Acknowledgments}
We  thank  Avrim  Blum  for  numerous  useful  discussions.
This  work  was  supported  in  part  by  NSF  grants  CCF-0953192,
CCF-1101283, CCF-1451177, ONR  N00014-09-1-0751,  AFOSR  grant
FA9550-09-1-0538,  a  Microsoft  Faculty  Fellowship, a Google Research
Award, and a Sloan Fellowship.

\bibliographystyle{abbrv}
\bibliography{paper}

\appendix

\section{Finding the Minimum Cost $k$-Cluster Pruning}\label{subsec:dp}

The idea of using dynamic programming to find the optimal $k$-clustering in a tree of clusters is proposed
in~\cite{ABS10}. We can find the optimal clustering by examining the entire tree of clusters produced.

\yingyu{First recall our setting. Suppose we have a tree whose leaves are the data points. Each internal node of the tree represents a cluster that contains all points in the clusters represented by its children. Also suppose that the clustering objective is separable:  (1) the objective function value of a given clustering is
either a (weighted) sum or the maximum of the individual cluster scores;
(2) given a proposed single cluster, its score can be computed
in polynomial time.  Our goal is to find a pruning of the tree that has $k$ clusters and has minimum cost.}

\yingyu{We first consider the case when each node of the tree has at most $2$ children.}
Denote the cost of the optimal $m$-clustering of a tree node $p$ as $\cost(p,m)$.
The optimal $m$-clustering of a tree node $p$ is either the entire subtree as one cluster ($m = 1$),
or the minimum over all choices of $m_1$-clustering over its left subtree and $m_2$-clustering over
its right subtree ($1< m \leq k$), where $m_1, m_2$ are positive integers such that
$m_1 + m_2 = m$.
Therefore, we can traverse the tree bottom up, recursively solving the $m$-clustering problem
for $1\leq m\leq k$ for each tree node. The algorithm is presented in
Algorithm~\ref{DynamicProgramming}.
Suppose that computing the cost of a cluster takes time $O(t)$ ($O(n^2)$ for $k$-median, $k$-means and min-sum).
Since there are $O(n)$ nodes, and on each node $p$, computing $\cost(p, 1)$ takes $O(t)$ time,
computing $\cost(p, m)(1 < m \leq k)$ takes $O(k^2)$, in total the algorithm takes time $O(nt+nk^2)$.

Note that when $\mathcal{T}$ is a multi-branch tree and not suitable for dynamic programming, we need to turn it into a 2-branch tree $\mathcal{T}'$ as follows.
For each node with more than 2 children, for example, the node $R$ with children $R_1, R_2, \dots, R_t(t>2)$, we first
merge $R_1$ and $R_2$ into one node, then merge this node with $R_3$; repeat until we merge all nodes $R_1, R_2, \cdots, R_{t}$ into $R$.
In this way, we get a 2-branch tree $\mathcal{T}'$ and can run dynamic programming on it.
Note that each pruning in $\mathcal{T}$ has a corresponding pruning in $\mathcal{T}'$,
so the minimum cost pruning of $\mathcal{T}'$ has no greater cost than the minimum cost pruning of $\mathcal{T}$.
Also note that when the cost function is center-based, such as $k$-median,
the algorithm essentially computes a center for
the node $p$ when computing $\cost(p, 1)$.
So it can output the centers together with the pruning.

\begin{algorithm}[tbhp]
\caption{Dynamic Programming in Tree of Clusters}
\label{DynamicProgramming}
\begin{algorithmic}[1]
\REQUIRE{A tree of clusters $\mathcal{T}$ on a data set $\data$, distance function $d(\cdot, \cdot)$ on $\data$, $k$.}
\STATE{Traverse $\mathcal{T}$ bottom up.}
\FOR{each node $R \in \mathcal{T}$}
\STATE{Calculate $\cost(R, 1)$. For $1 < m \leq k$, calculate $\cost(R,m)$ as follows.}
\IF{$R$ is a leaf}
\STATE{$\cost(R,m) = \cost(R,1)$.}
\ELSE
\STATE{$\cost(R,m)= \min\{\cost(R_1,m_1) + \cost(R_2, m_2)\}$, where $R_1, R_2$ are $R$'s children, $m_1 + m_2 = m$, and the minimum is taken over all possible $R_1, R_2, m_1$, and $m_2$.}
\ENDIF
\ENDFOR
\STATE{Traverse backwards to get the $k$-clustering $\mathcal{C}$ that achieves $\cost(r, k)$ where $r$ is the root.}
\ENSURE{The $k$-clustering $\mathcal{C}$.}
\end{algorithmic}
\end{algorithm}

\section{An Efficient Implementation of Algorithm~\ref{ClosureLinkage}}\label{subsec:running_time}

Here we show an efficient implementation of Algorithm~\ref{ClosureLinkage}, namely Algorithm~\ref{alg:EffImpClosureLink}.
This implementation takes time only $O(n^3)$.

\begin{algorithm}[!t]
\caption{Efficient Implementation of Algorithm~\ref{ClosureLinkage}}
\label{alg:EffImpClosureLink}
\begin{algorithmic}[1]
\REQUIRE{Data set $\data$, distance function $d(\cdot, \cdot)$ on $\data$.}
\STATE{Sort all the pairwise distances in ascending order.}
\FOR{each $p\in \data$ and $1\leq i\leq n$}
\STATE{Compute $L^p$, $\chi(p,i)$ according to Definition~\ref{def:list}. Then compute $\chi^*(p,i)$ by Equation~(\ref{eqn:update_checking}).}
\ENDFOR
\STATE{Let the current clustering be $n$ singleton clusters.}
\FOR{$d(p,q)$ in ascending order}
\STATE{Suppose $q = L^p_i$. Check if $d(p, q)$ satisfies the three claims in Fact~\ref{lemma:efficientImpl}, where the third claim can be checked by verifying if $\chi^*(p,i)=-1$.}
\STATE{If so, merge all the clusters covered by $\mathbb{B}(p, d(p,q))$.}
\ENDFOR
\STATE{Construct the tree $\mathcal{T}$ with points as leaves and internal nodes corresponding to the merges performed.}
\STATE{Run dynamic programming on $\mathcal{T}$ to get the minimum cost pruning ${\cal \tilde{C}}$.}
\ENSURE{The clustering ${\cal \tilde{C}}$.}
\end{algorithmic}
\end{algorithm}

Note that at each merge step in Algorithm~\ref{ClosureLinkage}, we only need to find the two clusters with the minimum closure distance.
So we hope to compute the minimum closure distance without computing all the distances
between any two current clusters.
First we notice the following facts.
\begin{fact}\label{lemma:minClosureDistanceProperty}
In the execution of Algorithm~\ref{ClosureLinkage}, if $d$ is the minimum closure distance for the current
clustering, then 
\begin{itemize}
\item[(1)] there exist $c, p \in S$ such that $d = d(c, p)$;
\item[(2)] $d$ is no less than the minimum closure distances in previous clusterings.
\end{itemize}
\end{fact}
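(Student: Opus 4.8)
The plan is to prove the two parts separately, both resting on one elementary observation: for a fixed point $c \in S$, the ball $\mathbb{B}(c,r)$ \emph{as a subset of $S$} takes only finitely many values as $r$ ranges over $[0,\infty)$, and it is constant on each interval $[r_i,r_{i+1})$, where $0=r_0<r_1<\dots<r_m$ are the distinct values of $d(c,p)$, $p\in S$, with $\mathbb{B}(c,r)=S$ for $r\geq r_m$. Since both the coverage and the margin requirements in the definition of closure distance depend on $c$ and $r$ only through the set $\mathbb{B}(c,r)$, the predicate ``$(c,r)$ is a valid witness for $d_S(A,A')$'' is constant on each such interval.

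For part (1), fix two current clusters $A,A'$ achieving the minimum closure distance, and let $c^*$ be an optimal center realizing $d_S(A,A')=d^*$. The set of radii that are valid witnesses for $c^*$ is nonempty — $r=\max_{p,q\in S}d(p,q)$ works, since then no point lies outside the ball and the margin condition is vacuous — and by the observation above it is a finite union of intervals with left endpoints among $r_0,\dots,r_m$, so its infimum is attained and equals some $r_i=d(c^*,p)$ with $p\in S$ (if $d^*$ lay strictly inside such an interval, its left endpoint would be a strictly smaller valid radius, contradicting minimality). Hence $d_S(A,A')=d(c^*,p)$, so the minimum closure distance of the current clustering, being equal to $d_S(A,A')$ for the minimizing pair, is of the claimed form $d(c,p)$.

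For part (2), it suffices to show that a single merge step of Phase 1 of Algorithm~\ref{ClosureLinkage} cannot decrease the minimum closure distance; the statement then follows by induction on the merges. Let $\mathcal{C}_t$ be the clustering just before a merge, with minimum closure distance $d_t$ attained by the pair $C,C'$ that the algorithm merges, and let $\mathcal{C}_{t+1}$ be the result. I must show $d_S(D,D')\geq d_t$ for every pair of distinct clusters $D,D'$ of $\mathcal{C}_{t+1}$. If neither $D$ nor $D'$ equals $C\cup C'$, then $D,D'\in\mathcal{C}_t$ and $d_S(D,D')\geq d_t$ by minimality of $d_t$. Otherwise say $D'=C\cup C'$ and $D\in\mathcal{C}_t$; take an optimal center/radius pair $(c,\hat d)$ for $d_S(D,C\cup C')$, so $D\cup C\cup C'\subseteq\mathbb{B}(c,\hat d)$ and the margin condition holds for $\mathbb{B}(c,\hat d)$. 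If $c\in D$, then the same pair $(c,\hat d)$ satisfies coverage and margin for the pair $(D,C)$ with center $c\in D\subseteq D\cup C$, whence $d_S(D,C)\leq\hat d$ and, since $D,C\in\mathcal{C}_t$, $\hat d\geq d_S(D,C)\geq d_t$. If instead $c\in C$ or $c\in C'$, then $(c,\hat d)$ is a valid witness for the pair $(C,C')$ (its center lies in $C\cup C'$ and $C\cup C'\subseteq\mathbb{B}(c,\hat d)$), so $\hat d\geq d_S(C,C')=d_t$. In all cases $d_S(D,D')\geq d_t$, so $\mathcal{C}_{t+1}$ has minimum closure distance at least $d_t$.

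The step I expect to require the most care is the monotonicity argument in part (2): closure distance is \emph{not} monotone under enlarging the pair of sets to be covered, and the argument works only because the optimal center for $d_S(D,C\cup C')$ necessarily lies in one of the original pieces $D$, $C$, or $C'$, so that it remains an admissible center for a pair already present in $\mathcal{C}_t$. Everything else is routine unwinding of the definitions together with the piecewise-constant behaviour of balls in the radius.
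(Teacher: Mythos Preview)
Your proposal is correct and follows essentially the same approach as the paper. The paper's proof is much terser---for (1) it simply takes $p$ to be the farthest point from $c$ inside the optimal ball (which is the same as jumping to the left endpoint of your half-open interval), and for (2) it just says ``clusters in the current clustering are supersets of those in previous clusterings,'' which is exactly the idea you unpack in your inductive case analysis.
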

\begin{proof}
For the first claim, let $c$ be the center of the ball in the definition of closure distance,
and $p$ be the farthest point from the center in the ball, then $d = d(c,p)$.
The second claim comes from the fact that the clusters in the current clustering are supersets of
those in previous clusterings.
\end{proof}

Fact~\ref{lemma:minClosureDistanceProperty} implies that we can check in ascending order the pairwise distances no less
than the minimum closure distance in the last clustering, and determine if the checked pairwise distance is
the minimum closure distance in the current clustering.
More specifically, suppose we have some black-box method for checking if a pairwise distance
is the minimum closure distance in the current clustering,
we can perform the closure linkage as follows:
sort the pairwise distances in a list in ascending order;
start from the first distance in the list;
check if the current distance is the minimum closure distance in the current clustering;
if it is, merge clusters covered by the ball defined by the checked distance;
continue to check the next distance in the list.
So it is sufficient to design a method to determine if a pairwise distance
is the minimum closure distance in the current clustering.
Our method is based on the following facts.

\begin{fact}\label{lemma:efficientImpl}
In Algorithm~\ref{ClosureLinkage}, if $d(c,p)$ is the minimum closure distance for the current
clustering, then
\begin{itemize}
\item[(1)] at least 2 clusters intersect $\mathbb{B}(c,d(c,p))$;
\item[(2)] all the clusters intersecting $\mathbb{B}(c,d(c,p))$ are covered by $\mathbb{B}(c,d(c,p))$;
\item[(3)] for any $p' \in \mathbb{B}(c,d(c,p)),q\not\in \mathbb{B}(c,d(c,p)),d(c,p')<d(p',q)$.
\end{itemize}
\end{fact}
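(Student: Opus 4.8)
Write $d:=d(c,p)$ for the minimum closure distance of the current clustering, and let $A,A'$ be the two distinct clusters realizing it, with witness center $c\in A\cup A'$, so that $A\cup A'\subseteq\mathbb{B}(c,d)$ and $\mathbb{B}(c,d)$ satisfies the margin requirement of the closure distance. As in the proof of Fact~\ref{lemma:minClosureDistanceProperty}, if no point of $\mathbb{B}(c,d)$ were at distance exactly $d$ from $c$ we could replace $d$ by a slightly smaller value without changing $\mathbb{B}(c,d)$ as a set, so that $A\cup A'$ would still be covered and margin would still hold, contradicting minimality; hence $d=\max_{y\in\mathbb{B}(c,d)}d(c,y)$ and $p$ may be taken to realize this maximum. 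Claims (1) and (3) are then immediate: $A$ and $A'$ are two distinct nonempty subsets of $\mathbb{B}(c,d)$, so at least two clusters meet $\mathbb{B}(c,d)$; and (3) is nothing but the margin requirement of the closure distance applied to $\mathbb{B}(c,d)$ (with the letters $p,q$ in the statement reused for the generic inside and outside points).

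\textbf{Claim (2).} This is the substantive part, and I would obtain it from the following invariant, proved by induction on the number of merges performed so far by Algorithm~\ref{ClosureLinkage}: \emph{no cluster of the current clustering straddles a ball $\mathbb{B}(x,r)$ that satisfies the margin requirement and has $r$ at most the current minimum closure distance} --- that is, any cluster meeting such a ball is contained in it. Claim (2) is this invariant applied at the current step to $\mathbb{B}(c,d)$. The base case holds since a singleton never straddles a ball. For the inductive step, suppose the invariant holds just before a merge; that merge unites the two clusters $A_0,A_0'$ realizing the current minimum closure distance $d_0$ through a margin witness ball $\mathbb{B}(c_0,d_0)$, and afterward the minimum closure distance becomes some $d_0'\ge d_0$ (it cannot drop, by Fact~\ref{lemma:minClosureDistanceProperty}(2)). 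Two checks remain: (i) the new cluster $A_0\cup A_0'\subseteq\mathbb{B}(c_0,d_0)$ straddles no margin ball $\mathbb{B}(x,r)$ with $r\le d_0'$; (ii) no surviving cluster straddles a margin ball $\mathbb{B}(x,r)$ with $d_0<r\le d_0'$ that was not yet relevant before the merge. For (i): if $\mathbb{B}(x,r)$ met $A_0\cup A_0'$ but separated a point of $A_0$ from a point of $A_0'$, then $\mathbb{B}(x,r)$ and $\mathbb{B}(c_0,d_0)$ overlap, and combining the margin inequalities of the two balls with $r\le d_0'$ forces $\mathbb{B}(x,r)$ to already contain one of $A_0,A_0'$ while meeting the other, contradicting the pre-merge invariant. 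For (ii): such a straddle of $\mathbb{B}(x,r)$ by a cluster $C$ yields, via $\mathbb{B}(c_0,d_0)$ and the margin ball in which $C$ was enclosed when it was created, a pair of pre-merge clusters whose closure distance is $<d_0'$; since closure distances are nondecreasing, this pair either already had closure distance $d_0$ (and the $r\le d_0$ analysis of (i) applies) or contradicts $d_0$ being the pre-merge minimum.

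\textbf{Main obstacle.} The crux is this inductive step for claim (2). One cannot simply appeal to a laminarity of margin balls, since two margin-satisfying balls of comparable radius can overlap without one containing the other; what makes the induction go through is the algorithmic fact that each merge unites \emph{exactly} the minimizing pair, so any cluster reachable by a sufficiently small margin ball has already been absorbed, together with the monotonicity of the minimum closure distance (Fact~\ref{lemma:minClosureDistanceProperty}(2)). Carrying out the translation between the margin inequalities of $\mathbb{B}(x,r)$, $\mathbb{B}(c_0,d_0)$ and the enclosing ball of a would-be straddling cluster --- and dispatching the tie cases in which several pairs simultaneously realize the minimum closure distance, by fixing once and for all a consistent rule for which witness ball is used --- is where the detailed work lies. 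With the invariant established, Fact~\ref{lemma:efficientImpl} follows, and together with Fact~\ref{lemma:minClosureDistanceProperty} it certifies that Algorithm~\ref{alg:EffImpClosureLink} reproduces the merges of Algorithm~\ref{ClosureLinkage} in $O(n^3)$ time.
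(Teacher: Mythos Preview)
Your treatment of (1) and (3) is fine and matches the paper's one-line ``follow from the definition.'' The divergence is in (2), and here your plan does not close.

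You try to maintain the invariant that \emph{no} current cluster straddles \emph{any} margin ball of radius at most the current minimum closure distance. This is stronger than what is needed, and your inductive step does not survive scrutiny. In case~(ii) you produce ``a pair of pre-merge clusters whose closure distance is $<d_0'$'' and claim this either equals $d_0$ or contradicts $d_0$ being the pre-merge minimum; but a pre-merge pair with closure distance strictly between $d_0$ and $d_0'$ contradicts nothing, and such pairs can certainly exist (only one pair is merged, others with intermediate closure distance may survive). Case~(i) has the same defect: you end by invoking the pre-merge invariant on $\mathbb{B}(x,r)$ with $d_0<r\le d_0'$, but the pre-merge invariant only speaks about radii $\le d_0$. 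The phrase ``combining the margin inequalities of the two balls \ldots\ forces'' is asserted, not argued, and I do not see how it goes through without the missing idea below.

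The paper's argument for (2) is more direct and avoids the universal quantifier over margin balls entirely. It works cluster by cluster: given a non-singleton $C'$ in the current list that meets $\mathbb{B}(c,d)$ but does not contain $c$, look at the \emph{specific} ball $\mathbb{B}(c_1,d_1)$ (with $c_1\in C'$) that witnessed the merge creating $C'$. The chain is:
(a)~$c\notin\mathbb{B}(c_1,d_1)$, i.e.\ $d(c,c_1)>d_1$ (this is the only place the induction hypothesis is used, and only for that single earlier ball);
(b)~for any $q\in C'\cap\mathbb{B}(c,d)$, margin of the small ball with $q$ inside and $c$ outside gives $d(c_1,q)<d(q,c)$;
(c)~margin of the big ball then forces $c_1\in\mathbb{B}(c,d)$, since otherwise $d(c,q)<d(q,c_1)$, contradicting~(b);
(d)~finally, for any $q'\in C'$ we have $d(c_1,q')\le d_1<d(c,c_1)$, and margin of the big ball with $c_1$ inside forces $q'\in\mathbb{B}(c,d)$.
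So $C'\subseteq\mathbb{B}(c,d)$.

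This is the idea your sketch lacks: rather than trying to control all small margin balls simultaneously, fix the cluster $C'$, pull out the ball that created it, and play the two margin conditions $\mathbb{B}(c,d)$ and $\mathbb{B}(c_1,d_1)$ against each other via the shared point $q$ and then the center $c_1$.
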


\begin{proof}
The first claim and the third claim follow from the definition.
We can prove the second claim by induction.
This is trivial at the beginning.
Suppose it is true up to any previous
clustering, we prove it for the current clustering $\mathcal{C'}$.
We need to show that for any $C' \in \mathcal{C'}$ such that $C' \cap \mathbb{B}(c,d(c,p)) \neq \varnothing$,
$C' \subseteq \mathbb{B}(c,d(c,p))$.
If $c \in C'$, then by definition, $C' \subseteq \mathbb{B}(c,d(c,p))$.
If $C'$ is a single point set $\{c_1\}$, then trivially $C' \subseteq \mathbb{B}(c,d(c,p))$.
What is left is the case when $c \not\in C'$ and $C'$ is generated by merging clusters in a previous step.
Suppose when $C'$ is formed, the closure distance between
those clusters is defined by $c_1\in C'$ and $p_1$.
By induction, if $c \in \mathbb{B}(c_1,d(c_1,p_1))$, $c$ would have been
merged into $C'$ when $C'$ is merged, which is contradictory to $c \not\in C'$.
So we have $c \not\in \mathbb{B}(c_1,d(c_1,p_1))$, that is, $d(c,c_1) > d(c_1,p_1)$.
Then by the margin requirement of $\mathbb{B}(c_1, d(c_1,p_1))$, $d(c, q) > d(c_1, q)$ for any $q \in \mathbb{B}(c, d(c,p)) \cap C'$.
This further leads to $c_1 \in \mathbb{B}(c, d(c,p))$, since otherwise
by the margin requirement of $\mathbb{B}(c, d(c,p))$ and $q \in \mathbb{B}(c, d(c,p))$, we would have $d(c, q) < d(c_1, q)$.
So for any point $q'  \in C'$, since $d(c_1, q') \leq d(c_1, p_1) < d(c, c_1)$,
we have $q' \in \mathbb{B}(c, d(c,p))$ from the margin requirement, so $C' \subseteq \mathbb{B}(c,d(c,q))$.
\end{proof}

Notice if a pairwise distance satisfies the three claims, then it defines a closure distance for the clusters covered.
So if we check the pairwise distances in ascending order, then the first one that satisfies
the three claims must be the minimum closure distance in the current clustering.
So we have a method to determine if a pairwise distance is  the minimum closure distance.

However, naively checking the third claim in Fact~\ref{lemma:efficientImpl} takes $O(n^2)$,
which is still not good enough. We can refine this step since intuitively, for every $c$,
if $d(c,q)$ comes after $d(c,p)$ in the distance list, then when checking $d(c,q)$,
we can utilize the information obtained from checking $d(c,p)$. To do so, we introduce some notations.

\begin{definition} \label{def:list}
\begin{itemize}
\item[(1)] For every $p\in S$, define $L^p=(L^p_1,\dots,L^p_n)$ to be a sorted list of points in $S$,
according to their distances to $p$ in ascending order. 
\item[(2)] Define $\chi^*(p,i)$ to be the maximum $j>i$ such that
there exits $s\leq i$ satisfying $d(p,L^p_s) \geq d(L^p_s, L^p_j)$; if no such point $L^p_j$ exists, let $\chi^*(p,i) = -1$.
\item[(3)] Define $\chi(p,i)$ to be the maximum $j>i$ such that
$d(p,L^p_i) \geq d(L^p_i,L^p_j)$; if no such $j$ exists, let $\chi(p,i) = -1$.
\end{itemize}
\end{definition}

Intuitively, $\chi^*(p,i)$ is the index of the
farthest point in $L^p$, which makes $d(p,L^p_i)$ fail the third claim in Fact~\ref{lemma:efficientImpl}.
Then $d(p,L^p_i)$ satisfies the third claim if and only if $\chi^*(p,i) = -1$, thus we turn the task of
checking the claim into computing $\chi^*(p,i)$.
In order to use the information obtained when previously checking $d(p, L^p_{i-1})$,
we compute $\chi^*(p,i)$ from $\chi^*(p,i-1)$.
By the definition of $\chi^*$, 
$\chi^*(p,i)$ is either the maximum $j > i$ such that there exits $s \leq i - 1$ satisfying $d(p,L^p_s) \geq d(L^p_s, L^p_j)$,
or the maximum $j > i$ there exits $s = i$ satisfying $d(p,L^p_s) \geq d(L^p_s, L^p_j)$.
Then it is easy to verify that
\begin{eqnarray}\label{eqn:update_checking}
\chi^*(p,i) = 
\begin{cases}
\chi(p,i) & \text{if } \chi^*(p,i-1) = i, \\
\max\{\chi^*(p,i-1),\chi(p,i)\} & \text{otherwise.}
\end{cases}
\end{eqnarray}
It takes $O(n)$ time to compute $\chi(p,i)$, thus we can compute $\chi^*(p,i)$ for all
$p\in S,1\leq i\leq n$ in $O(n^3)$ time.
The implementation is finally summarized in Algorithm~\ref{alg:EffImpClosureLink}.

%

\section{$(\alpha,\epsilon)$-Perturbation Resilient Min-Sum Instances}\label{app:gpProperties}\label{app:pbpProperties}

\subsection{Proofs for Bounding the Number of Bad Points for Min-Sum}\label{app:bound}


First, recall the definitions of the bad points and the perturbation constructed to bound the number of bad points in Section~\ref{sec:bound}.
Assume for contradiction that $|B| > 2\eta\epsilon n$.
\yingyu{Consider the following $\eta$ intervals:} $[2^{t-1} v, 2^t v]$ where $v=\min_i \min_{p\in B_i} d(p,C_i)$ where $1\leq t \leq \eta$.
At least one of the intervals, say $[r,2r]$, will contain the costs of more than $2\epsilon n$ bad points.
Let $\hat{B}$ denote an arbitrary subset of $2\epsilon n$ bad points in this interval.
Let $\hat{B}_i = \hat{B} \cap C_i$ denote the selected bad points in the optimal cluster $C_i$. Let $K_i = C_i \setminus \hat{B}_i$ denote the other points in $C_i$, and set $K = \cup_i K_i$.
Denote as $D_i$ all those selected bad points whose second nearest cluster is $C_i$, that is, $D_i = \{p: \exists j~\textrm{such that}~p \in \hat{B}_j~\textrm{and}~i = \arg\min_{\ell\neq j}d(p, C_{\ell})\}$. Note that by definition we have $\cup_i D_i = \hat{B}$. Finally, let $\tilde{C}_i = K_i \cup D_i$. See Figure~\ref{fig:BCD} for an illustration. 


The perturbation is constructed as follows: blow up all distances by a factor of $\alpha$
except those within $\tilde{C}_i, 1\leq i\leq k$.
That is,
\begin{eqnarray*}
d'(p,q) = \left\{ \begin{array}{ll}
d(p,q) & \textrm{if $p \in \tilde{C}_i$, and $q \in \tilde{C}_i$ for some $i$,}\\
\alpha d(p,q) & \textrm{otherwise}.
\end{array} \right.
\end{eqnarray*}

Let $\{C'_i\}$ denote the optimal clustering after perturbation.
Recall the definitions of $U_i, V_i, W_i$ and $\tilde{U}_i, \tilde{V}_i, \tilde{W}_i$, and see Figure~\ref{fig:UVW} for an illustration.
The following facts come from their definitions.

\begin{fact}\label{fac:UVW}
We have $\cup_i U_i = \cup_i \tilde{U}_i$, $\cup_i V_i = \cup_i \tilde{V}_i$ and $\cup_i W_i = \cup_i \tilde{W}_i$. Furthermore,
\begin{eqnarray*}
\sum_i \ds(\tilde{U}_i, C_i) &\leq & \beta \sum_i  \ds(U_i,C_i)\\
\sum_i \ds(\tilde{V}_i, C_i) &\leq & \sum_i \ds(V_i, C_i),\\
\sum_i \ds(\tilde{W}_i, C_i) & \leq & \sum_i \ds(W_i, C_i).
\end{eqnarray*}
\end{fact}

We are ready to prove the claim needed for bounding the number of bad points.

\textsc{Claim~\ref{cla:U}.(1).~}{\it
The costs saved and added by moving $\{ U_i, 1\leq i\leq k\}$ and $\{ \tilde{U}_i, 1\leq i\leq k\}$ satisfy
\begin{eqnarray*}
\Delta_U - \Delta_{\tilde{U}}  & \geq & 2\sum_i \dps(U_i, C'_i \cap K_i) - 2 \sum_i \dps(\tilde{U}_i, \tilde{C}_i) \\&\geq &\frac{3}{10}\alpha\sum_i \ds(U_i, C_i) - \frac{2\alpha}{100} \sum _i \ds(W_i,C_i) - \frac{8\alpha + 16}{100}r \epsilon n.
\end{eqnarray*}
}

\begin{proof}
Intuitively, we have that $\sum_i \dps(U_i, C'_i \cap K_i) \approx \alpha \sum_i \ds(U_i, C_i)$.
Similarly, $\sum_i \dps(\tilde{U}_i, \tilde{C}_i) \approx \sum_i \ds(\tilde{U}_i, C_i)$.
Their difference is then
roughly $(\alpha-\beta)  \sum_i \ds(U_i, C_i)$, since $\sum_i \ds(\tilde{U}_i, C_i) \leq \beta \sum_i  \ds(U_i,C_i)$.

Formally, we have
\begin{eqnarray}
\dps(U_i, C'_i \cap K_i) = \alpha \ds(U_i, C'_i \cap K_i) = \alpha \ds(U_i, C_i) -\alpha \ds(U_i, \tilde{W}_i + \hat{B}_i),\label{eqn:claimU1}\\
\dps(\tilde{U}_i, \tilde{C}_i) = \ds(\tilde{U}_i, \tilde{C}_i)  = \ds(\tilde{U}_i, K_i) + \ds(\tilde{U}_i, D_i) \leq \ds(\tilde{U}_i, C_i) + \ds(\tilde{U}_i, D_i).\label{eqn:claimU2}
\end{eqnarray}

Then it suffices to bound the approximation error $\ds(U_i, \tilde{W}_i  \dcup  \hat{B}_i)$ and $\ds(\tilde{U}_i, D_i)$.
First, for $\ds(U_i, \tilde{W}_i  \dcup  \hat{B}_i)$ we have
\begin{eqnarray*}
\ds(U_i, \tilde{W}_i + \hat{B}_i)
 & \leq  &\frac{|\tilde{W}_i  \dcup  \hat{B}_i|}{|C_i|} \ds(U_i, C_i) + \frac{|U_i|}{|C_i|} \ds(C_i, \tilde{W}_i \dcup \hat{B}_i)\\
 &\leq & \frac{3}{100} \ds(U_i, C_i) + \frac{1}{100} \ds(C_i, \tilde{W}_i \dcup \hat{B}_i)
\end{eqnarray*}
where the first inequality is by Fact~\ref{fac:tri}, and the second is from the fact that $|\tilde{W}_i| \leq \epsilon n, |\hat{B}_i| \leq 2 \epsilon n, |U_i| \leq \epsilon n$ and $|C_i| \geq 100 \epsilon n$.
For the second term on the right-hand side, we have $\sum_i \ds(C_i, \tilde{W}_i)\leq \sum_i \ds(W_i,C_i)$,
Furthermore, the points in $\hat{B}_i$ has cost at most $2r$ and $\sum_i |\hat{B}_i| \leq 2\epsilon n$.
So \begin{eqnarray*}
\sum_i \ds(U_i, \tilde{W}_i \dcup \hat{B}_i) \leq \frac{3}{100} \sum_i  \ds(U_i, C_i) + \frac{1}{100} \sum_i  \ds(C_i, W_i) + \frac{4 r \epsilon n}{100}.
\end{eqnarray*}
Similarly, for $\ds(\tilde{U}_i, D_i)$ we have
\begin{eqnarray*}
\sum_i \ds(\tilde{U}_i, D_i)  \leq  \sum_i  \left(\frac{|D_i|}{|C_i|} \ds(\tilde{U}_i, C_i) +  \frac{|\tilde{U}_i|}{|C_i|}\ds(C_i, D_i)\right) \leq \frac{2\beta}{100}\sum_i \ds(U_i, C_i) +  \frac{8r \epsilon n}{100}.
\end{eqnarray*}

The claim follows by summing (\ref{eqn:claimU1}) and (\ref{eqn:claimU2}) over $1\leq i\leq k$ and plugging in the last two inequalities.
\end{proof}

\textsc{Claim~\ref{cla:V}.(2).~}{\it
The costs saved and added by moving $\{ V_i, 1\leq i\leq k\}$ and $\{ \tilde{V}_i, 1\leq i\leq k\}$ satisfy
\begin{eqnarray*}
\Delta_V - \Delta_{\tilde{V}}  & \geq & 2\sum_i \dps(V_i, C'_i \cap C_i) - 2\sum_i \dps(\tilde{V}_i, \tilde{C}_i)\\
& \geq & \frac{99}{50}(\alpha  -2) \sum_i \ds(V_i, C_i) - \frac{2\alpha}{100}\sum_i \ds(W_i, C_i) - \frac{4\alpha + 8\beta}{100}r\epsilon n.
\end{eqnarray*}
}

\begin{proof}
The intuition is similar to that of Claim~\ref{cla:U}.(a): $\sum_i \dps(V_i, C'_i \cap C_i) \approx \alpha\sum_i \ds(V_i, C_i),\sum_i \dps(\tilde{V}_i, \tilde{C}_i) \approx \sum_i \ds(\tilde{V}_i, C_i)$.
Since $\sum_i \ds(V_i, C_i) \geq \sum_i \ds(\tilde{V}_i, C_i)$, their difference is roughly $(\alpha-1)\sum_i \ds(V_i, C_i)$.

Formally, we have
\begin{eqnarray}
 \dps(V_i, C'_i \cap C_i) & = & \alpha \ds(V_i, C'_i \cap C_i) = \alpha \ds(V_i, C_i) - \alpha  \ds(V_i, C_i \setminus C'_i),\label{eqn:claimV1}\\
 \dps(\tilde{V}_i, \tilde{C}_i) & = &  \ds(\tilde{V}_i, \tilde{C}_i) \leq  \ds(\tilde{V}_i, C_i) +  \ds(\tilde{V}_i, D_i).\label{eqn:claimV2}
\end{eqnarray}
Then it suffices to bound the approximation error $\ds(V_i, C_i \setminus C'_i)$ and $\ds(\tilde{V}_i, D_i)$.
First,
\begin{eqnarray*}
\ds(V_i, C_i \setminus C'_i)
& \leq &\frac{|C_i \setminus C'_i|}{|C_i|} \ds(V_i, C_i)+  \frac{|V_i|}{|C_i|} \ds(C_i \setminus C'_i, C_i )\\
&\leq &\frac{1}{100} \ds(V_i, C_i)+ \frac{1}{100} \ds(C_i \setminus C'_i, C_i )
\end{eqnarray*}
where the first inequality is by Fact~\ref{fac:tri} and the second is from the fact that $|C_i \setminus C'_i| \leq \epsilon n, |V_i| \leq \epsilon n$
and $|C_i| \geq 100 \epsilon n$.
For the second term on the right-hand side, we have $\ds(C_i \setminus C'_i, C_i ) \leq \ds(\tilde{W}_i, C_i ) + \ds(\hat{B}_i \setminus C'_i, C_i )$.
Note that $\sum_i \ds(\tilde{W}_i, C_i ) \leq \sum_i \ds(W_i, C_i)$. Furthermore, the points in $\hat{B}_i$ have cost at most $2r$ and $\sum_i |\hat{B}_i \setminus C'_i| \leq \epsilon n$ by perturbation resilience.
So
$$\sum_i \ds(V_i, C_i \setminus C'_i) \leq \frac{1}{100}\sum_i \ds(V_i, C_i) + \frac{1}{100}\sum_i \ds(W_i, C_i) + \frac{2r\epsilon n}{100}.$$
Similarly, for $\ds(\tilde{V}_i, D_i)$ we have
\begin{eqnarray*}
\sum_i \ds(\tilde{V}_i, D_i) & \leq & \sum_i \left( \frac{|D_i|}{|C_i|} \ds(\tilde{V}_i, C_i) + \frac{|\tilde{V}_i|}{|C_i|} \ds(C_i, D_i)\right) \leq \frac{2}{100}\sum_i  \ds(V_i, C_i) + \frac{4\beta r\epsilon n}{100}.
\end{eqnarray*}

The claim follows by summing (\ref{eqn:claimV1}) and (\ref{eqn:claimV2}) over $1\leq i\leq k$ and plugging the last two inequalities.
\end{proof}

\textsc{Claim~\ref{cla:V}.(3).~}{\it
The costs saved and added by moving $\{ W_i, 1\leq i\leq k\}$ and $\{ \tilde{W}_i, 1\leq i\leq k\}$ satisfy
\begin{eqnarray*}
\Delta_W - \Delta_{\tilde{W}}  & \geq & 2\sum _i \dps(W_i, C'_i \cap C_i) - 2\sum_i \dps(\tilde{W}_i, \tilde{W}_i \dcup (C'_i \cap \tilde{C}_i))\\
&\geq & \frac{98}{50}(\alpha  -2) \sum_i \ds(W_i, C_i)  - \frac{4\alpha+4\beta }{100}r\epsilon n.
\end{eqnarray*}
}

\begin{proof}
The intuition is similar to that of Claim~\ref{cla:U}.(a):
$\sum_i \dps(W_i, C'_i \cap C_i) \approx \alpha  \sum_i \ds(W_i, C_i)$ and $\sum_i \dps(\tilde{W}_i, \tilde{W}_i \dcup (C'_i \cap \tilde{C}_i) ) \approx \sum_i \ds(\tilde{W}_i, C_i)$.
Since $\sum_i \ds(W_i, C_i) \geq \sum_i \ds(\tilde{W}_i, C_i)$, their difference is roughly $(\alpha-1) \sum_i \ds(W_i, C_i)$.

Formally, we have
\begin{eqnarray}
\dps(W_i, C'_i \cap C_i)  &=&  \alpha  \ds(W_i, C'_i \cap C_i) = \alpha  \ds(W_i, C_i) - \alpha  \ds(W_i,  C_i \setminus C'_i), \label{eqn:claimW1}
\end{eqnarray}
\begin{eqnarray}
\dps(\tilde{W}_i, \tilde{W}_i \dcup (C'_i \cap \tilde{C}_i))  & =&  \ds(\tilde{W}_i, (C'_i \cap D_i) \dcup (C_i \cap \tilde{C}_i)) \label{eqn:claimW2} \\
& \leq & \ds(\tilde{W}_i, C'_i \cap D_i) + \ds(\tilde{W}_i, C_i ). \nonumber
\end{eqnarray}
Then it suffices to bound the approximation error $\ds(W_i,  C_i \setminus C'_i)$ and $\ds(\tilde{W}_i, C'_i \cap D_i)$.
First, for $\ds(W_i,  C_i \setminus C'_i)$ we have
\begin{eqnarray*}
\ds(W_i,  C_i \setminus C'_i)
& \leq   & \frac{| C_i \setminus C'_i|}{|C_i|} \ds(W_i, C_i) + \frac{|W_i|}{|C_i|} \ds( C_i \setminus C'_i, C_i)\\
& \leq   & \frac{1}{100} \ds(W_i, C_i) + \frac{1}{100} \ds( C_i \setminus C'_i, C_i)
\end{eqnarray*}
where the first inequality is by Fact~\ref{fac:tri} and the second from the fact that $| C_i \setminus C'_i| \leq \epsilon n, |W_i| \leq \epsilon n$
and $|C_i| \geq 100 \epsilon n$. For the second term on the right-hand side,
we have $\ds( C_i \setminus C'_i, C_i) = \ds(\tilde{W_i}, C_i)  + \ds( \hat{B}_i \setminus C'_i, C_i)$.
Note that
$\sum_i  \ds(\tilde{W_i}, C_i) \leq \sum_i  \ds(W_i, C_i)$. Furthermore, the points in $\hat{B}_i$ have cost at most $2r$
and $\sum_i |\hat{B}_i \setminus C'_i| \leq \epsilon n$ by perturbation resilience.
So
\begin{eqnarray*}
\sum_i  \ds(W_i,  C_i \setminus C'_i) \leq \frac{2}{100}\sum_i \ds(W_i, C_i) + \frac{2r \epsilon n}{100}.
\end{eqnarray*}
Similarly, for $ \ds(\tilde{W}_i, C'_i \cap D_i)$ we have
\begin{eqnarray*}
\sum_i \ds(\tilde{W}_i, C'_i \cap D_i)
& \leq  & \sum_i \left( \frac{|C'_i \cap D_i|}{|C_i|} \ds(\tilde{W}_i, C_i)  + \frac{|\tilde{W}_i|}{|C_i|}\ds(C'_i \cap D_i, C_i)  \right) \\
& \leq  & \frac{1}{100}\sum_i \ds(W_i, C_i) + \frac{2\beta r\epsilon n}{100}.
\end{eqnarray*}

The claim follows by summing (\ref{eqn:claimW1}) and (\ref{eqn:claimW2}) over $1\leq i\leq k$ and plugging the last two inequalities.
\end{proof}

\subsection{Properties of Good Points in Min-Sum} \label{app:goodpointprp}

\newcommand{\favorClustering}{\mathcal{\tilde{C}}}
\newcommand{\newClustering}{\mathcal{{C'}}}
\newcommand{\newCluster}[1]{{C'_{#1}}}

A useful property of good points is that the good points from two different clusters have cost much larger than those in a third cluster have (Lemma~\ref{lem:splitMerge}).
To prove this, we need to prove Lemma~\ref{lem:boundNewCost}, which bounds
the cost of the optimal clustering $\newClustering=\{\newCluster{t}\}$ under the perturbed distance function.
Recall the definitions of the perturbation and $\newClustering$ in Section~\ref{sec:properties_gp_aeminsum}.
The perturbation blows up all pairwise distances by a factor of $\alpha$ except the intra-cluster distances in $\favorClustering$,
where $\favorClustering$ is the clustering obtained from the optimal clustering by splitting $C_i$ into $A$ and $C_i \setminus A$ and
merging $C_j$ and $C_l$.
Let $\newClustering=\{\newCluster{i}\}$ denote the optimal clustering under the perturbed distance function $d'$,
where the clusters are indexed so that $\newCluster{i}$ corresponds to $C_i$ and the distance between the two clustering is $\sum_{i}|C_i \setminus \newCluster{i}|$.

To bound the cost of $\newClustering=\{\newCluster{t}\}$, we compare it to the cost of the optimal clustering $\mathcal{C}=\{C_t\}$ before perturbation.
If $\newClustering = \mathcal{C}$, then the cost is only increased by blowing up the distances between $A$ and $C_i \setminus A$ (Claim~\ref{cla:increase}).
However, the optimal clustering may change after the perturbation, so we need to consider how much cost is saved by the change (Claim~\ref{cla:save}).

Intuitively, the cost saved should be small. To see this, consider a point $p$ moved from $C_s$ to $\newCluster{t}$.
Then we need to pay $\dps(p,\newCluster{t})$ instead of $\ds(p,C_s)$.
Note that $p$ is in $C_s$ but not $C_t$, so $\ds(p,C_s) \leq \ds(p,C_t)$.
Also, $\newCluster{t}$ and $C_t$ differ only on at most $\epsilon n$ points, then $\dps(p,\newCluster{t})$
is larger or comparable to $\ds(p,C_t)$ and thus $\ds(p,C_s)$.

There are two technical details in the above description.
The first is to translate $\dps(p,\newCluster{t})$ to $\ds(p,\newCluster{t})$.
We consider two cases (as in the proof of Claim~\ref{cla:save}).
If $p$ is moved between $C_j$ and $C_l$, then $\dps(p,\newCluster{t})$ is roughly $\ds(p,\newCluster{t})$
since the distances between $C_j,C_l$ are not blown up.
Otherwise, $\dps(p,\newCluster{t})$ is roughly $\alpha \ds(p,\newCluster{t})$.
Another technical detail is to show that $\ds(p,\newCluster{t})$ roughly equals $\ds(p,C_t)$.
Since $\ds(p,\newCluster{t}) \geq \ds(p, \newCluster{t} \cap C_t)$, it suffices to show that $\ds(p, \newCluster{t} \cap C_t)$ is comparable to $\ds(p,C_t)$,
where Fact~\ref{fac:minsumBasicFacts} turns out to be useful.

\begin{figure*}[!t]
\centering

\begin{tikzpicture}
\usetikzlibrary{shapes,backgrounds}

\def\maxx{2.5}
\def\maxy{1.7}
\def\bbox{(-\maxx,-\maxy) rectangle (\maxx,\maxy)}
\clip \bbox;

\coordinate (centerC) at (-0.4,0);
\def\C{(centerC) circle (1.2)}

\coordinate (centerCt) at (0.4,0);
\def\Ct{(centerCt) circle (1.2)}

\fill[green!25] \C;
\fill[blue!20] \Ct;

\begin{scope}
	\clip \C;
	\fill[red!35!blue!35] \Ct;
\end{scope}

\draw \C;
\draw \Ct;

\node at (-2,-1.1) {$C_i$};
\node at (2,-1.1) {$\hat{C}_i$};
\node at (-1.3,0) {$M_i$};
\node at (0,0) {$K_i$};
\node at (1.3,0) {$A_i$};
\end{tikzpicture}
\caption{Illustration of the notations in Lemma~\ref{lem:boundNewCost}.}\label{fig:AM}
\end{figure*}
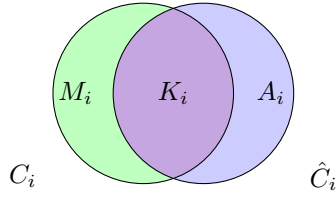

\begin{lemma}\label{lem:boundNewCost}
Suppose $\alpha > \frac{6\max_i |C_i|}{\min_i|C_i|}$ and $\min_i |C_i| \geq 100 m_B$. We have
$$\sum_{t=1}^k \dps(\newCluster{t},\newCluster{t}) - \sum_{t=1}^k \ds(C_t,C_t) \geq 2(\alpha-1)\ds(A, G_i \setminus A) - \frac{4\alpha+8}{100}\ds(C_j, C_l).$$
\end{lemma}

\newcommand{\ocapn}[1]{K_{#1}}
\newcommand{\oci}[1]{l(#1)}
\newcommand{\nci}[1]{l'(#1)}
\newcommand{\oc}[1]{C_{\oci{#1}}}
\newcommand{\nc}[1]{C_{\nci{#1}}}
\newcommand{\nk}[1]{K_{\nci{#1}}}

\begin{proof}
Let $\ocapn{t} = C_t \cap \newCluster{t}, A_t = \newCluster{t} \setminus C_t, M_t = C_t \setminus \newCluster{t}$. See Figure~\ref{fig:AM} for an illustration. We have
\begin{eqnarray*}
&& \sum_{t=1}^k \dps(\newCluster{t},\newCluster{t}) - \sum_{t=1}^k \ds(C_t,C_t) \\
& \geq & \sum_{t=1}^k \left( \dps(\ocapn{t},\ocapn{t}) + 2 \dps(A_t,\ocapn{t})\right) - \sum_{t=1}^k \left(\ds(\ocapn{t},\ocapn{t}) + 2 \ds(M_t,C_t)\right)\\
& = & \left(\sum_{t=1}^k \dps(\ocapn{t},\ocapn{t})  - \sum_{t=1}^k \ds(\ocapn{t},\ocapn{t}) \right) + 2 \left(\sum_{t=1}^k \dps(A_t,\ocapn{t}) -  \sum_{t=1}^k  \ds(M_t,C_t)\right).
\end{eqnarray*}
The first term on the right-hand side corresponds to the cost increased by blowing up the distances within the clusters,
the second term corresponds to the cost increased by moving points away.
We will bound the two terms respectively in the following two claims,
which then lead to the lemma.

Let $\oci{p}$ denote the index of the optimal cluster in $\mathcal{C}$ that $p$ falls in: if $p \in C_t$, then $\oci{p}=t$.
Similarly, let $\nci{p}$ denote the optimal cluster in $\newClustering$ that $p$ falls in after perturbation: if $p \in \newCluster{t}$, then $\nci{p}=t$.

The first term is roughly the cost increased by blowing the distances between $A_i$ and $C_i \setminus A_i$,
which is about $2(\alpha - 1) \ds(A, C_i \setminus A)$.
However, some points in $C_i$ may move away, so we need to exclude the cost of these points.
More precisely, we only consider good points, and also exclude the cost of the good points moved away ($G_i \cap M_i$).
\begin{claim}\label{cla:increase}
\begin{eqnarray*}
\sum_{t=1}^k \dps(\ocapn{t},\ocapn{t})  - \sum_{t=1}^k \ds(\ocapn{t},\ocapn{t}) 
 \geq  2(\alpha - 1) \left(\ds(A, G_i \setminus A) -  \sum_{p \in G_i \cap M_i} \ds(p, \nc{p})/\beta \right). 
\end{eqnarray*}
\end{claim}

\begin{proof}
By the definition of the perturbation, we have
\begin{eqnarray*}
&&\sum_{t=1}^k \dps(\ocapn{t},\ocapn{t})  - \sum_{t=1}^k \ds(\ocapn{t},\ocapn{t})\\
& \geq & 2 \dps(A \cap K_i, (G_i \setminus A) \cap K_i) - 2 \ds(A \cap K_i, (G_i \setminus A) \cap K_i)\\
& \geq & 2(\alpha - 1) \ds(A \cap K_i, (G_i \setminus A) \cap K_i) \\
& \geq & 2(\alpha - 1) \left( \ds(A, G_i \setminus A) - \ds(A \cap M_i, G_i \setminus A)  - \ds((G_i \setminus A) \cap M_i, A)\right) \\
& \geq & 2(\alpha - 1) \left( \ds(A, G_i \setminus A) - \ds(G_i \cap M_i, C_i)\right).
\end{eqnarray*}
The claim then follows from that for any $p \in G_i \cap M_i$, $\ds(p,C_i) \leq \frac{1}{\beta} \ds(p, \nc{p})$.
\end{proof}


The second term is roughly the cost increased by moving points away.
Consider a point $p \in C_1$ that moves to $\newCluster{2}$.
The new cost is $\dps(p, \newCluster{2}) \approx \dps(p,C_2) = \alpha \ds(p,C_2)$,
and the old cost is $\ds(p,C_1) \leq \ds(p,C_2)$, so the cost increased is roughly $(\alpha-1)\ds(p,C_2)$.
Note that $\newCluster{2}$ only approximately equals $C_2$. Also, the above intuition does not hold for points that move between $C_j$ and $C_l$
since the distances between them are not blown up. These facts only decrease the bound slightly, as shown in the following claim.
\begin{claim}\label{cla:save}
Let $X=(\cup_t A_t) \setminus (A_l \cap C_j) \setminus (A_j \cap C_l)$.
\begin{eqnarray*}
\sum_{t=1}^k \dps(A_t,\ocapn{t}) -  \sum_{t=1}^k  \ds(M_t,C_t) \geq \left(\frac{98\alpha}{100} -1 \right) \sum_{p \in X} \ds(p, \nc{p}) - \frac{2\alpha+4}{100} \ds(C_j, C_l).
\end{eqnarray*}
\end{claim}

\begin{proof}
We have
\begin{eqnarray*}
\sum_{t=1}^k \dps(A_t,\ocapn{t}) -  \sum_{t=1}^k  \ds(M_t,C_t)
\geq \sum_{t=1}^k \sum_{p \in A_t} \left( \dps(p,\ocapn{t}) - \ds(p, \oc{p})\right).
\end{eqnarray*}

Intuitively, $\dps(p,\nk{p})$ should be larger or comparable to $\ds(p, \oc{p})$.
On one hand, $\ds(p,\oc{p}) \leq \ds(p,\nc{p})$ since $p$ is assigned to $\oc{p}$ instead of $\nc{p}$ in the optimal clustering under $d$.
On the other hand, we also know that $\ds(p, \nk{p})$ is comparable to $\ds(p, \nc{p})$ by Fact~\ref{fac:minsumBasicFacts}.

Before using this intuition, we first need to translate $\dps(p, \nk{p})$ to $\ds(p, \nk{p})$.
Since the distances between $C_j$ and $C_l$ is not blown up,
we need to consider separately the case when $p$ is moved between $C_j$ and $C_l$.
Equivalently, we divide $\cup_t A_t $ into two parts: $V=(A_j \cap C_l) \cup (A_l \cap C_j)$ and  $X=(\cup_t A_t) \setminus (A_l \cap C_j) \setminus (A_j \cap C_l)$.
Now we consider the two parts respectively.

\paragraph{Case 1}  Suppose $p \in A_j \cap C_l$. By Fact~\ref{fac:minsumBasicFacts}, we have
$\dps(p,\nk{p}) = \ds(p, \ocapn{j})  \geq \frac{|\ocapn{j}|}{|C_j|} \ds(p, C_j) - \frac{1}{|C_j|} \ds(M_j, C_j).$
Since we have $\ds(p,\oc{p}) = \ds(p, C_l) \leq \ds(p, C_j)$, and $\ds(M_j, C_j) \leq \ds(M_j, C_l) \leq \ds(C_j, C_l)$,
\begin{eqnarray*}
\dps(p,\nk{p}) - \ds(p, \oc{p}) &\geq& -\frac{|M_j|}{|C_j|} \ds(p, C_j) - \frac{1}{|C_j|} \ds(C_j, C_l),\\
\sum_{p \in A_j \cap C_l}  \left( \dps(p,\nk{p}) - \ds(p, \oc{p}) \right)
& \geq & -\left( \frac{|M_j|}{|C_j|} + \frac{|A_j \cap C_l|}{|C_j|}\right) \ds(C_j, C_l).
\end{eqnarray*}
Since $|M_j|\leq \epsilon n$, $|A_j| \leq \epsilon n$, this is bounded by $-\frac{2}{100}  \ds(C_j, C_l)$.
A similar argument holds for $A_j \cap C_l$. So
\begin{eqnarray}
\sum_{p \in V}  [\dps(p,\nk{p}) - \ds(p, \oc{p}) ] & \geq & -\frac{4}{100}  \ds(C_j, C_l).\label{eqn:term2_1}
\end{eqnarray}

\paragraph{Case 2}
For $p\in X$, we have by Fact~\ref{fac:minsumBasicFacts}
\begin{eqnarray*}
\dps(p,\nk{p}) = \alpha \ds(p, \nk{p}) \geq  \alpha\left( \frac{|\nk{p}|}{|\nc{p}|} \ds(p, \nc{p}) - \frac{1}{|\nc{p}|} \ds(M_{\nci{p}}, \nc{p})\right).
\end{eqnarray*}
Then for $X$, since $\ds(p,\oc{p}) \leq \ds(p,\nc{p})$ and $ \frac{|\nk{p}|}{|\oc{p}|} \geq \frac{99}{100}$, we have
\begin{eqnarray}
&&\sum_{p \in X}  \left( \dps(p,\nk{p}) - \ds(p, \oc{p}) \right) \nonumber\\
& \geq & \left(\frac{99\alpha}{100} -1 \right) \sum_{p \in X} \ds(p, \nc{p}) - \sum_{p \in X} \frac{\alpha}{|\nc{p}|} \ds(M_{\nci{p}}, \nc{p}).\label{eqn:term2_31}
\end{eqnarray}

Since $X \subseteq \cup_t A_t$,
and $|C_t| \geq 100|A_t|$, the second term on the right-hand side is bounded by
\begin{eqnarray}
\sum_t\frac{\alpha |A_t|}{|C_t|} \ds(M_t, C_t)
& \leq & \frac{\alpha}{100} \sum_t \ds(M_t, C_t) =  \frac{\alpha}{100} \sum_{p \in \cup_t A_t} \ds(p, \oc{p}) \nonumber\\
& = & \frac{\alpha}{100} \left( \sum_{p \in V} \ds(p, \oc{p}) + \sum_{p \in V} \ds(p, \oc{p})\right)  \nonumber\\
& \leq & \frac{\alpha}{100} \left( \sum_{p \in V} \ds(p, \nc{p}) + 2 \ds(C_i,C_j)\right ).\label{eqn:term2_32}
\end{eqnarray}
The claim follows from the inequalities (\ref{eqn:term2_1}), (\ref{eqn:term2_31}), and (\ref{eqn:term2_32}).
\end{proof}

The proof is completed by combining the two claims.
\end{proof}

\textsc{Lemma~\ref{lem:splitMerge}.~}{\it
Suppose $\alpha > \frac{8\max_i |C_i|}{\min_i|C_i|}$ and $\epsilon < \frac{\min_i |C_i|}{600 n}$.
For any three different optimal clusters $C_i, C_j$, and $C_l$,
and any $A \subset G_i$, $\frac{18}{5}\ds(A, G_i\setminus A) < \ds(G_j, G_l).$
Consequently, $\frac{9}{5}\ds(G_i, G_i) < \ds(G_j, G_l).$
}

\begin{proof}
The key idea is as follows.
Let $\favorClustering$ denote the clustering obtained from the optimal clustering by splitting $C_i$ into $A$ and $C_i \setminus A$ and
merging $C_j$ and $C_l$, that is, $\favorClustering  = \{A, C_i \setminus A, C_j \cup C_l\} \cup \{C_t, t\neq i,j,l\}$.
Suppose we construct a perturbation that favors the clustering $\favorClustering$: blow up all pairwise distances by a factor of $\alpha$ except the intra-cluster distances in $\favorClustering$.
Let $\newClustering=\{\newCluster{i}\}$ denote the optimal clustering under the perturbed distance function $d'$,
where the clusters are indexed so that $\newCluster{i}$ corresponds to $C_i$ and the distance between the two clustering is $\sum_{i}|C_i \setminus \newCluster{i}|$.
By $(\alpha, \epsilon)$-perturbation resilience, we know that $\newClustering$ is different from $\favorClustering$
and has no greater cost than  $\favorClustering$.
We then show that compared to the optimal cost under the original distances,
the cost of $\favorClustering$ under perturbed distances $d'$ is larger by at most $O(\ds(C_j, C_l))=O(\ds(G_j, G_l))$,
while the cost of $\newClustering$ under perturbed distances $d'$ is larger by roughly $O(\alpha)\ds(A, G_i \setminus A)$.
These then lead to the first statement.

More precisely, the cost of $\favorClustering$ under $d'$ is larger than that of $\mathcal{C}$ under $d$ by at most
$2 \ds(C_j, C_l)$.
For $\newClustering$, we have
$\sum_{t=1}^k \dps(\newCluster{t},\newCluster{t}) - \sum_{t=1}^k \ds(C_t,C_t) \geq 2(\alpha-1)\ds(A, G_i \setminus A) - \frac{4\alpha+8}{100}\ds(C_j, C_l)$
by Lemma~\ref{lem:boundNewCost}.
Since $\newClustering$ has smaller cost than $\favorClustering$, we have $$
	2(\alpha-1)\ds(A, G_i \setminus A) - \frac{4\alpha+8}{100}\ds(C_j, C_l) \leq 2 \ds(C_j, C_l).
$$
When $\alpha > \frac{8\max_i |C_i|}{\min_i|C_i|}$, we have $\frac{27}{5} \ds(A, G_i\setminus A) \leq \ds(C_j,C_l)$.
By Lemma~\ref{lem:boundMergeCost}, we have $\ds(C_j, C_l) \leq \frac{3}{2}\ds(G_j,G_l)$, which then leads to the first part of the lemma.

The second part of the lemma follows from the fact that $\sum_{A \subseteq G_i}\ds(A, G_i\setminus A) = \frac{2^{|G_i|}}{2} \ds(G_i,G_i)$.
\end{proof}

\end{document}